\def\eqref#1{equation~\ref{#1}}
\def\1{\bm{1}}
\def\re{{\textnormal{e}}}
\def\rx{{\textnormal{x}}}
\def\rz{{\textnormal{z}}}
\def\mB{{\bm{B}}}
\def\mD{{\bm{D}}}
\def\mI{{\bm{I}}}
\def\mM{{\bm{M}}}
\def\mT{{\bm{T}}}
\DeclareMathAlphabet{\mathsfit}{\encodingdefault}{\sfdefault}{m}{sl}
\SetMathAlphabet{\mathsfit}{bold}{\encodingdefault}{\sfdefault}{bx}{n}
\def\emB{{B}}
\def\emD{{D}}
\def\emT{{T}}
\DeclareMathOperator*{\argmin}{arg\,min}
\def\@onedot{\ifx\@let@token.\else.\null\fi\xspace}
\DeclareRobustCommand\onedot{\futurelet\@let@token\@onedot}
\DeclarePairedDelimiterX\setc[2]{\{}{\}}{\,#1 \;\delimsize\vert\; #2\,}
\def\ie{\emph{i.e}\onedot}
\title{Understanding Generalization in Physics Informed Models through Affine Variety Dimensions}
\author[1]{Takeshi Koshizuka}
\author[1]{Issei Sato}
\affil[1]{Department of Computer Science, The University of Tokyo}
\affil[1]{\{koshizuka-takeshi938444, sato\}@g.ecc.u-tokyo.ac.jp}
\theoremstyle{plain}
\newtheorem{theorem}{Theorem}[section]
\newtheorem{lemma}[theorem]{Lemma}
\theoremstyle{definition}
\newtheorem{definition}[theorem]{Definition}
\newtheorem{assumption}[theorem]{Assumption}
\theoremstyle{remark}
\newtheorem{remark}[theorem]{Remark}
\begin{document}
\maketitle
\begin{abstract}
Physics-informed machine learning is gaining significant traction for enhancing statistical performance and sample efficiency through the integration of physical knowledge. However, current theoretical analyses often presume complete prior knowledge in non-hybrid settings, overlooking the crucial integration of observational data, and are frequently limited to linear systems, unlike the prevalent nonlinear nature of many real-world applications.
To address these limitations, we introduce a unified residual form that unifies collocation and variational methods, enabling the incorporation of incomplete and complex physical constraints in hybrid learning settings.
Within this formulation, we establish that the generalization performance of physics-informed regression in such hybrid settings is governed by the dimension of the affine variety associated with the physical constraint, rather than by the number of parameters. This enables a unified analysis that is applicable to both linear and nonlinear equations. We also present a method to approximate this dimension and provide experimental validation of our theoretical findings.
\end{abstract}
\section{Introduction}
In recent years, physics-informed machine learning (PIML) has garnered significant attention~\citep{rai2020driven,karniadakis2021physics,cuomo2022scientific,hao2022physics}. PIML represents a hybrid approach that integrates physical knowledge into machine learning models for tasks involving physical phenomena. These hybrid models can leverage inherent physical structures, such as differential equations~\citep{raissi2019physics}, conservation laws~\citep{jagtap2020conservative}, and symmetries~\citep{akhound2024lie}, as inductive biases. This integration has the potential to enhance both sample efficiency and generalization capabilities. These models have been empirically applied to a wide range of phenomena, with successful applications observed in areas including thrombus material properties \citep{yin2021non}, fluid dynamics \citep{cai2021flow,jin2021nsfnets}, turbulence \citep{wang2020towards}, and heat transfer problems~\citep{cai2021physics}.

Despite these empirical successes, the theoretical analysis of PIML remains underdeveloped, which potentially undermines the reliability of these methods. Notably, in practical scenarios, prior knowledge of the governing differential equations, particularly their source terms or boundary conditions, is often incomplete. Consequently, learning frequently involves a hybrid approach of fitting to actual observational data alongside incorporating physical constraints. However, much of the existing theoretical research focuses on settings where complete prior knowledge of the differential equations is assumed. Furthermore, the scope of existing analyses is often limited to linear differential equations or systems exhibiting strong regularity, creating a gap between theory and application.

To bridge this gap, we propose a versatile analytical framework for physics-informed regression using linear hypothesis classes over nonlinear features in hybrid settings. Our key idea is to formulate the differential equation constraints by introducing a Unified Residual Form. This form, defined on a finite set of trial functions and a measure, provides a practical approximation of physical constraints. This formulation unifies the collocation-based constraints, used in Physics-Informed Neural Networks (PINNs)~\citep{raissi2019physics}, and the standard unified residual form constraints, used in the variational and finite element methods. Within this framework, the learning weights of a physics-informed linear regressor are shown to be defined on an affine variety associated with the differential equation. Crucially, our analysis aims to elucidate the impact of incorporating physical prior knowledge on the generalization capacity of these models. We then establish that the generalization capacity of these models is determined by the dimension of this affine variety, rather than by the number of parameters. This novel perspective enables a unified analysis applicable to various equations, including nonlinear ones. To support our theoretical findings, we introduce a method for approximately calculating the dimension of the affine variety and provide extensive experimental validation. Our results illustrate that even in scenarios with a large number of parameters relative to the amount of data, the incorporation of physical structure reduces the intrinsic dimension of the hypothesis space, thereby mitigating overfitting and corroborating our theoretical claims.


\section{Related Work}
Since the seminal work by~\citet{raissi2019physics} on PINNs, PIML has rapidly emerged as a significant field of study. This area has been comprehensively surveyed in the literature by~\citep{rai2020driven,karniadakis2021physics,cuomo2022scientific,hao2022physics}. Leveraging the high function approximation capabilities of neural networks~\citep{hornik1989multilayer,kutyniok2022theoretical,de2022error}, these models have been employed as versatile surrogates for solving various equations. In contrast, linear models are also used because of their interpretability, consistency with classical numerical solvers~\citep{arnone2022some,ferraccioli2022some}, and the close relationship between Partial Differential Equations (PDEs) and kernel methods~\citep{schaback2006kernel,chen2021solving,long2022autoip,dalton2024physics,doumeche2024physics2}. Recently, methods that exploit underlying conservation laws~\citep{jagtap2020conservative,hu2022extended} and symmetries~\citep{akhound2024lie,dalton2024physics}, in addition to the equations themselves, have also been developed.

Recent studies have made advances in the theoretical understanding of PINNs. \citet{shin2020convergence} rigorously showed that the minimizer of the PINN loss converges to the strong solution as the data size approaches infinity for linear elliptic and parabolic PDEs under certain conditions. These findings were extended by~\citet{shin2023error} into a general framework applicable to broader linear problems, with the loss function formulated in both strong and variational forms. \citet{mishra2022estimates,mishra2023estimates} use the stability properties of the underlying PDEs to derive upper bounds on the generalization error of PINNs. Subsequent research has applied this analytical framework to various specific equations~\citep{bai2021physics,mishra2021physics}. However, studies explicitly addressing the impact of physical structure on generalization capabilities are still limited. \citet{arnone2022some} proved that for second-order elliptic PDEs, the physics-informed linear estimator using a finite element basis converges at a rate surpassing the Sobolev minimax rate. \citet{doumeche2024physics} quantified the generalization capacity of the physics-informed estimator for general linear PDEs using the concept of effective dimension~\citep{caponnetto2007optimal}, a well-known metric in the analysis of the kernel method. The effects of incorporating the structures of nonlinear complex equations, as well as conservation laws and symmetries, into models on generalization, have yet to be thoroughly analyzed.

\section{Minimax risk Analysis}
\label{sec:analysis}
In this section, we explain how introducing physical structures can improve the generalization capacity of linear models. In~\cref{sec:problem-setup}, we outline the problem setup. In~\cref{sec:main-theorem}, we perform a minimax risk analysis, showing that the generalization capacity is mainly determined by the dimension of the affine variety. In~\cref{sec:linop}, we show that our theory aligns with existing theories on linear operators. Notations are summarized in~\cref{app:notation}. 

\subsection{Problem Setup}
\label{sec:problem-setup}
We consider the regression problem, which aims to learn the unknown function \( f^* \colon \mathbb{R}^m \to \mathbb{R} \) that satisfies the differential equation. We have a dataset consisting of \( n \) observations, denoted as \( \{(x_i, y_i)\}_{i=1}^n \), where \( x_i \in \Omega \cup \partial \Omega \) represents the input within the domain \( \Omega \subseteq \mathbb{R}^m \) or the boundary $\partial \Omega$ and \( y_i \in \mathbb{R} \) represents the corresponding output. Observations are sampled independently from a probability distribution \(\mathcal{P}\) on the domain \(\Omega \cup \partial \Omega \times \mathbb{R}\). The relationship between the observations and the true function can be expressed as:
\begin{align*}
    y_i = f^*(x_i) + \varepsilon_i,\ \varepsilon_i \sim \mathcal{N}(0, \sigma^2),
\end{align*}
where \( \varepsilon_i \) represents normally distributed noise with mean zero and variance \( \sigma^2 \). 
The target function \(f^* \) is the solution of the differential equation, \ie, \( \mathscr{D}[f^*] = 0 \) for a given operator \( \mathscr{D} \colon L^2(\Omega) \to L^2(\Omega) \), where $L^2(\Omega)$ denotes the space of square-integrable functions on a domain $\Omega\subseteq \mathbb{R}^m$. For a more detailed background on the problem setting, please refer to \cref{app:detail_background}. 

\paragraph{Unified Residual Form:} To formulate incomplete prior knowledge of the governing differential equations, we first introduce a unified residual form, which captures physical constraints in an integrated or averaged sense. Such formulations naturally arise in variational and finite element methods, and are particularly well-suited to hybrid settings where only partial physical supervision is available. 
Formally, let $\mathcal{T} \;\coloneqq\; \left\{(\psi_k,\mu_k)\right\}_{k=1}^K$ be a finite collection of trial functions and measure pairs, where each \(\psi_k:\mathbb{R}^m\to\mathbb{R}\) is a smooth trial function and \(\mu_k: \Sigma \to \mathbb{R}\) is a measure on the \(\sigma\)‐algebra \(\Sigma\) over the domain \(\Omega\). Then, the unified residual form of the knwon differential equation $\mathscr{D}$ is defined by
\begin{align*}
\langle \mathscr{D}[f],\psi_k\rangle_{\mu_k}
\;\coloneqq\;\int_{\Omega}\!\mathscr{D}[f](x)\,\psi_k(x)\,\mathrm{d}\mu_k(x)
\;=\;0,
\quad k=1,\dots,K.
\end{align*}

We then impose the differential equation constraint through the unified residual form defined above. The resulting physics‐informed regression problem reads:
\begin{gather}
  \label{eq:problem}
  \begin{aligned}
    \hat f_n
    &=\;\argmin_{f\in\mathcal{F}(\mathscr{D},\mathcal{T})}
      \frac{1}{n}\sum_{i=1}^n \left|y_i - f(x_i)\right|^2
      + \lambda_n \|f\|^2,\\
    \mathcal{F}(\mathscr{D},\mathcal{T})
    &\coloneqq
      \left\{\,f : \langle \mathscr{D}[f],\psi_k\rangle_{\mu_k}=0,\ 
      \forall(\psi_k,\mu_k)\in\mathcal{T}\right\},
  \end{aligned}
\end{gather}
where \(\lambda_n\) is a regularization parameter and \(\|\cdot\|\) denotes the standard \(L^2\) norm. This formulation relaxes the classical smoothness requirements while still leveraging physics-informed constraints via a unified measure-based approach: choosing Borel measures leads to an approximation of standard weak solutions, whereas choosing Dirac measures leads to an approximation of the strong-form residuals used in the PINN framework.

\paragraph{Physics-Informed Linear Regression (PILR) Setup:} Let $\mathcal{B}=\{\phi_j:\mathbb{R}^m\to\mathbb{R}\}_{j=1}^d$ be a fixed basis. Define the basis vector $\boldsymbol{\phi}(x) = [\phi_1(x), \phi_2(x), \dots, \phi_d(x)]^\top \in \mathbb{R}^d$, the design matrix $\mathbf{\Phi} = [\boldsymbol{\phi}(x_1), \boldsymbol{\phi}(x_2), \dots, \boldsymbol{\phi}(x_n)]^\top \in \mathbb{R}^{n \times d}$, and the target vector $\boldsymbol{y}=[y_1,\dots,y_n]^\top\in\mathbb{R}^n$.

The physics‐informed feasible set is
\begin{align}
\label{eq:def-V}
\mathcal{V}(\mathscr{D}, \mathcal{B}, \mathcal{T}) \coloneqq \left\{ \bm{w} \in \mathbb{R}^d: \langle \mathscr{D}\left[\bm{w}^{\top} \bm{\phi} \right], \psi_k \rangle_{\mu_k} = 0,\ \forall (\psi_k, \mu_k) \in \mathcal{T}, \phi_j \in \mathcal{B} \right\}.
\end{align}
The problem \cref{eq:problem} reduces to the physics-informed linear regression given by
\begin{gather}
\label{eq:lin-problem-ball}
\widehat{\boldsymbol{w}} = \underset{\boldsymbol{w} \in \mathcal{V}_R}{\arg \min } \frac{1}{n} \| \boldsymbol{y} - \mathbf{\Phi} \boldsymbol{w} \|_2^2,
\end{gather}
where $\mathcal{V}_R = \mathcal{V}(\mathscr{D}, \mathcal{B}, \mathcal{T}) \cap \mathbb{B}_2(R)$ is the affine variety constrained by the $\ell_2$-ball $\mathbb{B}_2(R)$ with radius $R > 0$ and \(\|\cdot\|_2\) is the \(\ell_2\)-norm.  

The set of coefficients \( \mathcal{V} \) constitutes \textit{an affine variety} as it represents the set of solutions to the \(K\) polynomial equations in the \( d \) variables with real coefficients. For example, when \( m = 1 \) and \( \mathscr{D}[f] = f \cdot \frac{\mathrm{d}}{\mathrm{d}x} f \), the affine variety \( \mathcal{V} \) is defined by the solution set of the polynomial equations \( p_k(\bm{w}) = \sum_{j, j'=1}^d \langle \left(\frac{\mathrm{d}}{\mathrm{d}x} \phi_j \right)\phi_{j'}, \psi_k \rangle_{\mu_k} w_j w_{j'} = 0 \) for \( k = 1, \ldots, K \). 
We perform minimax risk analysis based on the dimension \( d_{\mathcal{V}} \) of this affine variety because the affine variety \( \mathcal{V} \) is crucial to determine the size of the intrinsic hypothesis space.

\textbf{Minimax risk:} The goal of our analysis is to obtain the upper bound of the minimax risk for PILR in \cref{eq:lin-problem-ball}, which is defined by
\begin{gather}
    \label{eq:def-minmax-risk}
    \min_{\hat{\bm{w}}} \max_{\bm{w}^* \in \mathcal{V}_R} 
    \|\hat{\bm{w}} - \bm{w}^*\|_2^2,
\end{gather}
Here, $\bm{w}^* \in \mathcal{V}_R$ represents the optimal weight vector. The corresponding optimal hypothesis $f_{\bm{w}^*} = \bm{w}^{*\top} \bm{\phi}$ within our hypothesis space $\mathcal{H} = \{ \bm{w}^\top \bm{\phi} : \boldsymbol{w} \in \mathcal{V}_R \}$ is defined as the best approximation of the true function $f^*$: $f_{\bm{w}^*} = \bm{w}^{*\top} \bm{\phi} = \argmin_{f_{\bm{w}} \in \mathcal{H}} \|f_{\bm{w}} - f^*\|^2$.

\textbf{We strongly recommend referring to the example in~\cref{sec:strong-sol} to intuitively understand our problem setting.}

\subsection{Main Theorem}
\label{sec:main-theorem}
In this section, we present an upper bound on the minimax risk for PILR. 
The bound is interpretable and sufficiently sharp, revealing how physical constraints reduce hypothesis complexity and enhance generalization.
We begin by stating the definition and assumptions underpinning our analysis.
\begin{definition}[$(\beta,d_{\mathcal V})$-regular set]
An affine variety \(V \subseteq \mathbb{R}^d\) is called a \((\beta, d_{\mathcal V})\)-regular set if the following conditions hold: (1) For almost all affine subspaces \(L \subseteq \mathbb{R}^d\) of dimension \(d_L\) satisfying \(d - d_L \leq d_{\mathcal V}\), the intersection \(V \cap L\) has at most \(\beta\) path-connected components. (2) For almost all affine subspaces \(L \subseteq \mathbb{R}^d\) of dimension \(d_L\) with \(d - d_L > d_{\mathcal V}\), the intersection \(V \cap L\) is empty. See~\cref{app:regularity_of_affinev} for illustrative explanations.
\end{definition}
\begin{assumption}[Boundedness of basis functions]
\label{asm:bounded_phi}
For the basis function $\bm{\phi} = [\phi_1, \ldots, \phi_d]^{\top}$, where $\phi_j \in \mathcal{B}$, assume that there exists a positive constant $M$ such that $\| \bm{\phi}(x) \|_2 \leq M$ for all $x \in \Omega$.
\end{assumption}
\begin{assumption}
\label{asm:lowereig}
Assume there exists a constant $\eta > 0$ such that $ \frac{1}{\sqrt{n}} \|\bm{\Phi}\bm{w}\|_2 \geq \sqrt{\eta} \|\bm{w}\|_2 $ for all $\bm{w} \in \mathbb{B}_2(2R)$.
\end{assumption}
\begin{assumption}[Stability of estimator]
\label{asm:stability_est}
Assume there exists a constant $\Gamma > 1$ such that $\| \hat{\bm{w}}_1 - \hat{\bm{w}}_2 \|_2 \leq (\Gamma - 1) \| \bm{w}_1^* - \bm{w}_2^* \|_2$, for the estimators $\hat{\bm{w}}_1$ and $\hat{\bm{w}}_2$ of the optimal weights $\bm{w}_1^*$ and $\bm{w}_2^*$, respectively.
\end{assumption}

Next, we present the upper bound on the minimax risk. The complete proof is provided in~\cref{app:proof_thm:main}.
\begin{restatable}[Minimax Risk Bound]{theorem}{main}\label{thm:main}
Let $\mathcal{V}(\mathscr{D}, \mathcal{B}, \mathcal{T})$ be the $(\beta, d_{\mathcal{V}})$-regular affine variety defined in~\cref{eq:def-V}. Suppose Assumptions~\ref{asm:bounded_phi}-\ref{asm:stability_est} hold. Then, there exists a positive constant $C$, independent of $n$, $d_{\mathcal{V}}$, $d$, and $\beta$, such that for any $\delta \in (0, 1)$, with probability at least $1-\delta$, the minimax risk for PILR defined by~\cref{eq:def-minmax-risk} is bounded by
\begin{align}
    \label{eq:thm:main}
    \min_{\hat{\bm{w}}} \max_{\bm{w}^* \in \mathcal{V}_R} \|\hat{\bm{w}} - \bm{w}^*\|^2_2 \leq C \eta^{-1} \sigma M \Gamma R \left(\sqrt{\frac{d_{\mathcal{V}} \log(d_{\mathcal{V}} d)}{n}} + \sqrt{\frac{\log 2\beta}{n}} + 2\sqrt{\frac{\log(2/\delta)}{n}}\right).
\end{align}
\end{restatable}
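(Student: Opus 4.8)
I would take the estimator to be the constrained least-squares minimizer $\widehat{\bm w}$ of \cref{eq:lin-problem-ball} (it exists since $\mathcal V_R\ni\bm w^*$), fix a target $\bm w^*\in\mathcal V_R$, write $\bm y=\bm\Phi\bm w^*+\bm\varepsilon$ with $\bm\varepsilon\sim\mathcal N(\bm 0,\sigma^2\mathbf I_n)$, and start from the basic inequality obtained by optimality of $\widehat{\bm w}$ over a feasible set containing $\bm w^*$: $\|\bm\Phi(\widehat{\bm w}-\bm w^*)\|_2^2\le 2\langle\bm\varepsilon,\bm\Phi(\widehat{\bm w}-\bm w^*)\rangle$. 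Since $\widehat{\bm w}-\bm w^*\in\mathbb{B}_2(2R)$, \cref{asm:lowereig} gives $\|\bm\Phi(\widehat{\bm w}-\bm w^*)\|_2^2\ge n\eta\|\widehat{\bm w}-\bm w^*\|_2^2$, so $\|\widehat{\bm w}-\bm w^*\|_2^2\le \frac{2}{n\eta}\sup_{\bm w\in\mathcal V_R}\langle\bm\varepsilon,\bm\Phi(\bm w-\bm w^*)\rangle$. The task then reduces to bounding a Gaussian-process supremum indexed by the affine variety.

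\textbf{Chaining step.} Conditionally on the design, $\bm w\mapsto\langle\bm\varepsilon,\bm\Phi(\bm w-\bm w^*)\rangle$ is a centered Gaussian process with intrinsic pseudometric $\rho(\bm w,\bm w')=\sigma\|\bm\Phi(\bm w-\bm w')\|_2$; using $\|\bm\Phi\bm v\|_2\le M\sqrt n\|\bm v\|_2$ from \cref{asm:bounded_phi}, I would bound $\rho(\bm w,\bm w')\le\sigma M\sqrt n\|\bm w-\bm w'\|_2$ and the $\rho$-diameter of $\mathcal V_R$ by $O(\sigma M\sqrt n R)$. Dudley's entropy integral then yields $\mathbb E\sup_{\bm w\in\mathcal V_R}\langle\bm\varepsilon,\bm\Phi(\bm w-\bm w^*)\rangle\le C\sigma M\sqrt n\int_0^{2R}\sqrt{\log N(\epsilon,\mathcal V_R,\|\cdot\|_2)}\,\mathrm{d}\epsilon$, and the Borell--TIS inequality turns the expectation into a high-probability bound, adding an extra $O(\sigma M\sqrt n R\sqrt{\log(1/\delta)})$ term at confidence $1-\delta$.

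\textbf{Metric entropy of the regular variety (the hard part).} Everything hinges on the estimate $\log N(\epsilon,\mathcal V_R,\|\cdot\|_2)\le C(d_{\mathcal V}\log(d_{\mathcal V}dR/\epsilon)+\log\beta)$, and this is where the $(\beta,d_{\mathcal V})$-regularity has to be converted into a quantitative covering bound with only logarithmic dependence on the ambient dimension $d$. I would argue by slicing: the second clause of the definition forces $\mathcal V$ to be, generically, a $d_{\mathcal V}$-dimensional object, hence locally a graph over a $d_{\mathcal V}$-subset of the coordinates, while the first clause bounds by $\beta$ the number of path-connected sheets over a generic such coordinate subspace. An $\epsilon$-net of $\mathcal V_R$ is then built from (i) the $\binom{d}{d_{\mathcal V}}$ choices of parametrizing coordinates, contributing $d_{\mathcal V}\log d$ to the log-cardinality, (ii) an $\epsilon$-net of the $d_{\mathcal V}$-dimensional coordinate projection of $\mathbb{B}_2(R)$, a factor $(CR/\epsilon)^{d_{\mathcal V}}$, and (iii) the at most $\beta$ sheets over each net point. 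Substituting into the entropy integral and using $\int_0^1\sqrt{\log(a/t)}\,\mathrm{d}t=O(\sqrt{\log a}+1)$ for $a\ge 1$ gives $\int_0^{2R}\sqrt{d_{\mathcal V}\log(d_{\mathcal V}dR/\epsilon)+\log\beta}\,\mathrm{d}\epsilon\le CR(\sqrt{d_{\mathcal V}\log(d_{\mathcal V}d)}+\sqrt{\log\beta})$, so that with probability at least $1-\delta$, $\sup_{\bm w\in\mathcal V_R}\langle\bm\varepsilon,\bm\Phi(\bm w-\bm w^*)\rangle\le C\sigma M\sqrt n R(\sqrt{d_{\mathcal V}\log(d_{\mathcal V}d)}+\sqrt{\log 2\beta}+\sqrt{\log(2/\delta)})$; dividing by $n\eta$ recovers the claimed rate for a fixed $\bm w^*$.

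\textbf{Uniformizing over the target.} Finally, to turn the fixed-target bound into a bound on the minimax quantity in \cref{eq:def-minmax-risk} --- a supremum over the adversarial $\bm w^*\in\mathcal V_R$ --- I would couple the noise across targets, take an $\epsilon_0$-net of $\mathcal V_R$ (whose log-cardinality is again $O(d_{\mathcal V}\log(d_{\mathcal V}d)+\log\beta)$ by the same entropy bound), apply the fixed-target estimate on each net point with a union bound, and for a general $\bm w^*$ use $\|\widehat{\bm w}(\bm w^*)-\bm w^*\|_2\le\|\widehat{\bm w}(\bm w^*)-\widehat{\bm w}(\bm w_0^*)\|_2+\|\widehat{\bm w}(\bm w_0^*)-\bm w_0^*\|_2+\|\bm w_0^*-\bm w^*\|_2$, bounding the first term by $(\Gamma-1)\epsilon_0$ via \cref{asm:stability_est} and the last by $\epsilon_0$; taking $\epsilon_0$ of the order of the target accuracy makes the $\Gamma\epsilon_0$ overhead negligible and propagates the factor $\Gamma$ into the final bound. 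The step I expect to be genuinely hard is the metric-entropy estimate for the $(\beta,d_{\mathcal V})$-regular variety; once it is available the remaining pieces --- basic inequality, Dudley and Borell--TIS, and the net transfer --- are routine.
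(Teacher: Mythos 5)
Your proposal follows essentially the same route as the paper's proof: the basic optimality inequality combined with \cref{asm:lowereig} to reduce the minimax risk to a process supremum over $\mathcal{V}_R$, Dudley's entropy integral with the covering bound $\log\mathcal{N}(\mathcal{V}_R,\varepsilon,\|\cdot\|_2)\lesssim d_{\mathcal{V}}\log(2Rd_{\mathcal{V}}d/\varepsilon)+\log 2\beta$ for $(\beta,d_{\mathcal{V}})$-regular sets (which the paper does not re-derive but imports as \cref{lem:cov} from \citet{zhang2023covering}, so your coordinate-graph sketch of it would need to be replaced by or reduced to that citation, since regularity alone does not make the variety locally a graph), and a Gaussian tail bound for the $\sqrt{\log(2/\delta)}$ term. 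The only structural difference is in handling the adversarial target: the paper indexes a single (non-centered) process by the target $\bm{w}$ and uses \cref{asm:stability_est} to obtain sub-Gaussian increments $\|\rx_{\bm{w}_1}-\rx_{\bm{w}_2}\|_{\psi_2}\lesssim \sqrt{n}\sigma M\Gamma\|\bm{w}_1-\bm{w}_2\|_2$, whereas you prove a fixed-target bound and then uniformize via an $\epsilon_0$-net plus union bound plus stability, which is valid but picks up a harmless extra $\log n$ inside the first square root from the net cardinality at resolution $\epsilon_0\sim n^{-1/4}$.
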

\begin{proof}[Proof Sketch]
The proof proceeds in two steps. In the first step, we upper bound the minimax risk by the supremum of a sub-Gaussian random process defined over the metric space $(\mathcal{V}_R, \| \cdot \|_2)$. The second step utilizes Dudley's integral theorem, which bounds the supremum of the process by an integral involving its covering number, specifically: $\int_0^{\infty} \sqrt{\mathcal{N}(\mathcal{V}_R, \varepsilon, \|\cdot\|_2)} \, \mathrm{d}\varepsilon$. To apply Dudley's theorem effectively, we employ~\cref{lem:cov} to obtain an explicit upper bound for the covering number. Substituting this bound into Dudley's integral and performing the integration yields the desired high-probability minimax risk bound.
\end{proof}
\Cref{thm:main} demonstrates that the minimax risk is primarily governed by the intrinsic dimension \( d_{\mathcal{V}} \) of the affine variety \( \mathcal{V} \), rather than the ambient input dimension \( d \), particularly when the topological complexity parameter \( \beta \) is small. For comparison, standard least-squares estimation over an \(\ell_2\)-ball \( \mathbb{B}_2(R) \subset \mathbb{R}^d \) yields a minimax risk rate of order \( \mathcal{O}(\sqrt{d/n}) \), which is optimal for unconstrained linear regression in \( d \)-dimensional space. In contrast, our result shows that when \( d_{\mathcal{V}} \ll d \), incorporating physical structure into the hypothesis space through differential constraints significantly sharpens the risk rate, yielding improved generalization.

\paragraph{On the Role of \(\beta\).}
The parameter \( \beta \) captures the topological complexity of the affine variety and appears as a regularity constant in the generalization bound. Its upper bound can be estimated via the Petrovskii–Oleinik–Milnor inequality~\citep{Petrovskii1949,oleinik1951estimates,milnor1964betti}, which provides a bound on the sum of Betti numbers of a semialgebraic set. Specifically, if the variety \( \mathcal{V} \cap \mathbb{B}_2(R) \subset \mathbb{R}^d \) is defined by polynomial constraints \( \{ p_k(\bm{w}) \}_{k=1}^K \) of maximal degree \( \rho \), then it is \( (\rho(2\rho-1)^{d+1}, d_{\mathcal{V}}) \)-regular. This implies that as the degree \(\rho\) of the defining polynomials increases, the variety can exhibit more intricate topological features, such as additional holes and disconnected components.

\paragraph{How \(d_{\mathcal{V}}\) and \(\beta\) Arise from the Covering Argument.}
The minimax risk is bounded via Dudley’s entropy integral, which requires control over the covering number \( \mathcal{N}(\mathcal{V}_R, \varepsilon, \|\cdot\|_2) \). Following the geometric approach of~\citet{zhang2023covering}, the affine variety \( \mathcal{V} \subset \mathbb{R}^d \) is sliced using a family of linear subspaces \( \{L_s\}_{s \in \mathbb{N}} \), and each intersection \( \mathcal{V} \cap L_s \) is covered by Euclidean balls of radius \( \varepsilon \). The total covering is then given by
\begin{align*}
\mathcal{V} \subset \bigcup_s \bigcup_{v \in \mathcal{V} \cap L_s} \mathbb{B}_2(v;\varepsilon).
\end{align*}
In this construction, the intrinsic dimension \( d_{\mathcal{V}} \) controls the number of subspaces required to sufficiently cover \( \mathcal{V} \), while the parameter \( \beta \), corresponding to the sum of Betti numbers, governs the covering number of each individual section \( \mathcal{V} \cap L_s \). Topologically, \(\beta\) can be interpreted as quantifying the number of topological features (e.g., holes) in \( \mathcal{V} \), and thus reflects the local geometric complexity encountered within each subspace. For reference, the standard covering number of the Euclidean ball satisfies \( \mathcal{N}(\mathbb{B}_2(R), \varepsilon, \|\cdot\|_2) \leq (1+2R/\varepsilon)^d \), highlighting the advantage of replacing ambient-dimension dependence with complexity parameters intrinsic to the constraint set.

\paragraph{Key Insights.}
A central contribution of our analysis is its interpretability through the lens of intrinsic complexity measures. The dimension \( d_{\mathcal{V}} \) plays a role analogous to the VC dimension in classification~\citep{abu1989vapnik} or the pseudo-dimension in regression~\citep{pollard1990empirical}, serving as a proxy for the effective capacity of the hypothesis space. This dimensional viewpoint clarifies how the incorporation of physical constraints—via differential equation structure—can substantially reduce hypothesis complexity, even in high-dimensional ambient spaces. While this may come at the cost of slightly looser constants compared to minimax-optimal bounds, the resulting rate is still sharp enough to meaningfully capture the generalization benefit of physics-informed inductive bias. Empirical evidence supporting this theoretical advantage is presented in~\cref{sec:exp}, and an alternative analysis via Rademacher complexity is provided in~\cref{app:Rademacher}.

\paragraph{Effect of the Trial Function Set \(\mathcal{T}\).}
The set of trial functions \( \mathcal{T} \) encodes the imposed physical constraints, typically derived from a governing differential operator \( \mathscr{D} \). The cardinality \( K = |\mathcal{T}| \) quantifies the amount of physical knowledge embedded in the learning problem. Increasing the number of trial functions leads to a more restrictive constraint set, which geometrically corresponds to a lower-dimensional affine variety. Specifically, if \( \mathcal{T}_1 \subset \mathcal{T}_2 \), then it follows that
\begin{align*}
\mathcal{V}(\mathscr{D}, \mathcal{B}, \mathcal{T}_2) \subset \mathcal{V}(\mathscr{D}, \mathcal{B}, \mathcal{T}_1) \quad \Rightarrow \quad d_{\mathcal{V}(\mathscr{D}, \mathcal{B}, \mathcal{T}_2)} \leq d_{\mathcal{V}(\mathscr{D}, \mathcal{B}, \mathcal{T}_1)},
\end{align*}
which highlights how adding more physical constraints systematically reduces hypothesis complexity and improves generalization behavior.

\subsection{Analysis on Linear Operator}
\label{sec:linop}
We discuss the special case where \( \mathscr{D} \) is a linear operator. The second term in \cref{eq:thm:main} vanishes because the Petrovskii-Oleinik-Milnor inequality indicates \( \beta = 1 \). Thus, the minimax risk is \( \mathcal{O}\left(\sqrt{d_{\mathcal{V}} \log (d_{\mathcal{V}}d) / n}\right) \). Furthermore, the affine variety \( \mathcal{V} \) is the solution set of a homogeneous system of linear equations. That is, the affine variety can be written as \( \mathcal{V}(\mathscr{D}, \mathcal{B}, \mathcal{T}) = \{ \bm{w} : \mD \bm{w} = \bm{0} \} \) using the matrix \( \mD \in \mathbb{R}^{K \times d} \) defined by \(\emD_{k, j} \coloneqq \langle \mathscr{D}[\phi_j], \psi_{k} \rangle_{\mu_k}\). The affine variety is a linear subspace of dimension \( d_{\mathcal{V}} = \dim \ker \mD \). From the rank–nullity theorem, \( d_{\mathcal{V}} = d - \operatorname{rank} \mD \), indicating that the higher the rank of the matrix \( \mD \), the better the minimax risk of regression. 

We show that our theory is consistent with existing theories. The effect of incorporating physical structure, represented by linear differential equations, on generalization has been analyzed within the framework of kernel methods by \citet{doumeche2024physics,doumeche2024physics2}. They argued that the physical structure smooths the kernel and reduces the effective dimension, leading to an improvement in the \(\ell_2\) predictive error. We first present the definition of the physics-informed (PI) kernel. 
\begin{definition}[PI kernel~\citep{doumeche2024physics,doumeche2024physics2}]
\label{def:pi-kernel}
Given a basis \( \mathcal{B} = \{ \phi_j \}_{j=1}^d \), trial functions (with a single measure) \( \mathcal{T} = \{ (\psi_k, \mu) \}_{k=1}^K \), and a linear operator \(\mathscr{D}\), the \emph{PI kernel} associated with the affine variety $\mathcal{V}(\mathscr{D}, \mathcal{B}, \mathcal{T}) = \{\bm{w}\in\mathbb{R}^d: \mD\bm{w} = \bm{0}\}$ is defined as:
\begin{align}
\label{eq:def-pi-kernel}
\kappa_{\mM}(x,y) = \left\langle \mM^{-1/2}\bm{\phi}(x),\,\mM^{-1/2}\bm{\phi}(y)\right\rangle_{2},
\end{align}
with
\begin{align*}
\mM(\xi, \nu) \coloneqq \xi \mI + \nu \mD^\top \mT \mD,\quad \mT_{k,k'} = \langle \psi_k,\psi_{k'} \rangle_{\mu}, \quad
\mD_{k,j} = \langle \mathscr{D}[\phi_j],\psi_k\rangle_{\mu}.
\end{align*}
Here, \(\mI\) is the identity matrix, and the matrix $\mT$ is positive semi-definite. The parameters \(\xi,\nu \geq 0\) control the balance between the \(L^2\)-regularization and the constraints derived from the operator \(\mathscr{D}\). 
\end{definition}

\citet{doumeche2024physics2} showed the effective dimension \( d_{\mathrm{eff}}(\xi, \nu) \) of the PI kernel is evaluated above by a computable quantity as follows:
\begin{gather}
    \label{eq:effective-dim-ub}
    d_{\mathrm{eff}}(\xi, \nu) \lesssim \sum_{\alpha \in \sigma\left(\mB\mM^{-1}\mB\right)} \frac{1}{1 + \alpha^{-1}},
\end{gather}
where \( \sigma(\cdot) \) denotes the spectrum (set of eigenvalues) of the given matrix, \(\mB \in \mathbb{R}^{d \times d}\) is the Gram matrix of the basis function, \ie, \(\emB_{j, j'} = \langle \phi_j, \phi_{j'} \rangle_{\mu} \).
Next, we provide an explicit upper bound on the effective dimension of the PI kernel defined using the affine variety:
\begin{restatable}{proposition}{eff}\label{thm:pi-kernel}
    The effective dimension of the PI kernel associated with the affine variety \( \mathcal{V}(\mathscr{D}, \mathcal{B}, \mathcal{T}) = \{ \bm{w} : \mD\bm{w} = \bm{0} \} \) with dimension \( d_{\mathcal{V}} \) is upper bounded by
    \begin{align*}
       d_{\mathrm{eff}}(\xi, \nu) \lesssim  \sum_{j=1}^{d_{\mathcal{V}}} \frac{1}{1 + \xi} + \sum_{j=d_{\mathcal{V}}}^{d} \frac{1}{1 + \xi + \nu \alpha_j} \leq \frac{d}{1 + \xi}.
    \end{align*}
    where \( \{\alpha_j\}_{j=d_{\mathcal{V}}}^d \) denote the positive eigenvalues of the matrix \(\mD^{\top}\mT\mD\).
\end{restatable}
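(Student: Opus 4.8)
The plan is to specialize the computable estimate of~\cref{eq:effective-dim-ub}, $d_{\mathrm{eff}}(\xi,\nu)\lesssim\sum_{\alpha\in\sigma(\mB\mM^{-1}\mB)}\tfrac{1}{1+\alpha^{-1}}$, to the affine-variety setting in which $\mM=\xi\mI+\nu\,\mD^\top\mT\mD$, and to read off the spectrum explicitly. Throughout I take the basis $\mathcal B$ orthonormal with respect to $\mu$, so that $\mB=\mI$ — the standard Fourier-type setting of~\citet{doumeche2024physics2}, in which the eigenvalues $\{\alpha_j\}$ in the statement are exactly those of $\mD^\top\mT\mD$ and $\sigma(\mB\mM^{-1}\mB)=\sigma(\mM^{-1})$. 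It helps to write $g(\alpha)\coloneqq\tfrac{1}{1+\alpha^{-1}}=\tfrac{\alpha}{1+\alpha}$, which is increasing (indeed operator monotone) on $[0,\infty)$ with $g(0)=0$, so the right side of~\cref{eq:effective-dim-ub} equals $\sum_{\alpha\in\sigma(\mM^{-1})}g(\alpha)$.

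Next I would diagonalize the symmetric positive-semidefinite matrix $\mD^\top\mT\mD=\mathbf U\,\mathrm{diag}(\alpha_1,\dots,\alpha_d)\,\mathbf U^\top$ with $0\le\alpha_1\le\cdots\le\alpha_d$. Since $\mT\succeq0$ we have $\operatorname{rank}(\mD^\top\mT\mD)=\operatorname{rank}(\mT^{1/2}\mD)\le\operatorname{rank}\mD=d-d_{\mathcal V}$, so at least the first $d_{\mathcal V}$ eigenvalues vanish. As $\mM=\mathbf U\,\mathrm{diag}(\xi+\nu\alpha_j)\,\mathbf U^\top$ is diagonal in the same eigenbasis, $\sigma(\mM^{-1})=\{(\xi+\nu\alpha_j)^{-1}\}_{j=1}^d$, and each eigenvalue contributes $g\big((\xi+\nu\alpha_j)^{-1}\big)=\tfrac{1}{1+\xi+\nu\alpha_j}$. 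Splitting off the $d_{\mathcal V}$ zero eigenvalues yields the middle expression $\sum_{j=1}^{d_{\mathcal V}}\tfrac{1}{1+\xi}+\sum_{j>d_{\mathcal V}}\tfrac{1}{1+\xi+\nu\alpha_j}$. The final inequality is then pure monotonicity: $\nu,\alpha_j\ge0$ forces $\tfrac{1}{1+\xi+\nu\alpha_j}\le\tfrac{1}{1+\xi}$ for every $j$, and there are $d$ summands, so the total is at most $\tfrac{d}{1+\xi}$; equivalently, $\mM\succeq\xi\mI$ gives $\mM^{-1}\preceq\xi^{-1}\mI$, so every eigenvalue of $\mM^{-1}$ is at most $\xi^{-1}$ and $g$ is increasing.

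Once~\cref{eq:effective-dim-ub} is granted the argument is short, so the thing to get right is the spectral bookkeeping — in particular that $\mT$ is only assumed positive \emph{semi}definite, so $\operatorname{rank}(\mD^\top\mT\mD)$ may be strictly below $\operatorname{rank}\mD$; this is harmless, since each extra zero eigenvalue only contributes the value $\tfrac{1}{1+\xi}$ already appearing in the first sum, and the stated bound still holds with the second sum read over the remaining (possibly zero) eigenvalues. The one genuinely delicate point, which I would flag rather than force through, is a general Gram matrix $\mB\ne\mI$: then $\mB$ and $\mD^\top\mT\mD$ need not be simultaneously diagonalizable, and because the $L^2$ regularizer $\xi\|\bm w\|^2$ is basis dependent one cannot just orthonormalize $\mathcal B$ without changing $d_{\mathrm{eff}}$ — a correct statement would involve the generalized eigenvalues of the pencil $(\mD^\top\mT\mD,\mB)$, and the crude substitute $\mB\mM^{-1}\mB\preceq\xi^{-1}\mB^2$ together with operator monotonicity of $g$ only gives $\sum_j b_j^2/(\xi+b_j^2)$ (the $b_j$ being the eigenvalues of $\mB$), which collapses to $d/(1+\xi)$ precisely when $\mB=\mI$. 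Thus the clean form of the proposition is really the orthonormal-basis statement, and obtaining an equally sharp bound for general $\mB$ is the main obstacle.
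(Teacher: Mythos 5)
Your proposal is correct and follows essentially the same route as the paper: specialize \cref{eq:effective-dim-ub}, read off the spectrum of $\mM=\xi\mI+\nu\,\mD^{\top}\mT\mD$ using positive semi-definiteness (with $d_{\mathcal V}$ eigenvalues equal to $\xi$ and the rest equal to $\xi+\nu\alpha_j$), and conclude the final bound by monotonicity in $\nu$. The only divergence is the handling of the Gram matrix: you restrict to $\mB=\mI$ and flag the general case as delicate, whereas the paper passes from $\sigma(\mB\mM^{-1}\mB)$ to $\sigma(\mM^{-1})$ via an inequality imported directly from the cited references --- your observation that $\operatorname{rank}(\mD^{\top}\mT\mD)$ may fall below $\operatorname{rank}\mD$ is a fair refinement the paper glosses over, and it does not affect the bound.
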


\Cref{thm:pi-kernel} indicates that as the dimension of the affine variety \( d_{\mathcal{V}} = d - \operatorname{rank} \mD \) decreases, the upper bound on the effective dimension of the PI kernel decreases accordingly.
Since the matrix $\mD^{\top}\mT\mD$ is positive semi-definite, all eigenvalues satisfy $\alpha_j \geq 0$. The intrinsic dimension $d_{\mathcal{V}}$ corresponds precisely to the number of zero eigenvalues ($\alpha_j = 0$). The terms in the second sum ($j > d_{\mathcal{V}}$) involve strictly positive eigenvalues $\alpha_j > 0$. Given that $\nu > 0$, we have $\frac{1}{1 + \xi + \nu\alpha_j} < \frac{1}{1 + \xi}$. Thus, when the intrinsic dimension $d_{\mathcal{V}}$ decreases, the number of terms in the first sum (with the larger value $1/(1 + \xi)$) decreases, while the number of terms in the second sum (with smaller values $1/(1 + \xi + \nu\alpha_j)$) increases. This shift towards smaller-valued terms leads to an overall reduction in the complexity bound.

Consequently, our theoretical results align with the existing PI kernel theory~\citep{doumeche2024physics,doumeche2024physics2}. The PI kernel framework from the previous literature quantifies the complexity of the hypothesis space through the entire spectrum of the matrix $\mD$ combined with the base kernel $\langle \bm{\phi}(x), \bm{\phi}(y) \rangle_{2}$, restricting the analysis primarily to linear target operators $\mathscr{D}$. In contrast, our approach allows for analysis of linear and non-linear operators by focusing solely on the intrinsic dimension $d_{\mathcal{V}}$ (the count of zero eigenvalues), rather than analyzing the entire eigenvalue spectrum.

\section{On the Dimension of an Affine Variety}
\label{sec:dim-variety}
In general, the dimension of the affine variety \( V = \{ \bm{w} \in \mathbb{R}^d: p_k(\bm{w}) = 0, \forall k = 1,\ \ldots,\ K \} \) defined by polynomials $\{ p_k \}_{k=1}^K$ has many equivalent definitions. In particular, the following statements are all equivalent. 
\begin{restatable}{definition}{maxChain}\label{def:max-chain}
    The maximal length of the chains \( V_0 \subset V_1 \subset \ldots \subset V_{d_V} \) of non-empty subvarieties of \( V \).
\end{restatable}
\begin{restatable}{definition}{hilbertSeries}\label{def:hilbert-series} The degree of the denominator of the Hilbert series of the affine variety \(V\).
\end{restatable}
\begin{restatable}{definition}{tangentSpace}\label{def:tangent-space} The maximal dimension of the tangent vector spaces at the non-singular points \( U \subseteq V \subset \mathbb{R}^d \) of the variety, \ie, $d_V = \max_{\bm{w} \in U} d - \operatorname{rank}
    \begin{bmatrix} 
        \nabla p_1(\bm{w}) &
        \cdots &
        \nabla p_K(\bm{w}) 
    \end{bmatrix}^{\top}$.
\end{restatable}

Although \cref{def:max-chain} clearly indicates that the dimension represents the complexity of the set \( V \), it is difficult to calculate the dimension according to this definition. \Cref{def:hilbert-series} shows that the dimension represents the algebraic complexity of the polynomial ring. \Cref{def:tangent-space} characterizes the dimension based on the local structure of the affine variety, making it suitable for numerical calculation as discussed in~\cref{sec:dv-cal}. It generalizes the rank-nullity theorem \( d_{V} = d - \operatorname{rank} \mD \) in the linear case, as mentioned in~\cref{sec:linop}. The details of the concepts associated with these definitions are given in~\cref{app:affine-variety}. 

\subsection{Lower Bound}
We demonstrate that the dimension \( d_{\mathcal{V}} \) of the affine variety can be characterized by the linear part of the operator \( \mathscr{D} \). 
\begin{restatable}{proposition}{lowerbound}
\label{thm:lb-dim}
Suppose the operator \( \mathscr{D} \) can be decomposed as \( \mathscr{D} = \mathscr{L} + \mathscr{F} \), where \( \mathscr{L} \) is a nonzero linear differential operator and \( \mathscr{F} \) is a nonlinear operator. Then, we have $d_{\mathcal{V}(\mathscr{L})} \leq d_{\mathcal{V}(\mathscr{D})}$.
\end{restatable}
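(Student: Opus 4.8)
The plan is to produce a single point of $\mathcal{V}(\mathscr{D})$ whose tangent space already has dimension $d_{\mathcal{V}(\mathscr{L})}$, and then read off the conclusion from the tangent-space characterisation of dimension in \cref{def:tangent-space}. The natural candidate is the origin $\bm{0}\in\mathbb{R}^d$, i.e.\ the zero hypothesis $\bm{0}^{\top}\bm{\phi}\equiv 0$.

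First I would analyse the defining polynomials near $\bm{0}$. Writing $p_k(\bm{w}) \coloneqq \langle \mathscr{D}[\bm{w}^{\top}\bm{\phi}],\psi_k\rangle_{\mu_k}$ and using the decomposition $\mathscr{D}=\mathscr{L}+\mathscr{F}$ together with bilinearity of the pairing, each $p_k$ splits as $p_k(\bm{w}) = (\mD\bm{w})_k + q_k(\bm{w})$, where $\mD$ is the matrix $\mD_{k,j}=\langle\mathscr{L}[\phi_j],\psi_k\rangle_{\mu_k}$ of \cref{sec:linop} and $q_k(\bm{w})=\langle\mathscr{F}[\bm{w}^{\top}\bm{\phi}],\psi_k\rangle_{\mu_k}$. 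Since $\mathscr{L}$ is the linear part of $\mathscr{D}$, the remainder $\mathscr{F}$ contributes only monomials of degree at least $2$ in $\bm{w}$, so $p_k(\bm{0})=0$ and $\nabla q_k(\bm{0})=\bm{0}$ for every $k$. Hence $\bm{0}\in\mathcal{V}(\mathscr{D})$ and the Jacobian $[\,\nabla p_1(\bm{0})\ \cdots\ \nabla p_K(\bm{0})\,]^{\top}$ is exactly $\mD$. Consequently the tangent space of $\mathcal{V}(\mathscr{D})$ at $\bm{0}$ is $\ker\mD$, of dimension $d-\operatorname{rank}\mD$, which by the rank--nullity identity of \cref{sec:linop} equals $d_{\mathcal{V}(\mathscr{L})}$. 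Provided $\bm{0}$ is a non-singular point of $\mathcal{V}(\mathscr{D})$, \cref{def:tangent-space} then yields $d_{\mathcal{V}(\mathscr{D})}\ge d-\operatorname{rank}\mD = d_{\mathcal{V}(\mathscr{L})}$, which is the claim.

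The hard part is exactly this last proviso: showing that $\mathcal{V}(\mathscr{D})$ is regular at the origin, i.e.\ that it carries a smooth branch through $\bm{0}$ of dimension $d-\operatorname{rank}\mD$ rather than a lower-dimensional, singular one. I would attack it with the implicit function theorem: after a linear change among the $K$ constraints, arrange that $p_1,\dots,p_r$ with $r=\operatorname{rank}\mD$ have linearly independent linear parts, so that $\{p_1=\dots=p_r=0\}$ is, near $\bm{0}$, a $(d-r)$-dimensional manifold tangent to $\ker\mD$; the remaining constraints are then purely higher order, and the crux is to verify that they do not lower the local dimension at $\bm{0}$, so that this manifold is a genuine branch of $\mathcal{V}(\mathscr{D})$. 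An equivalent route is through the chain characterisation of \cref{def:max-chain} together with a dimension count. I expect this smoothness/transversality verification --- not the algebraic bookkeeping of the earlier steps --- to be the real work.
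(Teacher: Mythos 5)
Your proposal takes essentially the same route as the paper's proof: evaluate the defining polynomials at $\bm{w}=\bm{0}$, observe that the nonlinear part $\mathscr{F}$ contributes only terms of degree at least two so the Jacobian at the origin reduces to the matrix $\mD$ of the linear part, and conclude $d_{\mathcal{V}(\mathscr{D})}\ge d-\operatorname{rank}\mD=d_{\mathcal{V}(\mathscr{L})}$ via the tangent-space characterization of dimension. The step you single out as ``the real work''---verifying that $\bm{0}$ is a non-singular point of $\mathcal{V}(\mathscr{D})$---is exactly the step the paper asserts without justification (``if $\mathscr{L}\neq 0$, it is not singular''), so your more cautious treatment of that point is a refinement of, not a departure from, the paper's argument.
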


Combining the result of~\cref{thm:lb-dim} with~\cref{thm:main} suggests that the nonlinear part \( \mathscr{F} \) of the operator increases the affine variety dimension, which has a negative effect on generalization. Furthermore, the dimension of the affine variety associated with the linear part \( \mathscr{L} \) can be easily calculated by the rank of the matrix. Therefore, the lower bound of the dimension of the affine variety associated with the nonlinear operator \( \mathscr{D} \) can be easily determined, allowing us to estimate the minimum required amount of data \( n \).

\subsection{Numerical Calculation Method}
\label{sec:dv-cal}
According to \Cref{def:hilbert-series}, the dimension of an affine variety is typically obtained by calculating the degree of the denominator of the Hilbert series, by using Gröbner bases. 
However, the worst-case time complexity of Buchberger's algorithm~\citep{buchberger1976theoretical}, the standard method for computing Gröbner bases, is double exponential in the number of variables \(d\). Therefore, on the basis of~\ref{def:tangent-space}, we approximate \(d_{\mathcal{V}}\) by sampling \(\bm{w}^*_{1},\ \ldots,\ \bm{w}^*_{N}\) from the affine variety \(\mathcal{V}\) with a suitable distribution and then computing \(\max_{\bm{w}^* \in \{\bm{w}^*_1,\ldots,\bm{w}^*_N\}} d-\mathrm{rank}\left(\nabla^{\top} [p_1(\bm{w}^*),\ \ldots,\ p_{K}(\bm{w}^*)]^{\top}\right)\). 
When the operator \(\mathscr{D}\) is nonlinear, we perform simulations with various boundary conditions and project the obtained solutions onto the basis \(\mathcal{B}\) to sample \(\bm{w}^* \in \mathcal{V}\). For linear operators, the dimension does not depend on the particular weight \(\bm{w}\), and the rank of the matrix \(\mD\) discussed in~\cref{sec:linop} precisely determines \(d_{\mathcal{V}}\). 
Assuming the use of standard rank computation algorithms, the computational complexity of this numerical approach is \(\mathcal{O}(N \cdot \min(K, d) K d)\) for the nonlinear case, and \(\mathcal{O}(\min(K, d) K d)\) for the linear case. This complexity is practical and feasible for most scenarios considered in our setting.

\section{Experiments}
\label{sec:exp}
To evaluate the generalization performance of physics-informed linear regression (PILR) compared to ridge regression (RR) using basis functions \(\mathcal{B}\), we conducted experiments on representative differential equations. We varied the data size \(n\) and parameter count \(d\), and report test MSE (mean ± standard deviation) across 10 random initial or boundary conditions. Experimental details are provided in~\cref{app:exp_detail}.

When the operator \(\mathscr{D}\) is linear, PILR approximates the solution to~\cref{eq:lin-problem-ball} as~\citep{doumeche2024physics2}:
\begin{align*}
\widehat{\bm{w}} = (\bm{\Phi}^{\top}\bm{\Phi} + n\mM)^{-1} \bm{\Phi}^{\top} \bm{y},
\end{align*}
where \(\mM\) depends on hyperparameters \(\xi\) and \(\nu\) (see~\cref{eq:def-pi-kernel}); setting \(\nu = 0\) yields RR.

For nonlinear equations, we train models by minimizing a soft-constrained loss using the Adam optimizer. Hyperparameters \(\xi\) and \(\nu\) are tuned via validation MSE. 

\subsection{Learning Strong Solutions}
\label{sec:strong-sol}
\begin{figure*}[tb]
    \captionsetup[subfigure]{justification=centering}
    \centering
    \subcaptionbox{Harmonic Oscillator\label{fig:ho-perf}}[0.35\textwidth]{
    \includegraphics[width=0.35\textwidth]{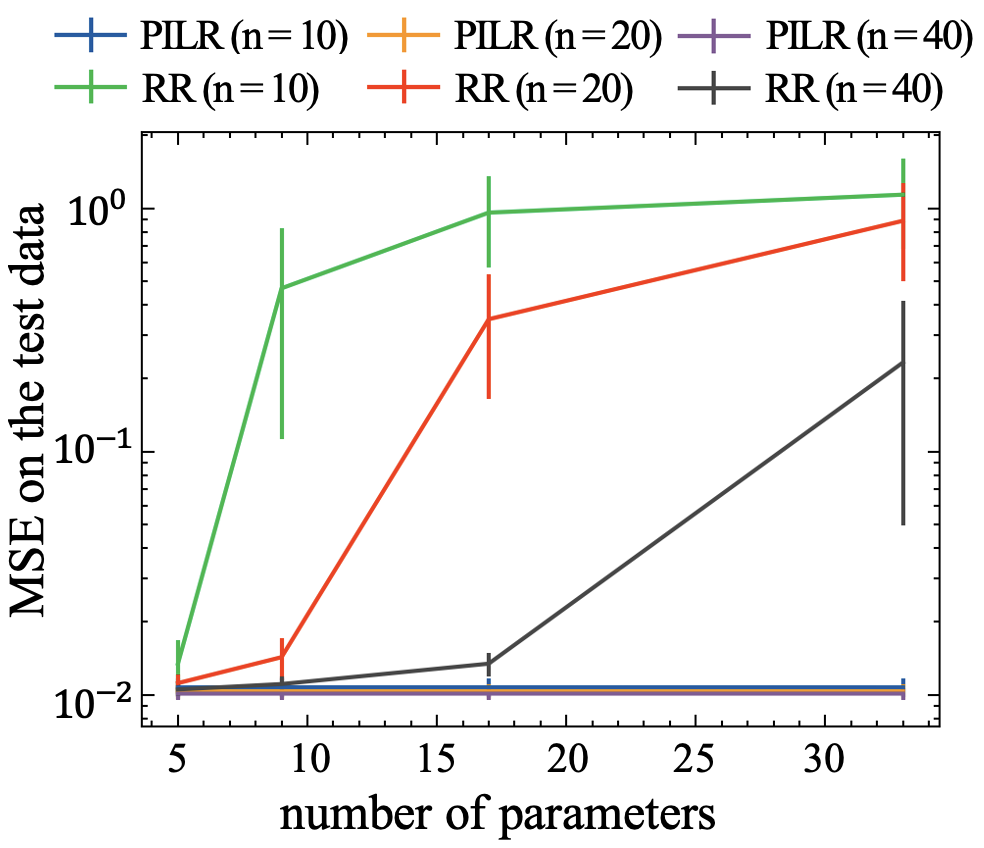}
    }
    \centering
    \subcaptionbox{Diffusion Eq.\label{fig:diff-perf}}[0.35\textwidth]{
    \includegraphics[width=0.35\textwidth]{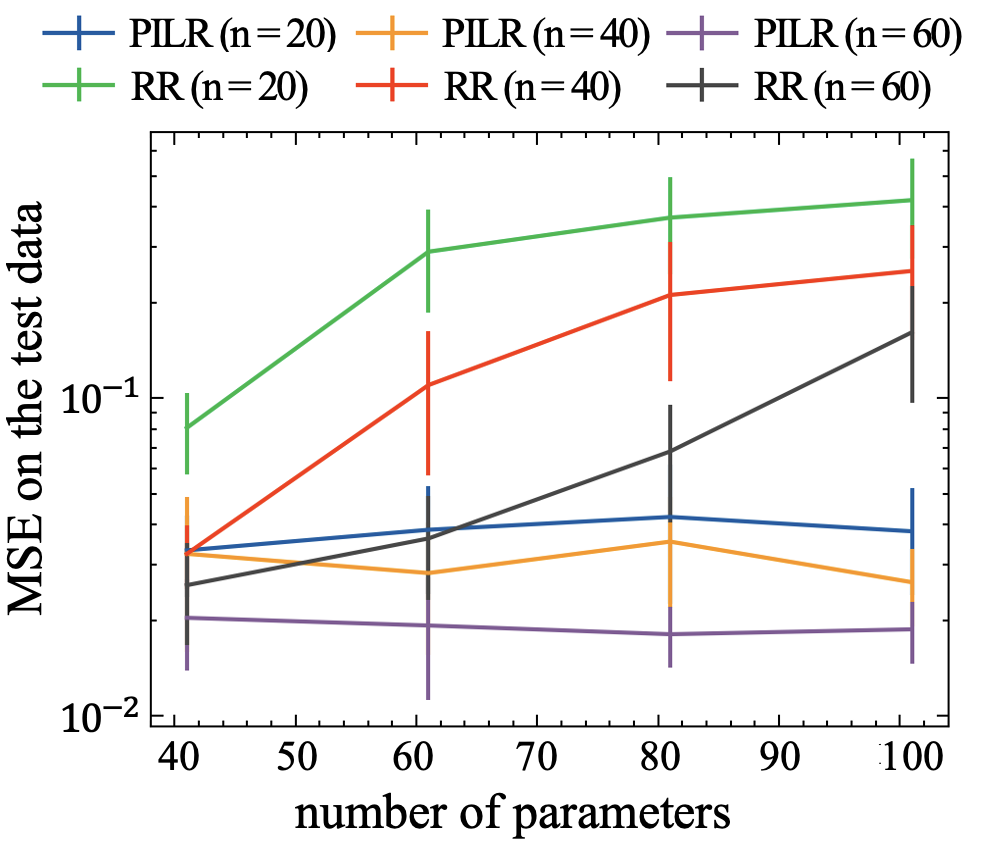}
    }%
    \centering
    \subcaptionbox{Predictions of Harmonic Oscillator \label{fig:ho-vis}}[0.29\textwidth]{
    \includegraphics[width=0.29\textwidth]{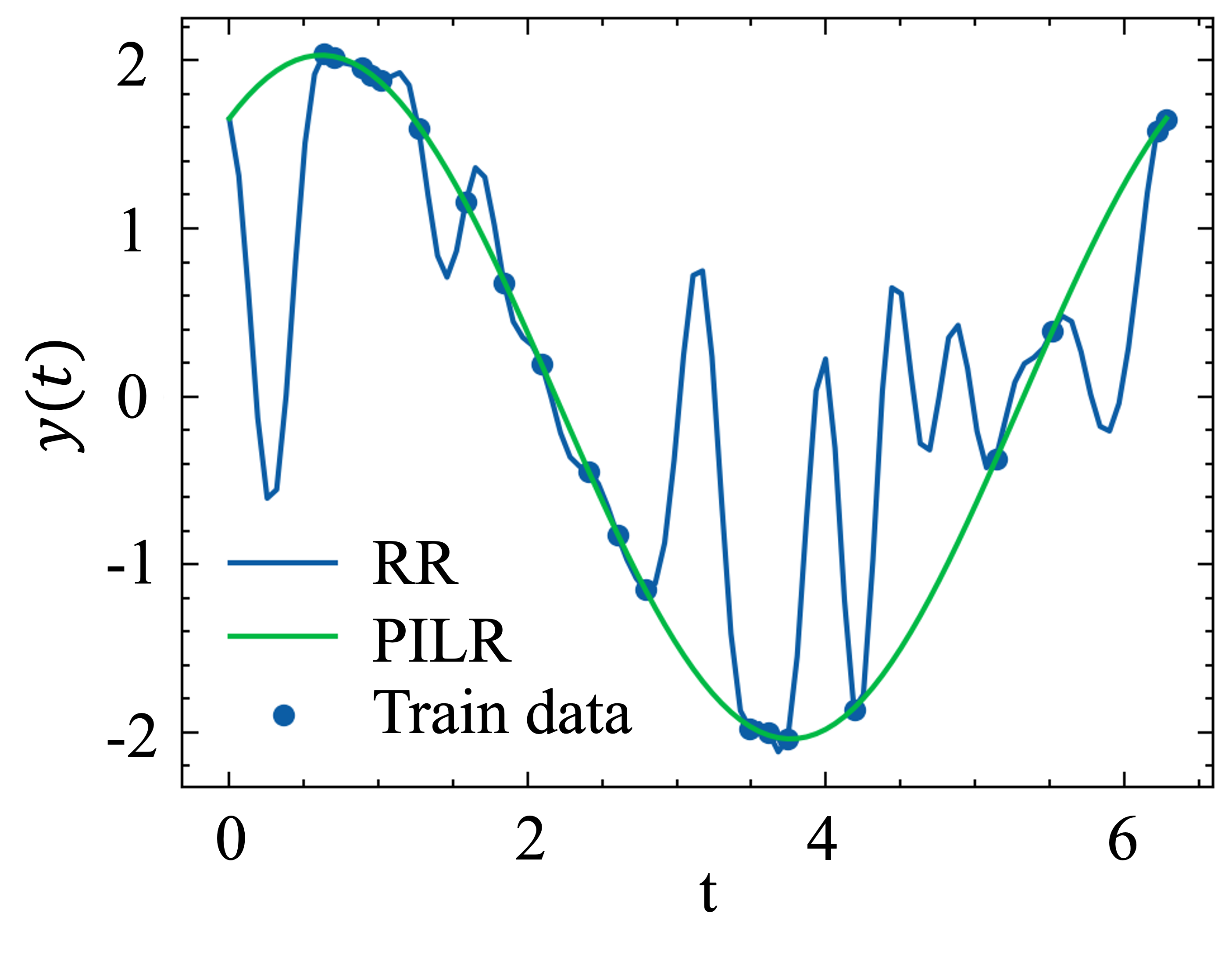}
    }
    \caption{Experimental results for the strong solutions. (a, b) Test MSE (log scale) vs. number of parameters for the harmonic oscillator (a) and diffusion equation (b). The plots compare RR and PILR for three different data sizes \(n\), showing the mean and standard deviation across 10 initializations. (c) Predictions of harmonic oscillator using a 33-parameter model trained on 20 samples: RR and PILR, with training data points indicated.}
    \label{fig:strong-sol}
\vskip -0.2in
\end{figure*}
In this section, we investigate the strong solutions of the classical harmonic oscillator and the diffusion equation with periodic boundary conditions, by employing the Dirac measure, which corresponds to the collocation method used in PINNs. The solutions to these equations can be obtained analytically. Through these straightforward examples, we demonstrate both analytically and numerically that the generalization performance is determined by the dimension of the affine variety. 

\paragraph{Harmonic Oscillator} The initial value problem of a harmonic oscillator \(\mathscr{D}[y]=0\) with a spring constant \( k_s \) and mass \( m_s \) in the domain \( \Omega = [0, T] \) is given by:
\begin{align}
    \label{eq:osc-eq}
    \mathscr{D}[y] = \frac{\mathrm{d}^2}{\mathrm{d}t^2}y + \frac{k_s}{m_s}y,\quad y(0)=y_0,\quad \frac{\mathrm{d}}{\mathrm{d}t}y(0)=v_0,
\end{align}
where \(y_0\) and \(v_0\) are the initial position and velocity, respectively. The solution to the initial value problem is analytically given by \( y(t) = y_0 \cos(\omega t) + \frac{v_0}{\omega} \sin(\omega t), \ \omega = \sqrt{k_s/m_s} \). 
The settings for the basis and the trial functions with the measure \(\phi_j \in \mathcal{B},\ (\psi_k, \mu_k) \in \mathcal{T}\) of indices \(1 \leq j \leq d_t\) and \(1 \leq k \leq K_t \) are as follows:
\begin{align}
    \label{eq:basis_test_osc}
    \phi_1(x) = 1,\ \phi_{2j}(x) = \cos\left(\omega_j x\right),\ 
    \phi_{2j+1}(x) = \sin\left(\omega_j x\right),\
    \psi_k(x) = 1,\ \mu_k = \delta_{x_k},
\end{align}
where \(\omega_j \coloneqq \frac{j\pi}{T}\) is the \(j\)-th frequency and \(\delta_{x_k}\) is the Dirac measure centered at the point \(x_k \in \Omega\), which is uniformly sampled from data. 

Then, the dimension of the affine variety is \( d_{\mathcal{V}} = 2\), representing the essential degrees of freedom of the solution. \Cref{fig:ho-perf} supports our theory experimentally. For RR, the generalization performance degrades as the number of parameters \( d = 2d_t + 1 \) increases due to overfitting, as shown in~\cref{fig:ho-vis}. In contrast, for PILR, the performance remains stable regardless of the number of parameters \( d \) owing to the lower dimension of the affine variety \( d_{\mathcal{V}} = 2 \).

\paragraph{Diffusion Equation} The initial value problem for the one-dimensional diffusion equation \(\mathscr{D}[u]=0\) with diffusion coefficient \( c \) and periodic boundary conditions is given by:
\begin{align}
\label{eq:diffusion-eq}
\begin{cases}
 \frac{\partial u}{\partial t} -c \frac{\partial^2 u}{\partial x^2}= 0,
 & (x,t)\in[-\Xi,\Xi]\times[0,T] \\
 u(x,0)=u_0(x),
 & x\in[-\Xi,\Xi] \\
 u(-\Xi,t) = u(\Xi,t),\;
 \frac{\partial u}{\partial x}(-\Xi,t) = \frac{\partial u}{\partial x}(\Xi,t),
 & t\in[0,T]
\end{cases}
\end{align}

We define the basis functions $\phi \in \mathcal{B}$ and the test functions with measures $(\psi, \mu) \in \mathcal{T}$ as follows:
\begin{align}
\label{eq:basis_test_diff}
\phi_{2j, j'} = \cos(\omega_j x) e^{-c\omega_{j'}^2 t},\ \phi_{2j+1, j'} = \sin(\omega_j x) e^{-c \omega_{j'}^2t},\quad
\psi_{k, k'} = 1,\ \ \mu_{k, k'}= \delta_{(t_k, x_{k'})},
\end{align}
where the frequency is $\omega_j = j\pi/\Xi$. The indices are in the ranges $0 \leq j \leq d_x$, $0 \leq j' \leq d_t$, $1 \leq k \leq K_t$, and $1 \leq k' \leq K_x$.

The analytical solution is expressed as a linear combination of the above basis functions. The number of bases is \(d = 2d_xd_t + 1 \), while the dimension of an affine variety is given by \( d_{\mathcal{V}} = 2 \min(d_x, d_t) + 1\). \Cref{fig:diff-perf} shows the results when we set \(\alpha = 1.0\), \(j_{\mathrm{max}} = 1\), \(d_t = 2\), and vary \(d_x\). The results indicate that the generalization performance of PILR does not deteriorate as \(d_x\) increases, in contrast to RR. 

\subsection{Learning Weak Solutions}
\label{sec:weak-sol}
\begin{figure*}[tb]
    \captionsetup[subfigure]{justification=centering}
    \centering
    \subcaptionbox{Harmonic Oscillator\label{fig:ho-perf-weak}}[0.35\textwidth]{
    \includegraphics[width=0.35\textwidth]{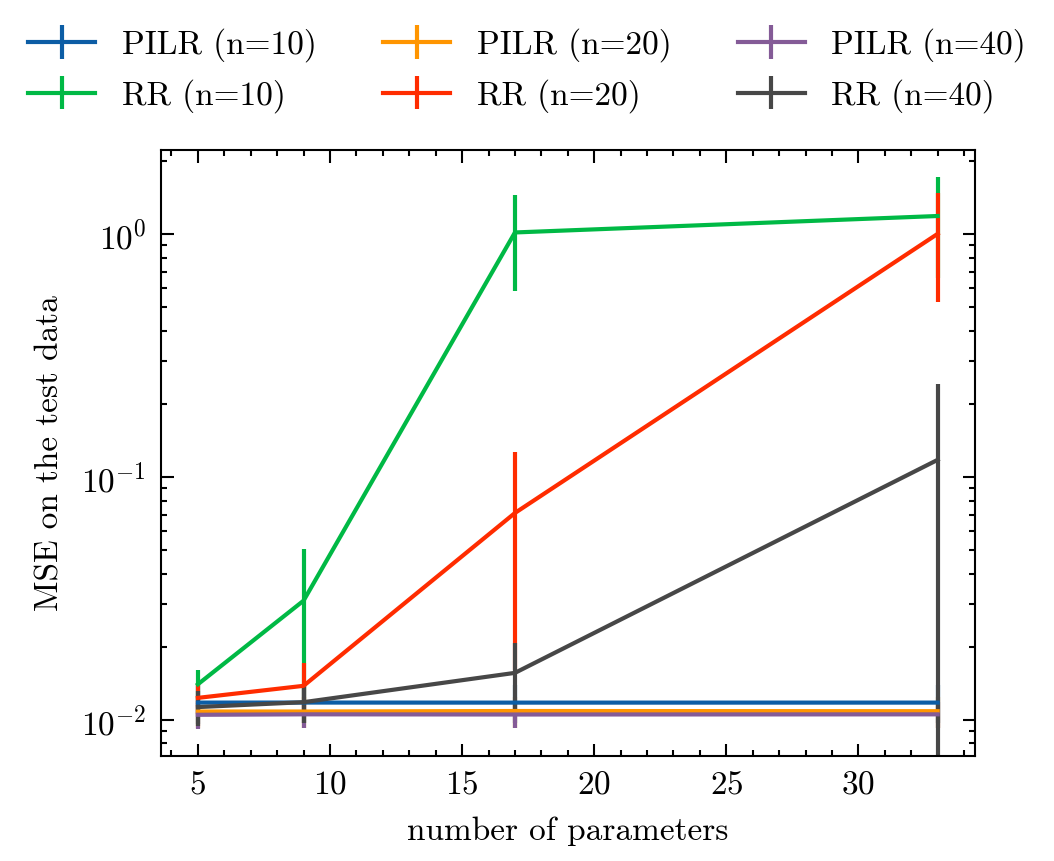}
    }
    \centering
    \subcaptionbox{Diffusion Eq.\label{fig:diff-perf-weak}}[0.35\textwidth]{
    \includegraphics[width=0.35\textwidth]{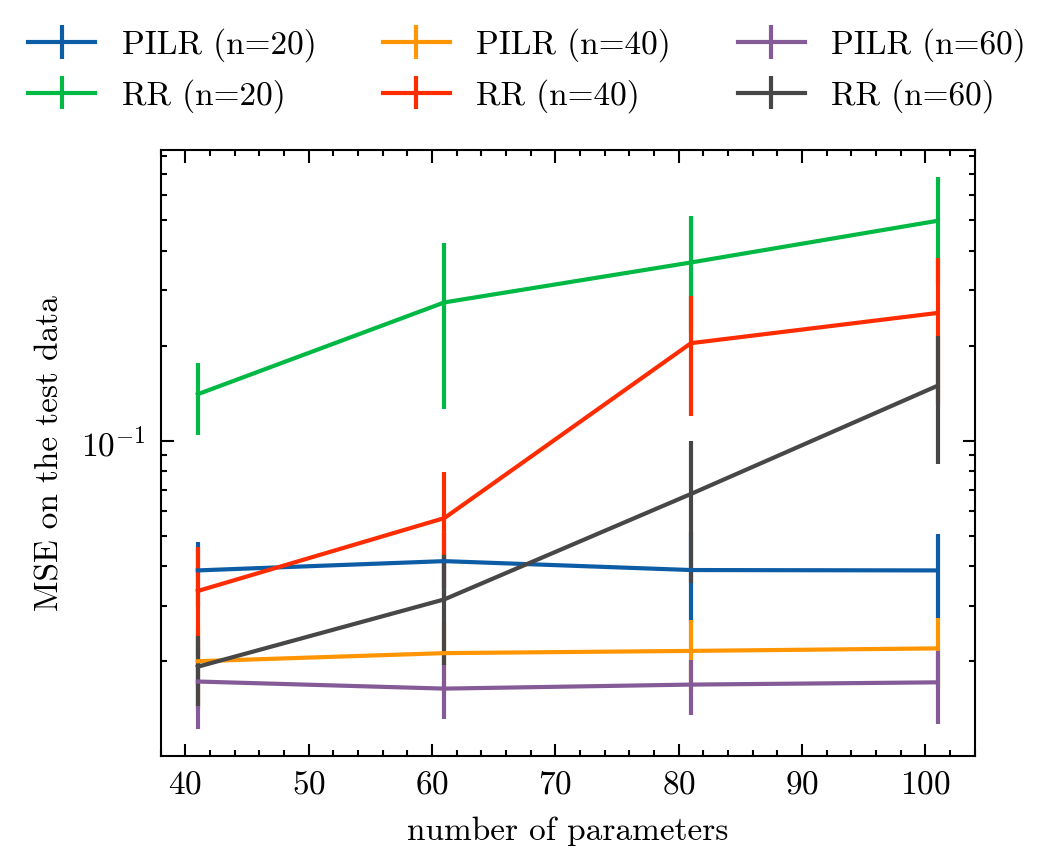}
    }%
    \centering
    \caption{Experimental results for the weak solutions. (a, b) Test MSE (log scale) vs. number of parameters for the harmonic oscillator (a) and diffusion equation (b). The plots compare RR and PILR for three different data sizes \(n\), showing the mean and standard deviation across 10 initializations.}
    \label{fig:weak-sol}
\vskip -0.2in
\end{figure*}
In this section, we investigate weak solutions for the harmonic oscillator and the diffusion equation, employing a variational framework with Borel measures. The governing equations and basis functions are identical to those in \cref{sec:strong-sol}.

\paragraph{Harmonic Oscillator}
We define the trial functions $\psi_k$ for $1 \leq k \leq K_t$ as:
\begin{equation}
    \psi_1(x) = 1, \quad \psi_{2k-1}(x) = \cos(\omega_k x), \quad \psi_{2k}(x) = \sin(\omega_k x),
\end{equation}
where $\omega_k \coloneqq \frac{k\pi}{T}$ is the frequency. The associated measure $\mu_k$ is the Lebesgue measure on $\Omega = [0, T]$.

The dimension of the affine variety remains $d_{\mathcal{V}} = 2$, consistent with the strong solutions. Experimental results in \cref{fig:ho-perf-weak} confirm this. The performance of PILR is stable and independent of the number of basis functions, unlike RR, which shows performance degradation as model complexity increases.

\paragraph{Diffusion Equation}
The trial functions $\psi_{k, k'}$ combine a piecewise constant basis in time and a Fourier basis in space. For indices $1 \leq k \leq K_t$ and $1 \leq k' \leq K_x$, they are:
\begin{equation}
    \psi_{k, 2k'-1}(x,t) = 1_{[t_k, t_{k+1}]}(t) \cos(\omega_{k'} x), \quad \psi_{k, 2k'}(x,t) = 1_{[t_k, t_{k+1}]}(t) \sin(\omega_{k'} x),
\end{equation}
where $1_{[t_k, t_{k+1}]}(t)$ is the indicator function for the time interval, defined as
\begin{equation}
    1_{[t_k, t_{k+1}]}(t) = 
    \begin{cases} 
        1 & \text{if } t \in [t_k, t_{k+1}) \\ 
        0 & \text{otherwise} 
    \end{cases},
\end{equation}
and $\omega_{k'} = \frac{k'\pi}{\Xi}$. The associated measure is the Lebesgue measure on $[-\Xi, \Xi] \times [0, T]$.

The dimension of the affine variety, $d_{\mathcal{V}} = 2 \min(d_x, d_t) + 1$, is identical to the strong solution case. The results in \cref{fig:diff-perf-weak} show that PILR's generalization performance remains robust as the number of spatial basis functions $d_x$ increases, demonstrating its advantage over RR.

\subsection{Learning Numerical Solutions}
\label{sec:numel-sol}
In this section, we learn approximate solutions using numerical methods that use finite difference for four equations. In this setting, we consider the affine variety of the difference equation \(\mathscr{D}_{\bm{h}}\) and the base functions \(\mathcal{B}_{\bm{h}}\) and the trial functions with the measure \(\mathcal{T}_{\bm{h}}\) corresponding to the numerical method with step size \(\bm{h}\). We first validate our theory using linear and nonlinear Bernoulli equations discretized by the explicit Euler method. 
\paragraph{Discrete Bernoulli Equation}
We discretize the Bernoulli equation on the interval $\Omega=[0,T]$ with uniform step size~$h$:
\begin{align*}
  \mathscr{D}_h[y]
  \;=\;\frac{y_{\tau+1}-y_{\tau}}{h} \;+\; P\,y_{\tau}\;-\;Q\,y_{\tau}^{\rho}
  \;=\;0, 
  \quad \tau=0,\dots,n_t-1,\quad n_t=\frac{T}{h},
\end{align*}
where \( y_{\tau} = y(t_{\tau}) \). We consider two parameter regimes \( (P, Q, \rho) \) set to \((1.0, 0.0, 0.0)\) for the linear case and to \((1.0, 0.5, 2)\) for the non-linear case.
The initial value~$y_0$ is sampled from $\mathcal{N}(0,1)$, and the reference solution is calculated explicitly by Euler.  
Further details on the choice of $n_t$, basis/trial functions, measure $(\psi_\tau,\mu_\tau)$, and implementation are given in~\cref{app:exp-numel-sol}. 

\paragraph{Discrete Diffusion Equation}
We discretize the one‐dimensional diffusion equation over 
\(\Omega=[-\Xi,\Xi]\times[0,T]\) with step sizes \(\bm h=(h_x,h_t)\) and diffusion coefficient \(c(u)\):
\begin{align*}
  \mathscr{D}_{\bm h}[u]
  &= \frac{u_j^{\tau+1}-u_j^{\tau}}{h_t}
     -c(u_j^{\tau})\,\frac{u_{j+1}^{\tau}-2\,u_{j}^{\tau}+u_{j-1}^{\tau}}{h_x}
  \;=\;0,
  \quad j = 1, \dots, n_x,\;\tau = 1, \dots, n_t,
\end{align*}
where \(u_j^{\tau}=u(x_j,t_{\tau})\).  
We consider two cases:  \( c(u) = 1.0 \) for the linear case and \( c(u) = 0.1/(1 + u^2) \) for the nonlinear case.
Periodic boundary conditions are imposed in \(x\).  
More details on the grid, the basis / trial functions, and the numerical setup are given in~\cref{app:exp-numel-sol}. 

\Cref{tab:euler-bernoulli,tab:fdm-diffusion} show that PILR achieves a higher performance than RR for large values of \(d\). While the dimension \(d_{\mathcal{V}}\) is independent of the time discretization step size in the Euler method, it depends on the spatial discretization step size in the FDM. We include supplementary experiments in~\cref{app:additional_exp}, where we fix the ambient dimension \(d\) and vary the size of the trial‐function set \(\mathcal{T}\). 

\begin{table*}[tb]
\caption{Experimental results for the discrete linear and nonlinear Bernoulli equations approximated by the explicit Euler method. The settings include various step sizes \(h\). The number of parameters (basis) \(d\), and the calculated dimension of the affine variety \(d_{\mathcal{V}}\).}
\label{tab:euler-bernoulli}
\begin{center}
\begin{tabular}{cc|cc|cc}
\toprule
\multirow{2}{*}{Settings} & \(\mathscr{D}_h\) & \multicolumn{2}{c|}{Linear Bernoulli eq.} & \multicolumn{2}{c}{Nonlinear Bernoulli eq.} \\ 
& \( h \) & \( 1/100 \) & \( 1/200 \) & \( 1/100 \) & \( 1/200 \)  \\
\multirow{2}{*}{Dimensions} & \( d \) & 100 & 200 & 100 & 200 \\
& \(d_{\mathcal{V}}\) & 1 & 1 & 1 & 1  \\ \midrule 
\multirow{2}{*}{Test MSE} & RR & \(0.48 \pm 0.32\) & \(0.63 \pm 0.43 \) & \(0.60 \pm 0.41\) & \(0.72 \pm 0.49\) \\
& PILR & \(0.012 \pm 0.0025\) & \(0.011 \pm 0.0013\)  & \(0.013 \pm 0.0024\) & \(0.013 \pm 0.0018\)  \\ \bottomrule
\end{tabular}
\end{center}
\vskip -0.1in
\end{table*}

\begin{table*}[tb]
\caption{Experimental results for the discrete linear and nonlinear diffusion equations approximated by the FDM. The settings include various step sizes \(\bm{h} = (h_t, h_x)\). The number of parameters (basis) \(d\), and the calculated dimension of the affine variety \(d_{\mathcal{V}}\).}
\label{tab:fdm-diffusion}
\begin{center}
\begin{tabular}{cc|cc|cc}
\toprule
\multirow{2}{*}{Settings} & \(\mathscr{D}_h\) 
  & \multicolumn{2}{c|}{Linear diffusion eq.} 
  & \multicolumn{2}{c}{Nonlinear diffusion eq.} \\ 
 & \((h_t, h_x)\) 
  & \(\left(1/400, 2/10\right)\) 
  & \(\left(1/400, 2/20\right)\) 
  & \(\left(1/200, 2/10\right)\) 
  & \(\left(1/200, 2/20\right)\) \\ 
\multirow{2}{*}{Dimensions} 
 & \(d\) 
  & 4010 & 8020 
  & 2010 & 4020 \\ 
 & \(d_{\mathcal{V}}\) 
  & 10   & 20   
  & 10   & 20   \\ 
\midrule 
\multirow{2}{*}{Test MSE} 
 & RR  
  & \(2.21 \pm 0.56\) & \(2.14 \pm 0.57\) 
  & \(1.12 \pm 0.40\) & \(1.11 \pm 0.40\) \\ 
 & PILR
  & \(1.13 \pm 0.30\) & \(0.79 \pm 0.16\) 
  & \(0.26 \pm 0.11\) & \(0.22 \pm 0.10\) \\ 
\bottomrule
\end{tabular}
\end{center}
\vskip -0.1in
\end{table*}

\subsection{Impact of Basis Misspecification on Generalization}

This section considers a practical scenario where the basis functions are misspecified, a situation that can occur during manual design or through random selection, as in an Extreme Learning Machine (ELM)~\citep{huang2006extreme}. Any such misspecification can degrade performance by increasing the \textbf{approximation error}. As detailed in~\cref{app:analysis setting}, the total error is composed of this approximation error and an estimation error. While our theory demonstrates that physical constraints can reduce the estimation error, the overall model performance is limited by the magnitude of the approximation error. 

To demonstrate this effect, we conducted an experiment on the Harmonic Oscillator, intentionally omitting the known analytical frequency from the basis functions. Other experimental settings were identical to those in Section 5.1. With 10 data points, the performance was exceptionally poor. For a basis size of $d=17$, the test MSE was approximately $1.435 \pm 0.646$, of which the approximation error constituted nearly the entire amount at $1.430$. Increasing the basis size to $d=33$ had a negligible effect; the test MSE remained high at $1.434$ as the approximation error was unchanged.

This result clearly shows the total error being dominated by the approximation error. It underscores a prerequisite for our theory: the improvement in generalization from physics-informed constraints is achieved only when the model possesses sufficient expressive capacity to represent the true solution. 

\section{Conclusion}
This study introduces a framework for analyzing physics-informed models through the lens of affine varieties induced by the governing differential equations. We establish that generalization performance is governed by the dimension of this variety, rather than the number of model parameters, a finding that unifies existing theories for linear equations. We further provide a method for calculating this dimension and present experimental validation confirming that this intrinsic dimension effectively mitigates overfitting in highly parameterized settings. Although our analysis centers on linear regression models, the proposed geometric framework is broadly applicable to both linear and nonlinear differential equations, as our experiments demonstrate. This work offers a foundational, geometric interpretation of generalization that establishes a promising, though challenging, direction for future theory-guided model selection, such as the optimal choice of basis and trial functions. Future work includes the extension and validation of our framework for other architectures, such as NN and ELM. The framework can also be extended to differential equations with unknown parameters by analyzing an augmented parameter space.


\bibliography{example_paper}
\bibliographystyle{plainnat}

\newpage
\appendix
\section{Notation}
\label{app:notation}
\begin{table}[!htbp]
\centering
\begin{tabular}{ll}
\textbf{Symbol} & \textbf{Description} \\
\midrule
\multicolumn{2}{l}{\vspace{0.02in} \textbf{Data}} \\
$f^*$ & True function to be learned \\
$\Omega \subseteq \mathbb{R}^m$ & Input domain \\
$m$ & Input dimension \\
$n$ & Number of observations \\
$(x_i, y_i)$ &  $i$-th observation ($x_i$: input, $y_i$: output) \\
$\boldsymbol{y}$ & Target vector $[y_1, \dots, y_n]^\top$ \\
$\sigma^2$ & Noise variance \\
$\varepsilon_i$ & Normally distributed noise following $\mathcal{N}(0, \sigma^2)$ \\
\midrule
\multicolumn{2}{l}{\vspace{0.02in} \textbf{Affine Variety and Variables}} \\
$\mathscr{D}$ & Differential operator \\
$\mathcal{B} = \{\phi_j\}_{j=1}^d$ & Basis functions \\
$\phi_j$ & j-th basis function from $\mathcal{B}$ \\
$\bm{\phi}(x) \in \mathbb{R}^d$ & Basis vector at $x$ \\
$\mathbf{\Phi} \in \mathbb{R}^{n \times d}$ & Design matrix (i-th row is $\boldsymbol{\phi}(x_i)^\top$) \\
$\mathcal{T} = \{(\psi_k, \mu_k)\}_{k=1}^K$ & Finite collection of trial function and measure pairs. \\
$d$ & Number of basis functions $|\mathcal{B}|$ (ambient dimension) \\
$K$ & Number of trial functions $|\mathcal{T}|$ \\
$\mathcal{V}(\mathscr{D}, \mathcal{B}, \mathcal{T})$ & Affine variety defined by $\mathscr{D}, \mathcal{B}, \mathcal{T}$ (set of weight vectors) \\
$d_{\mathcal{V}}$ & Dimension of the affine variety $\mathcal{V}$ \\
$\bm{w}$, $\hat{\bm{w}}$, $\bm{w}^*$ & Weight vectors (learnable, estimated, optimal) \\
$\mathbb{B}_2(R)$ & $\ell_2$ ball of radius $R$ \\
$\mathcal{V}_R$ & Affine variety constrained by the $\ell_2$-ball ($\mathcal{V} \cap \mathbb{B}_2(R)$) \\
$\lambda_n$ & $L^2$ regularization parameter \\
$\|\cdot\|_2$, $\|\cdot\|$ & Vector $\ell_2$ norm, function $L^2$ norm w.r.t. Borel measure \\
$\langle \cdot, \cdot \rangle_{2},\ \langle \cdot, \cdot \rangle_{\mu}$ & Vector Euclidean inner product, function inner product w.r.t. measure $\mu$ \\
\midrule
\multicolumn{2}{l}{\textbf{\vspace{0.02in} Geometric / Complexity Measures}} \\
$V \subseteq \mathbb{R}^d$ & General affine variety \\
$(\beta, d_V)$-regular set & Regularity condition \\
$d_V$ & Dimension of $V$ \\
$\operatorname{codim}(L)$ & Codimension $d - d_L$ \\
$\beta$ & Upper bound on connected components of intersections \\
$\mathcal{N}(V, \varepsilon, \|\cdot\|_2)$ & $\varepsilon$-covering number w.r.t. $\ell_2$ norm $\|\cdot\|_2$.\\
\midrule
\multicolumn{2}{l}{\vspace{0.02in}\textbf{Analysis Constants}} \\
$M$ & Upper bound constant for the basis functions, such that $\|\boldsymbol{\phi}(x)\|_2 \leq M$ \\
$\eta$ & Upper bound constant for the lower eigenvalue of design matrix $\mathbf{\Phi}$.  \\
$\Gamma$ & Stability constant of the estimator \\
$\delta$ & Probability parameter \\
\midrule
\multicolumn{2}{l}{\vspace{0.02in}\textbf{Linear Operators / PI Kernel}} \\
$\mD, \emD_{k,j}$ & Constraint matrix when the operator $\mathscr{D}$ is linear; $\emD_{k,j}$ is its entry \\
$\mathscr{L}, \mathscr{F}$ & Linear, nonlinear parts of $\mathscr{D}$ \\
$d_{\mathcal{V}(\mathscr{L})}$, $d_{\mathcal{V}(\mathscr{D})}$ & Dimensions under $\mathscr{L}$, $\mathscr{D}$ \\
$\kappa_{\mM}(x,y)$ & PI kernel defined with regularization matrix $\mM$ \\
$\xi, \nu$ & Hyperparameters of the PI kernel (controlling the balance)  \\
$\mB$, $\emB_{j,j'}$ & Basis Gram matrix, its entry \\
$\mT$, $\emT_{k,k'}$ & Trial Gram matrix, its entry \\
$d_{\mathrm{eff}}(\xi, \nu)$ & Effective dimension of the PI kernel \\
$\sigma(\cdot),\ \alpha, \alpha_j$ & Spectrum of a matrix and its entry (eigenvalues)  \\
\midrule
\multicolumn{2}{l}{\vspace{0.02in}\textbf{Dimension Calculation}} \\
$p_k$ & Defining polynomial \\
$N$ & Number of samples $(\bm{w}^*_1,\ \ldots,\ \bm{w}^*_N)$ \\
\bottomrule
\end{tabular}
\end{table}

\newpage
\section{Mathematical Background on Affine Varieties}
\label{app:affine-variety}
\begin{figure}[tb]
    \centering
    \includegraphics[width=0.4\linewidth]{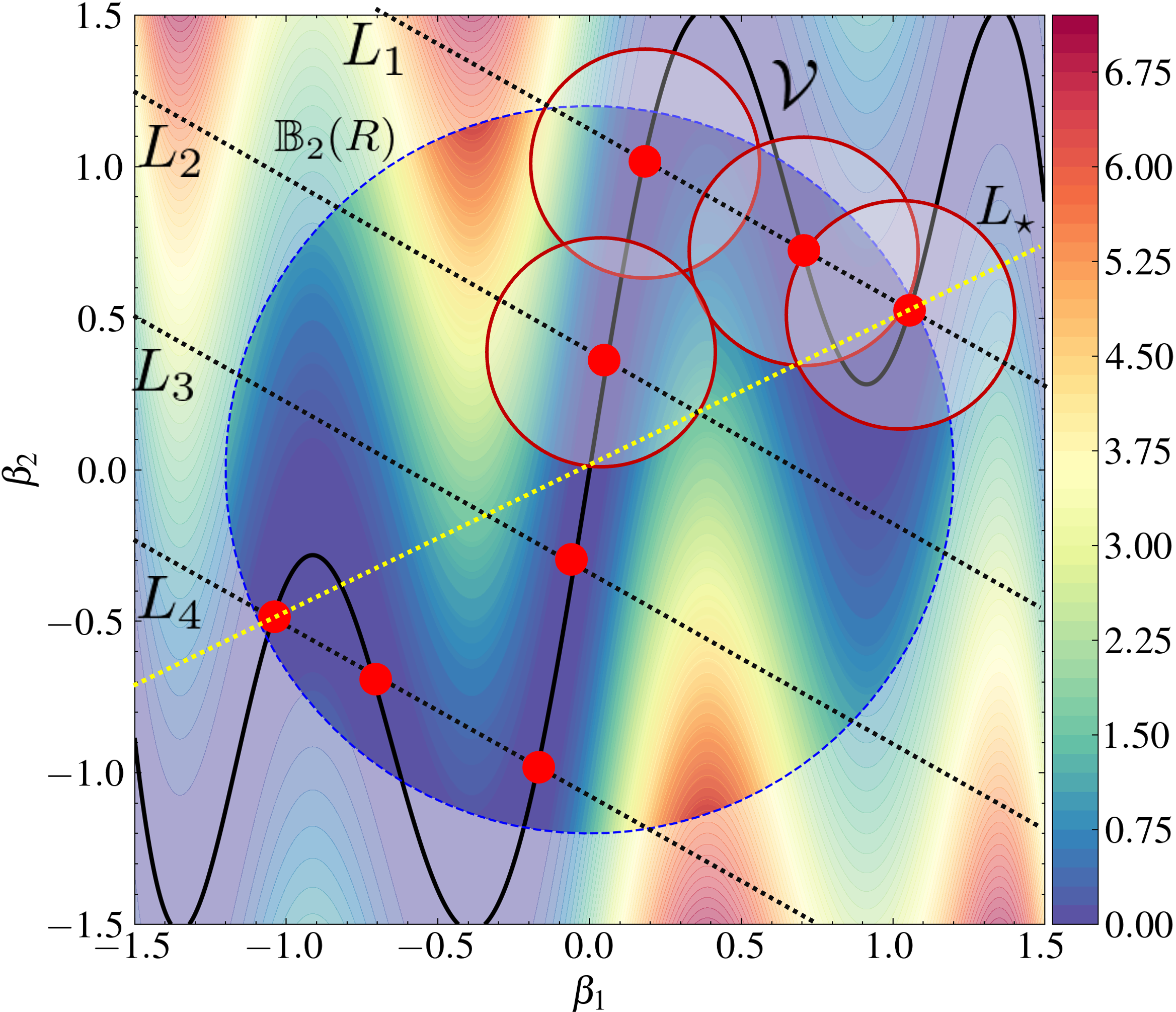}
    \caption{Illustration of the construction of the \(\varepsilon\)-covering of the affine variety \(\mathcal{V} \subseteq \mathbb{R}^2\) and the associated loss landscape. The black curve represents a \((\beta, d_{\mathcal{V}})\) regular affine variety with dimension \(d_{\mathcal{V}} = 1\). The color gradients depict the loss landscape \(\mathcal{L}(\bm{w}) \coloneqq \sum_{k=1}^K \|p_k(\bm{w})\|_2^2\) of the equations defining \(\mathcal{V} = \{ \bm{w} : p_k(\bm{w}) = 0,\ \forall k =1,\ \ldots,\ K\}\). The blue dotted line represents a \(\ell_2\) ball of radius \(R\). The affine variety constrained with the \(\ell_2\) ball is covered by \(\varepsilon\)-balls centered at the intersections of \(\mathcal{V}\) with four given subspaces \(\{L_s\}_{s=1}^4\), shown as red points. The upper bound on the number of intersections of every subspace with the variety is \(\beta\), while the actual maximum number is \(5\) formed by the subspace \(L_{\star}\) (the yellow dotted line). The loss landscape of the equations is zero on \(\mathcal{V}\) and locally convex around the points in \(\mathcal{V}\).}
    \label{fig:cover-ofaffineV}
\end{figure}
In this section, we provide a formal definition of several concepts related to affine varieties and review the definition of the dimension of an affine variety, as briefly described in~\cref{sec:dim-variety}.

An affine variety is a fundamental concept in algebraic geometry. It is a subset of an affine space, defined as the solution set to a system of polynomial equations. Let \( \mathbb{K}[\bm{w}] \) denote the set of polynomials in the variables \(\bm{w} = (w_1, \ldots, w_d) \in \mathbb{K}^d\) over a field \(\mathbb{K}\) (often \(\mathbb{R}\) or \(\mathbb{C}\)). An affine variety \( V(p_1, \ldots, p_K) \subseteq \mathbb{K}^d\) defined by the polynomials \(p_1, \ldots, p_K \in \mathbb{K}[\bm{w}]\) is given by:
\begin{gather*}
V(p_1, \ldots, p_K) \coloneqq \left\{ \bm{w} \in \mathbb{K}^d : p_k(\bm{w}) = 0,\ \forall k = 1,\ \ldots,\ K \right\}.
\end{gather*}

The geometry of an affine variety is determined by the set of all polynomials that "vanish" on \(V\), i.e., those that become zero for every point in \(V\). This set is called the ideal of the affine variety, denoted \(I(V)\), and is defined as follows:
\begin{gather*}
I(V) := \left\{ p \in \mathbb{K}[\bm{w}] : p(\bm{w}) = 0,\ \forall \bm{w} \in V \right\}.
\end{gather*}
The generating polynomial set \(\{ p_k \}_{k=1}^K\) of the affine variety \(V\) is a subset of the ideal \(I(V)\).

The coordinate ring over \(V\), denoted \(\mathbb{K}[V]\), is introduced to identify polynomials that yield the same values on the variety \(V\). Specifically, \(\mathbb{K}[V]\) is defined as the quotient of the polynomial ring \(\mathbb{K}[\bm{w}]\) by the ideal \(I(V)\), i.e., \(\mathbb{K}[\bm{w}] / I(V)\).
In the coordinate ring \(\mathbb{K}[V] = \mathbb{K}[\bm{w}] / I(V)\), the difference between \(p\) and \(q\) vanishes on \(V\), i.e., \(p(\bm{w}) = q(\bm{w})\) for all \(\bm{w} \in V\), or equivalently \(p - q \in I(V)\). Thus, \(p\) and \(q\) are considered the same element. From another viewpoint, the coordinate ring \(\mathbb{K}[V]\) can be considered as a set of polynomials not included in the ideal \(I(V)\).

Based on the above definitions, we review the definition of the dimension \(d_{V}\) of the affine variety in~\cref{app:dimension_of_affinev} and the regularity in~\cref{app:regularity_of_affinev}.

\subsection{Dimension of Affine Varieties}
\label{app:dimension_of_affinev}
\subsubsection{Geometric View}
Considering the affine variety \(V\) as an affine space, we can naturally define a subvariety as an "subset" of the variety that also satisfies polynomial equations. Let \(q_1, \ldots, q_S\) be polynomials in a ring.
Define $\langle q_1, \ldots, q_S\rangle$ as the smallest ideal generated by $q_1, \ldots, q_S$; that is, $\langle q_1, \ldots, q_S\rangle$ consists of all finite sums of the form $\sum_{i=1}^S r_i q_i$ where each $r_i$ is in the ring: $\langle q_1, \ldots, q_S\rangle=\{\sum_{i=1}^S r_iq_i\}$.
A subvariety $U$ of $V$ is defined as the zero set of a subset ideal $\langle q_1, \ldots, q_S\rangle \subseteq \mathbb{K}[\bm{w}]/I(V)$ given by:
\begin{gather*}
U \coloneqq \left\{ \bm{w} \in \mathbb{K}^d : q_s(\bm{w}) = 0,\ \forall q_s \in \langle q_1, \ldots, q_S\rangle\right\}.
\end{gather*}

By using the concept of subvarieties, the dimension of an affine variety is defined as follows:
\maxChain*
This definition intuitively represents the size of \(V\) by the maximal length of an increasing sequence of subspaces. If the generating polynomials \(\{ p_k \}_{k=1}^K\) are all linear, the dimension of \(V\) is defined as the maximal length of an increasing sequence of linear subspaces within \(V\), which corresponds to the dimension of \(V\) as a linear space.

When we focus on the local structure, the following equivalent definition is obtained:
\tangentSpace*
From this definition, we can see that the dimension \(d_V\) is a global quantity that summarizes the local linearized structure of the affine variety \(V\) at a point.

For example, let $\mathbb{K}=\mathbb{R}$ and $V \subset \mathbb{R}^3$ be the plane: $V=\{(x, y, z): x+y-z=0\}$. A chain of subvarieties within $V$ is $V_0 \subset V_1 \subset V_2$, where $V_0=\{(0,0,0)\}$ (a point, $0$-dimensional), $V_1=$ $\{(t, 0, t): t \in \mathbb{R}\}$ (a line, $1$-dimensional), and $V_2=V$ itself (the plane, $2$-dimensional). 
The maximal length of the nested subvarieties is two, \ie, $d_V=2$, which means that a plane has two degrees of freedom. 

\subsubsection{Algebraic View}
The structure of an affine variety is determined by the ideal \(I(V)\). Intuitively, the larger \(I(V)\) is, the more polynomial constraints there are, which means that \(V\) becomes smaller, and consequently, the coordinate ring \(\mathbb{K}[V]\) also becomes smaller. From this perspective, it is natural to expect a deep connection between the dimension of the coordinate ring \(\mathbb{K}[V]\) (and similarly the ideal \(I(V)\)) and the dimension of the affine variety \(V\). 

To explore this connection, we first discuss the dimension of the coordinate ring \(\mathbb{K}[V]\) using Krull dimension. The ideal \(\mathfrak{p} \subset \mathcal{R}\) in a polynomial ring \(\mathcal{R}\) is prime if \(\forall a,b \in \mathcal{R},\ ab \in \mathfrak{p} \Rightarrow a \in \mathfrak{p} \ \text{or} \ b \in \mathfrak{p} \). The definition of the dimension of the affine variety through the Krull dimension is shown below.
\begin{definition}
    The Krull dimension of the coordinate ring \(\mathbb{K}[V]\): The maximum length \(d\) of the chain of prime ideals \(\mathfrak{p}_0 \subset \mathfrak{p}_1 \subset \dots \subset \mathfrak{p}_d \) in the coordinate ring \(\mathbb{K}[V]\). 
\end{definition}
This definition signifies that the dimension of an affine variety is characterized in the world of polynomial sets by the maximal length of an increasing chain of ``subsets'' within the coordinate ring, corresponding to~\cref{def:max-chain} from a geometric perspective. 

In contrast, the size of the coordinate ring \(\mathbb{K}[V]\) can also be measured using Hilbert series. First, by homogenizing the defining equations by adding one variable \(\gamma \in \mathbb{K}\), we embed the affine variety \(V \subset \mathbb{K}^d\) into the projective variety \(\mathcal{P} \subset \mathbb{K}^{d+1}\). The projective variety \(\mathcal{P}(h_1, \ldots, h_K) \subset \mathbb{K}^{d+1}\), defined by the homogeneous polynomials \(h_1, \ldots, h_K \in \mathbb{K}[(\bm{w}, \gamma)]\), is given by:
\begin{gather*}
\mathcal{P}(h_1, \ldots, h_K) \coloneqq \left\{ (\bm{w}, \gamma) \in \mathbb{K}^{d+1} : h_k(\bm{w}, \gamma) = 0,\ \forall k = 1,\ \ldots,\ K \right\}.
\end{gather*}

The dimension of the variety is also increased by one, i.e., \(d_{\mathcal{P}} = d_V + 1\). The coordinate ring \(\mathbb{K}[\mathcal{P}] = \mathbb{K}[(\bm{w, \gamma})]/I(\mathcal{P})\) of the projective variety \(\mathcal{P}\) can be decomposed into subgroups (called the graded coordinate ring) as follows:
\begin{align*}
    \mathbb{K}[\mathcal{P}] = \bigoplus_{\rho \in \mathbb{N}} S_{\rho},\ S_0 = \mathbb{K},
\end{align*}
where \(S_{\rho}\) is the set of homogeneous polynomials of degree \(\rho\) modulo the ideal \(I(\mathcal{P})\).
As a metric for the size of the coordinate ring \(\mathbb{K}[\mathcal{P}]\), the Hilbert function \(\operatorname{H}(\rho)\) and Hilbert-Poincaré series \(\operatorname{HS}(t)\) are defined as follows:
\begin{align*}
    \operatorname{H}(\rho) = \dim S_{\rho},\ \operatorname{HS}(t) = \sum_{\rho \in \mathbb{N}} \operatorname{H}(\rho)t^\rho = \frac{\prod_{k=1}^{K} (1 - t^{\rho_{k}})}{(1-t)^{d+1}},
\end{align*}
where \(\dim\) denotes the Krull dimension and \(\rho_{1}, \ldots, \rho_{K}\) are the degrees of the homogeneous polynomials \(h_1, \ldots, h_K\).

The Hilbert function represents the dimension of a "subspace" of the decomposed coordinate ring, and the Hilbert series is the generating function of the sequence of the Hilbert function, which is also a rational function with a pole at \(t = 1\). These measures indicate the growth of the dimension of the homogeneous components of the algebra with respect to the degree. According to the dimension theorem, the Krull dimension of the projective variety \(\mathcal{P}\) matches the order of the Hilbert series at the pole \(t = 1\), which is one of the most important results in commutative algebra.

Therefore, the dimension of the affine variety is defined using the Hilbert series, as follows:
\hilbertSeries*
Given the Gröbner basis of the ideal \(I(\mathcal{P})\), the Hilbert series can be easily computed, leading to an efficient estimation of the dimension of the affine variety \(d_V\).

\subsection{Regularity of Affine Varieties}
\label{app:regularity_of_affinev}
We informally define the concept of a regular set for real affine varieties, which is used in~\cref{sec:main-theorem} (for a formal definition, see Definition 2.1 in~\citep{zhang2023covering}). 

A affine variety \(V \subseteq \mathbb{R}^d\) is a \((\beta, d_V)\)-regular set if:
\begin{enumerate}
    \item For almost all affine planes \(L\) with \(\operatorname{codim}(L) \leq d_V\) in \(\mathbb{R}^d\), \(V \cap L\) has at most \(\beta\) path-connected components.
    \item For almost all affine planes \(L\) with \(\operatorname{codim}(L) > d_V\) in \(\mathbb{R}^d\), \(V \cap L\) is empty.
\end{enumerate}
The notion \(\operatorname{codim}\) represents the \textit{codimension}. For an affine subspace $L \subseteq \mathbb{R}^d$, its codimension is defined by $\operatorname{codim}(L)=d-d_L$.
Simply put, codimension is how many dimensions you are ``missing'' when comparing a smaller space inside a bigger space.

A regular set restricts the complexity of a variety \(V\). Intuitively, the complexity of \(V\) can be measured by the number of connected components in its cross sections. For instance, a complex shape may have cross sections that split into multiple connected components. The larger the number of connected components \(\beta\), the more complex the topology of \(V\).
Moreover, the dimension at which we slice the variety is also important. If the slice (affine plane) is large enough in dimension, \ie, the codimension is small ($<d_V$), then any intersection of the slice with $V$ is limited to at most $\beta$ connected pieces. Otherwise, the slice typically does not intersect \(V\) at all. For example, consider the circle $V = \{(x,y)\in \mathbb{R}^2 : x^2 + y^2 - 1=0\}$ .
A line ($\operatorname{codim}(L)=1$) intersects the circle in at most two points. For a single point ($\operatorname{codim}(L)=2$), almost all points do not lie in the circle; that is, intersections with higher codimension affine subspaces are almost empty. This implies that the circle is a $(2,1)$-regular set. 

\subsection{Covering number of Affine Varieties}
\begin{lemma}[\citet{zhang2023covering}]
\label{lem:cov}
Let \( V \subset \mathbb{R}^d \) be a \((\beta, d_V)\)-regular set in the ball \( \mathbb{B}_2(R) \) with the radius \( R \). Then for all \( \varepsilon \in (0, \text{diam}(V)] \),
\begin{gather}
\label{eq:lem:cov}
\log \mathcal{N}(V, \varepsilon, \|\cdot \|_2) \leq d_V \log \left(\frac{2Rd_V d}{\varepsilon}\right) + \log 2\beta.
\end{gather}
\end{lemma}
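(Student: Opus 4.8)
The plan is to prove \cref{lem:cov} by the slicing strategy sketched in \cref{sec:main-theorem}: cover $V$ (which lies in $\mathbb{B}_2(R)$ by hypothesis) by intersecting it with a controlled family of affine subspaces of codimension $d_V$, so that each slice supplies at most $\beta$ ``anchor'' points while the dimension $d_V$ dictates how many slices are needed. First I would extract two structural facts from the definition of a $(\beta,d_V)$-regular set. Condition~(2) forces $\dim V\le d_V$, since a component of dimension larger than $d_V$ would still meet a generic affine subspace of codimension $d_V+1$. Combined with condition~(1) and a Fubini argument over the affine Grassmannian $\mathcal{A}(d,d-d_V)\cong\mathrm{Gr}(d,d-d_V)\times\mathbb{R}^{d_V}$ (whose invariant measure splits, up to a smooth positive density, as a product over the two factors), this yields: for a \emph{generic} linear subspace $W$ with $\dim W=d-d_V$ and for almost every translate $W+v$, the slice $V\cap(W+v)$ is a finite set of at most $\beta$ points.

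Fixing such a $W$, the construction is to take an $\varepsilon'$-net $\{v_1,\dots,v_N\}$ of the $d_V$-dimensional ball $W^{\perp}\cap\mathbb{B}_2(R)$ with $N\le(1+2R/\varepsilon')^{d_V}$, perturb the $v_s$ slightly so each $L_s\coloneqq W+v_s$ is a ``good'' slice, and set $\mathcal{C}\coloneqq\bigcup_{s=1}^{N}(V\cap L_s)$, a finite set of at most $\beta N$ points. The goal is to choose $\varepsilon'\asymp\varepsilon/(d_V d)$ so that simultaneously $|\mathcal{C}|\le\beta\,(2Rd_V d/\varepsilon)^{d_V}$ and $\mathcal{C}$ is an $\varepsilon$-net of $V$; taking logarithms then gives exactly \cref{eq:lem:cov}.

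The substantive step is verifying that $\mathcal{C}$ is an $\varepsilon$-net, and this cannot be read off from a single direction $W$ by naive continuity, since $V$ may be arbitrarily ``steep'' over $W^{\perp}$. I would instead bound the covering number directly by integral-geometric quantities and regard the slicing picture as its discretisation. By the Vitushkin--Yomdin inequality, the covering number is controlled by the multidimensional variations, $\mathcal{N}(V,\varepsilon,\|\cdot\|_2)\le c(d_V)\sum_{j=0}^{d_V}\mathrm{var}_j(V)\,\varepsilon^{-j}$, the sum truncating at $d_V$ because $\dim V\le d_V$. The Cauchy--Crofton formula writes each $\mathrm{var}_j(V)$ as a constant $c(d,j)$ times the integral, over codimension-$j$ affine subspaces $L$ meeting $\mathbb{B}_2(R)$, of the number of connected components of $V\cap L$; by regularity this count is at most $\beta$ for almost every such $L$ when $j\le d_V$, and the measure of that family of subspaces has order $c(d,j)R^{j}$, so $\mathrm{var}_j(V)\lesssim c(d,j)^2\beta R^{j}$. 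Summing the resulting geometric-type series in $R/\varepsilon\ge1$ (the case $R/\varepsilon<1$ being trivial), the $j=d_V$ term dominates up to a factor $d_V+1$, giving $\mathcal{N}(V,\varepsilon,\|\cdot\|_2)\le C(d,d_V)\,\beta\,(R/\varepsilon)^{d_V}$. What remains is the routine-but-delicate estimate bounding the accumulated constant $C(d,d_V)$ — an explicit product of ratios of sphere volumes and binomial coefficients coming from the two integral-geometry formulas — by $(d_V d)^{d_V}$, which is precisely what produces the factor $d_V d$ inside the logarithm of \cref{eq:lem:cov}.

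I expect the main obstacle to be this last point: relating an \emph{ambient} discretisation — a grid of codimension-$d_V$ affine subspaces in $\mathbb{R}^d$ — to an \emph{intrinsic} $\varepsilon$-net of the $d_V$-dimensional set $V$, with the dependence on $d$ and $d_V$ made explicit. It is what forces the use of integral geometry (Cauchy--Crofton together with the covering-by-variations bound) rather than an elementary per-coordinate slicing argument, and any slack there directly weakens the logarithmic factor. By contrast, the measure-theoretic genericity of the first paragraph — that regularity applies to the slices actually used and that $\dim V\le d_V$ — is a necessary but comparatively routine technical input.
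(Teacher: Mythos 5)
The paper does not actually prove \cref{lem:cov}: it is imported from \citet{zhang2023covering}, and the only in-paper material is the informal slicing description following the lemma in \cref{app:affine-variety} and the paragraph on how \(d_{\mathcal{V}}\) and \(\beta\) arise in \cref{sec:main-theorem}, both of which match your opening paragraph. Judged against the cited source rather than against anything proved here, your proposal is essentially the right reconstruction. The rigorous route is exactly the one you pivot to: bound \(\mathcal{N}(V,\varepsilon,\|\cdot\|_2)\) by the Vitushkin sum of multidimensional variations, truncate that sum at \(j=d_V\) using regularity condition (2) (which, as you note, forces \(\dim V\le d_V\)), convert each variation into an integral over affine subspaces of the number of connected components of slices via Cauchy--Crofton, and cap that count by \(\beta\) using condition (1). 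Your diagnosis that the naive single-direction slicing of your second paragraph cannot by itself certify an \(\varepsilon\)-net --- because \(V\) may be arbitrarily steep over \(W^{\perp}\) --- is also the correct identification of where the real work lies and why the integral-geometric machinery is needed. The one piece you leave genuinely open is the constant bookkeeping: passing from \(\mathcal{N}\le C(d,d_V)\,\beta\,(R/\varepsilon)^{d_V}\) to the specific form \(2\beta\,(2Rd_Vd/\varepsilon)^{d_V}\) requires tracking the Cauchy--Crofton normalisations (ratios of sphere volumes and binomial coefficients) and showing they are dominated by \((d_Vd)^{d_V}\). You flag this as routine-but-delicate without carrying it out, so what you have is a correct and well-motivated proof strategy, faithful to the cited argument, but not yet a complete proof of the quantitative form stated in \cref{eq:lem:cov}.
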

This upper bound is obtained by slicing the affine variety \( V \) with subspaces \( \{ L_{s} \}_{s \in \mathbb{N}} \) within \( \mathbb{R}^d \) and covering \( V \) with balls centered at the intersections of \( L_s \) and \( V \), \ie, \( V \subset \bigcup_{s} \bigcup_{v \in V \cap L_s} \mathbb{B}_2(v; \varepsilon) \). The covering for the two-dimensional case is illustrated in~\cref{fig:cover-ofaffineV}. The first term, \( \left(2Rd_V d/\varepsilon\right)^{d_V} \), represents the number of subspaces \( L_s \) needed to cover the entire space. It is mainly determined by the intrinsic dimension \( d_V \) of the affine variety, although it is still influenced by the ambient dimension \( d \). The quantity \( \beta \) in the second term denotes the number of intersections between a single subspace \( L \) and the variety \( V \), and represents the covering number of \( V \cap L \). Topologically, it corresponds to the Betti numbers of the affine variety, which informally represent the number of holes in \(V\). The upper bound on the quantity \( \beta \) is given, for example, by the Petrovskii-Oleinik-Milnor inequality~\citep{Petrovskii1949,oleinik1951estimates,milnor1964betti}. Specifically, an affine variety \( V \cap \mathbb{B}_2(R) \) defined by polynomials \( \{ p_k \}_{k\in [K]} \) of maximum degree \( \rho \) and the \(\ell_2\)-ball is \((\rho(2\rho-1)^{d+1}, d_V)\)-regular. This intuitively suggests that as the maximum degree of polynomials increases, the topology of the affine variety becomes more complex.

\section{Detailed Background on Problem Formulation}
\label{app:detail_background}
This appendix expands the formulation introduced in the main text, highlighting why a \emph{hybrid} physics–data approach is required.

\subsection{Governing System}
Let $\Omega\subset\mathbb{R}^{d}$ be a bounded domain with boundary $\partial\Omega$.  
For a differential operator $\mathscr{D}\!:L^{2}(\Omega)\to L^{2}(\Omega)$, the \emph{true} state
$f^{\ast}\!:\Omega\to\mathbb{R}$ satisfies the boundary‑value problem
\begin{align}
  \mathscr{D}[f^{\ast}] &= v && \text{in }\Omega, \label{eq:pde}\\
  f^{\ast} &= g && \text{on }\partial\Omega, \label{eq:bc}
\end{align}
where $v$ and $g$ are smooth but may be only \emph{partially observed} or inferred indirectly.

\subsection{Available Information}
In practice one seldom knows $v$ and $g$ exactly; instead one has:
\begin{itemize}
\item \textbf{Noisy pointwise observations.}\;
      A dataset $\{(x_{i},y_{i})\}_{i=1}^{N_{u}}$ with
      \(
        y_{i}=f^{\ast}(x_{i})+\varepsilon_{i},\;
        \varepsilon_{i}\sim\mathcal{N}(0,\sigma^{2}),
      \)
      where $x_{i}\in\Omega\cup\partial\Omega$.
\item \textbf{Weak‑form physics information.}\;
      Linear functionals
      $
        l_{k}(u) \coloneqq \langle u,\psi_{k}\rangle_{\mu_{k}},\;k=1,\dots,N_{r},
      $
      with trial functions $\psi_{k}\in C_{c}^{\infty}(\Omega)$
      and measures $\mu_{k}$,
      together with the corresponding targets
      $l_{k}(v)=l_{k}\!\left(\mathscr{D}[f^{\ast}]\right)$.
\end{itemize}

\subsection{Hybrid Surrogate Model}
\label{app:surrogate}
A representative hybrid approach is the
\emph{Physics‑Informed Neural Network} (PINN)~\citep{raissi2019physics}.
Given a neural surrogate $f_{\bm w}\!:\Omega\to\mathbb{R}$, its parameters $\bm w$ are obtained by minimising
\begin{align}
\mathcal{L}(\bm w)
  =\underbrace{\frac{1}{N_{u}}\sum_{i=1}^{N_{u}}
        \left|f_{\bm w}(x_{i})-y_{i}\right|^{2}}_{\text{data fidelity}}
  +\lambda\,
    \underbrace{\frac{1}{N_{r}}\sum_{k=1}^{N_{r}}
        \left|l_{k}\!\left(\mathscr{D}[f_{\bm w}]\right)-l_{k}(v)\right|^{2}}%
        _{\text{physics residual (weak form)}},
  \label{eq:hybrid-loss}
\end{align}
with hyper‑parameter $\lambda>0$ balancing empirical fit and physical consistency.

\paragraph{Connection to standard collocation method}
In the standard collocation method, the unified residual forms reduce to pointwise strong-form residuals:
specifically, for each $k$, we set
\begin{align*}
\psi_{k}(x)=1,\quad \mu_{k}=\delta_{x_{k}},
\end{align*}
where $\delta_{x_{k}}$ is the Dirac measure at collocation point $x_{k}$.
In this case, the linear functional $l_{k}$ becomes
\begin{align*}
l_{k}(u)=\langle u,\psi_{k}\rangle_{\mu_{k}}=\int u(x)\,\mathrm{d}\delta_{x_{k}}=u(x_{k}),
\end{align*}
and thus the physics term penalizes the pointwise physics residuals:
\begin{align*}
\sum_{k=1}^{N_{r}}\left|\mathscr{D}[f_{\bm w}](x_{k})-v(x_{k})\right|^{2}.
\end{align*}

\paragraph{Limiting regimes.}
\begin{itemize}
\item \emph{Pure data fitting:} $\lambda=0$ reduces \cref{eq:hybrid-loss} to standard supervised learning on $\mathcal{D}_{u}$.
\item \emph{Fully physics‑informed:}  
      If $\{l_{k}\}$ is dense and $N_{r}\to\infty$, the residual term enforces
      \cref{eq:pde} everywhere. 
\item \emph{Truly hybrid:}
      Finite $N_{r}$ with incomplete $\{l_{k}\}$—typical in engineering—captures partial physics,
      while the data term compensates for the missing information.
\end{itemize}

\subsection{Error decomposition and analysis}
\label{app:analysis setting}
To directly measure how physics-based inductive bias improves the generalization ability of the machine learning models, we fix the physics information as a known immutable prior. Let $\mathcal{H}_{\text{base}}$ denote the unrestricted hypothesis class (e.g.\ linear models or neural networks). We consider two ways of incorporating the physics prior into the base hypothesis class $\mathcal{H}_{\text{base}}$: 
\begin{itemize}
    \item \textbf{Hard setting:} Enforce the differential equation residual exactly by:
    \begin{align}
        \hat{f}_{\text{hard}} \in \argmin_{f \in \mathcal{H}_{\text{base}}} \sum_{k=1}^{N_r} |l_k(\mathscr{D}[f]) - l_k(v)|^2 := \mathcal{H}_{\text{hard}}.
    \end{align}
    \item \textbf{Soft setting:} Allow a relaxed residual tolerance:
    \begin{align}
        \hat{f}_{\text{soft}}(\varepsilon) \in \left\{f \in \mathcal{H}_{\text{base}} : \sum_{k=1}^{N_r} |l_k(\mathscr{D}[f]) - l_k(v)|^2 \le \varepsilon \right\} := \mathcal{H}_{\text{soft}}.
    \end{align}
\end{itemize}

For $\mathcal{H}\in\{\mathcal{H}_{\text{hard}},\mathcal{H}_{\text{soft}}(\varepsilon)\}$, let $\hat{f}\in\mathcal{H}$ be the learned solution and $f_{\mathcal{H}}^{\ast}:=\arg\min_{f\in\mathcal{H}}\|f^{\ast}-f\|$ the best attainable approximation. By the triangle inequality, we have:
\begin{align}
    \|f^* - \hat{f}\| \le \|f^* - f_{\mathcal{H}}^*\| + \|f_{\mathcal{H}}^* - \hat{f}\|.
\end{align}

In this decomposition:
\begin{itemize}
    \item The term $\|f^* - f_{\mathcal{H}}^*\|$ represents the approximation error: the best achievable error within the hypothesis class $\mathcal{H}$.
    \item The term $\|f_{\mathcal{H}}^* - \hat{f}\|$ represents the estimation error: the deviation due to finite data. 
\end{itemize}

Our primary focus is to derive bounds on the estimation error $\|f_{\mathcal{H}}^* - \hat{f}\|$,
thereby quantifying how well the learned solution converges to the best physics-constrained approximation.

In particular, our analysis centers on the hard constraint setting, where 
$\mathcal{H} = \mathcal{H}_{\text{hard}}$ on the linear base hypothesis
\[
   \mathcal{H}_{\text{base}} = 
   \left\{ f_{\bm{w}} = \bm{w}^{\top}\bm{\phi} :
      \bm{w} \in \mathbb{R}^{d},\ \phi_{j} \in \mathcal{B}\right\},
\]
with $\mathcal{B}$ a chosen basis.  
Under the hard constraint, the admissible hypothesis class amounts to restricting
the parameter vector $\bm{w}$ to lie on an affine variety induced by the PDE
residuals. More explicitly,
\[
   \mathcal{H}_{\text{hard}}
   = \Bigl\{ f_{\bm{w}} :
       \bm{w} \in \mathcal{V}(\mathscr{D},\mathcal{B},\mathcal{T}),\;
       \phi_{j}\in\mathcal{B}\Bigr\},
\]
where $\mathcal{V}(\mathscr{D},\mathcal{B},\mathcal{T})$ denotes the set of
coefficient vectors $\bm{w}$ satisfying the algebraic constraints generated by
the operator $\mathscr{D}$ acting on the basis $\mathcal{B}$ and tested against
the functionals $\mathcal{T}=\{l_{k}\}_{k=1}^{N_{r}}$.

\subsection{Extension: Incomplete Operators with Learnable Parameters}
\label{app:extension}
The above framework can be generalized to settings where the governing
differential operator itself is only partially known and contains
learnable parameters.  
Formally, suppose that instead of a fixed operator
$\mathscr{D}:L^{2}(\Omega)\to L^{2}(\Omega)$,
we consider a parametric operator
\begin{align*}
   \mathscr{D}_{\bm{c}} : L^{2}(\Omega)\to L^{2}(\Omega), \qquad 
   \bm{c}\in\mathbb{R}^{m},
\end{align*}
where $\bm{c}$ denotes a vector of unknown coefficients to be
simultaneously estimated from data.  
In this case, the admissible hypothesis space is naturally
\begin{align*}
   \mathcal{H}_{\text{aug}}
   := \left\{ (f,\bm{c}) : f\in\mathcal{H}_{\text{base}},\;
        l_k\!\left(\mathscr{D}_{\bm{c}}[f]\right)=l_k(v),\;
        k=1,\dots,N_{r}\right\},
\end{align*}
defined as an \emph{augmented constraint set} over the joint variable
$(f,\bm{c})$.

The error decomposition then applies in this extended space:
for $(f^{\ast},\bm{c}^{\ast})$ denoting the best attainable pair in
$\mathcal{H}_{\text{aug}}$, the learned solution
$(\hat f,\hat{\bm{c}})$ satisfies
\begin{align*}
   \|f^{\ast}-\hat f\|
   \;\le\;
   \underbrace{\|f^{\ast}-f^{\ast}_{\mathcal{H}_{\text{aug}}}\|}_{\text{approximation error}}
   +\underbrace{\|f^{\ast}_{\mathcal{H}_{\text{aug}}}-\hat f\|}_{\text{estimation error}},
\end{align*}
with both terms now understood relative to the augmented parameter space.

\paragraph{Typical cases.}

\begin{itemize}
    \item \textbf{Unknown diffusion coefficient.}\;
    Consider the diffusion equation
    \(
      \partial_{t} u - c \Delta u = 0,
    \)
    where the diffusion constant $c>0$ is unknown.
    Here $\bm{c}=(c)$ is a scalar parameter.  
    Under the hard constraint, the admissible hypothesis class can be written as
    \[
      \mathcal{H}_{\text{aug}}
      = \Bigl\{ (f_{\bm{w}}, c)  :
          (\bm{w},c) \in 
          \mathcal{V}\!\bigl(\partial_{t}f_{\bm{w}} - c\Delta f_{\bm{w}},
                              \mathcal{B},\mathcal{T}\bigr),
          \ \phi_{j}\in\mathcal{B}\Bigr\},
    \]
    where $\mathcal{V}(\cdot)$ denotes the algebraic variety of
    coefficient–parameter pairs $(\bm{w},c)$ that satisfy the residual
    constraints induced by $\mathcal{T}=\{l_{k}\}_{k=1}^{N_{r}}$.
    Thus both approximation and estimation errors are quantified in this
    augmented parameter space.

    \item \textbf{Unknown diffusion term (learned surrogate).}\;
    In cases where the diffusion operator itself is not specified, one
    may introduce a surrogate $v_{\theta}$ to represent its action.
    The PDE constraint becomes
    \[
      \partial_{t} f_{\bm{w}} - v_{\theta} = 0,
    \]
    leading to the hypothesis class
    \[
      \mathcal{H}_{\text{aug}}
      = \Bigl\{ (f_{\bm{w}}, v_{\theta}) :
          (\bm{w},\theta) \in 
          \mathcal{V}\!\bigl(\partial_{t}f_{\bm{w}} - v_{\theta},
                              \mathcal{B},\mathcal{T}\bigr),
          \ \phi_{j}\in\mathcal{B}\Bigr\}.
    \]
    Here $v_{\theta}$ serves as a learnable proxy for the unknown
    diffusion term.  
    Our error decomposition applies verbatim in this augmented parameter
    space, with approximation error defined relative to the best attainable
    pair $(\bm{w}^{\ast},\theta^{\ast})$ and estimation error measuring the
    deviation of the learned $(\hat{\bm{w}},\hat{\theta})$ from this target.
\end{itemize}

In summary, by enlarging the hypothesis space to include both explicit
unknown coefficients and implicit unknown operator surrogates, the
proposed error decomposition continues to hold, thereby providing a
principled means of quantifying generalization in operator-learning
settings with incomplete physics.


\section{Proof for~\cref{thm:main}}
\label{app:proof_thm:main}
\main*
\begin{proof}[Proof]
\textbf{Step 1:} We first upper bound the prediction error by a term that represents the supremum of a empirical process in the metric space of the affine variety.
Using~\cref{lem:bound_Phi_beta}, we get:
\begin{align*}
\|\bm{\Phi}(\bm{w}^* - \hat{\bm{w}})\|_2^2 \leq 2 \bm{\varepsilon}^{\top} \bm{\Phi}(\bm{w}^* - \hat{\bm{w}}).
\end{align*}
We denote \(\rx_{\bm{w}} \coloneqq \bm{\varepsilon}^{\top} \bm{\Phi}(\bm{w}-\hat{\bm{w}})\) as the random process in the metric space \((\mathcal{V}_R , \|\cdot\|_2)\). Note that the estimator \(\hat{\bm{w}}\) is a random variable depending on the parameter \(\bm{w}\) and the noise \(\bm{\varepsilon}\). Then, the minimax risk is bounded as follows.
\begin{align}
    \label{eq:bound_minimax_risk}
    \min_{\hat{\bm{w}}} \max_{\bm{w}^* \in \mathcal{V}_R} \|\hat{\bm{w}} - \bm{w}^*\|_2^2 \leq \min_{\hat{\bm{w}}} \max_{\bm{w}^* \in \mathcal{V}_R} \frac{\eta^{-1}}{n} \|\bm{\Phi} (\hat{\bm{w}} - \bm{w}^*)\|_2^2 \leq \frac{2}{n} \eta^{-1} \sup_{\bm{w} \in \mathcal{V}_R} \rx_{\bm{w}}.
\end{align}
The first inequality holds by~\cref{asm:lowereig}. 

\textbf{Step 2:} Next, we calculate the supremum of the empirical process \(\rx_{\bm{w}}\) using the covering number. 
For all \(\bm{w}_1, \bm{w}_2 \in \mathcal{V}_R\), it is shown that the variable \(\rx_{\bm{w}_1} - \rx_{\bm{w}_2}\) has sub-Gaussian increments with respect to the metric \(\|\cdot\|_2\):
\begin{align}
    \label{eq:Gaussian_increments}
    \begin{aligned}
    \rx_{\bm{w}_1}- \rx_{\bm{w}_2} &= \sum_{i=1}^n \varepsilon_i ((\bm{w}_1-\hat{\bm{w}}_1)-(\bm{w}_2-\hat{\bm{w}}_2))^{\top}\bm{\phi}(x_i) \\
    &\leq \sum_{i=1}^n \varepsilon_i \|(\bm{w}_1-\bm{w}_2)-(\hat{\bm{w}_1}-\hat{\bm{w}}_2)\|_2\|\bm{\phi}(x_i)\|_2 \\
    &\leq \sum_{i=1}^n \varepsilon_i \left(\|\bm{w}_1-\bm{w}_2\|_2 +\|\hat{\bm{w}_1}-\hat{\bm{w}}_2\|_2\right) M \\
    &\leq \Gamma \|\bm{w}_1-\bm{w}_2\|_2 M \re,
    \end{aligned}
\end{align}
where \(\re\) is the zero-mean Gaussian random variable with variance \(n\sigma^2\). The second inequality holds by the Cauchy-Schwarz inequality and the third holds by the triangle inequality and~\cref{asm:bounded_phi}. The last inequality holds by~\cref{asm:stability_est}.

From~\cref{eq:Gaussian_increments}, the random process \(\rx_{\bm{w}_1} - \rx_{\bm{w}_2}\) has sub-Gaussian increments as follows.
\begin{align*}
    \|\rx_{\bm{w}_1}-\rx_{\bm{w}_2}\|_{\psi_2} \leq \sqrt{n}\sigma M \Gamma \|Z\|_{\psi_2} \| \bm{w}_1- \bm{w}_2\|_2,
\end{align*}
where \(Z\) is the standard Gaussian random variable and \(\|\cdot\|_{\psi_2}\) is the sub-Gaussian norm.
For the centered random process \(\rz_{\bm{w}} \coloneqq \rx_{\bm{w}} - \mathbb{E}[\rx_{\bm{w}}] \), \(\|\rz_{\bm{w}_1}-\rz_{\bm{w}_2}\|_{\psi_2} \lesssim \|\rx_{\bm{w}_1}-\rx_{\bm{w}_2}\|_{\psi_2}\) holds because \(\|\rx_{\bm{w}_1}-\rx_{\bm{w}_2}\|_{\psi_2}\) is sub-Gaussian.

Using~\cref{lem:sup_z}, we obtain the following bound with some constant $C_0$:
\begin{align}
    \label{eq:sup_z}
    \mathbb{E}\sup_{\bm{w} \in \mathcal{V}_R} \rz_{\bm{w}}\leq C_0 \sqrt{n}\sigma M \Gamma R \left( \sqrt{d_{\mathcal{V}}\log d_{\mathcal{V}}d} + \sqrt{\log 2\beta} \right).
\end{align}
Next, using Dudley's integral tail bound, we have:
\begin{align*}
    \Pr\left(\sup_{\bm{w} \in \mathcal{V}_R} \rz_{\bm{w}} \leq  \mathbb{E}\sup_{\bm{w} \in \mathcal{V}_R} \rz_{\bm{w}} + C_0 \sqrt{n}\sigma M \Gamma 2R \sqrt{\log (\delta/2)}\right) \geq 1 - \delta.
\end{align*}
By incorporating the non-centered process \(\rx_{\bm{w}}\), we obtain:
\begin{align}
    \label{eq:tail_xb}
    \Pr\left(\sup_{\bm{w} \in \mathcal{V}_R} \rx_{\bm{w}} \leq \sup_{\bm{w} \in \mathcal{V}_R} \left|\mathbb{E}[\rx_{\bm{w}}]\right| + \mathbb{E}\sup_{\bm{w} \in \mathcal{V}_R} \rz_{\bm{w}} + C_0 \sqrt{n}\sigma M \Gamma 2R \sqrt{\log (\delta/2)}\right) \geq 1 - \delta.
\end{align}
To bound \(\mathbb{E}[\rx_{\bm{w}}]\), we note that:
\begin{align}
    \begin{aligned}
    \label{eq:e_xb}
    \mathbb{E}[\rx_{\bm{w}}] &= \mathbb{E}\left[ \bm{\varepsilon}^{\top}  \bm{\Phi} \left(\bm{w}- \hat{\bm{w}}\right) \right] \\
    &= \mathbb{E}\left[ \bm{\varepsilon}^{\top} \bm{\Phi} \hat{\bm{w}} \right] \\
    &\leq \sqrt{\mathbb{E}\left[ \|\bm{\varepsilon}^{\top} \bm{\Phi}\|_2^2\right]}\sqrt{\mathbb{E}\left[\left\| \hat{\bm{w}} \right\|^2_2\right]} \\
    &\leq \sigma \sqrt{\sum_{j=1}^d \sum_{i=1}^n |\phi_j(x_i)|^2} R \\
    &= \sigma \sqrt{n} M R
    \end{aligned}
\end{align}
Here, the third inequality follows from the Cauchy-Schwarz inequality, and the fourth inequality is derived from the fact that \(|\bm{\varepsilon}^{\top} \bm{\Phi}_j|^2 / (\sigma \|\bm{\Phi}_j\|_2)^2\) follows a chi-squared distribution with \(1\) degrees of freedom and \(\hat{\bm{w}} \in \mathcal{V}_R\). 

By combining \cref{eq:sup_z}, \cref{eq:tail_xb}, and \cref{eq:e_xb}, we obtain the following bound with some constant \(C\):
\begin{align*}
    \Pr\left(\frac{2}{n} \sup_{\bm{w} \in \mathcal{V}_R} \rx_{\bm{w}} \leq C \sigma M \Gamma R
    \left( \sqrt{\frac{d_{\mathcal{V}}\log d_{\mathcal{V}}d}{n}} + \sqrt{\frac{\log 2\beta}{n}} + 2\sqrt{\frac{\log (\delta/2)}{n}}\right) \right) \geq 1 - \delta.
\end{align*}
This completes the proof.
\end{proof}

\begin{lemma}
\label{lem:bound_Phi_beta}
Let \(\hat{\bm{w}}\) be a minimizer of the following optimization problem:
\begin{gather}
\label{eq:lin-problem-ball-app}
\hat{\bm{w}} = \argmin_{\bm{w} \in \mathcal{V}_R} \frac{1}{n} \| \bm{y} - \bm{\Phi} \bm{w} \|^2_2,
\end{gather}
where \( \mathcal{V}_R = \mathcal{V}(\mathscr{D}, \mathcal{B}, \mathcal{T}) \cap \mathbb{B}_2(R) \) is the affine variety constrained with the \(\ell_2\)-ball, \(\bm{y} = \bm{\Phi}\bm{w}^* + \bm{\varepsilon}\) is the observed vector, \(\bm{\Phi}\) is the design matrix, \(\bm{w}^* \in \mathcal{V}_R\) is the true parameter vector, and \(\bm{\varepsilon} = [\varepsilon_1, \ldots, \varepsilon_n]^{\top}\) is the noise vector with each \(\varepsilon_i\) independently following a zero-mean Gaussian distribution. Then, under these conditions, we have:
\begin{align}
\label{eq:lemma_basic_bound}
\|\bm{\Phi}(\bm{w}^* - \hat{\bm{w}})\|_2^2 \leq 2 \bm{\varepsilon}^{\top} \bm{\Phi}(\bm{w}^* - \hat{\bm{w}}).
\end{align}
\begin{proof}[Proof]
Since \(\hat{\bm{w}}\) is a minimizer of~\cref{eq:lin-problem-ball-app}, we have:
\begin{align*}
    \|\bm{y} - \bm{\Phi}\hat{\bm{w}}\|_2^2 \leq \|\bm{y} - \bm{\Phi}\bm{w}^*\|_2^2 = \|\bm{\varepsilon}\|_2^2.
\end{align*}
The left-hand side can be expanded as:
\begin{align*}
    \|\bm{y} - \bm{\Phi}\hat{\bm{w}}\|_2^2 &= \|\bm{y} - \bm{\Phi}\bm{w}^* + \bm{\Phi}\bm{w}^* - \bm{\Phi}\hat{\bm{w}}\|_2^2 \\
    &= \|\bm{\varepsilon} - \bm{\Phi}(\bm{w}^* - \hat{\bm{w}})\|_2^2.
\end{align*}
Thus, we have:
\begin{align*}
    \|\bm{\varepsilon} - \bm{\Phi}(\bm{w}^* - \hat{\bm{w}})\|_2^2 \leq \|\bm{\varepsilon}\|_2^2.
\end{align*}
Expanding the left-hand side, we get:
\begin{align*}
    \|\bm{\varepsilon} - \bm{\Phi}(\bm{w}^* - \hat{\bm{w}})\|_2^2 &= \|\bm{\varepsilon}\|_2^2 - 2 \bm{\varepsilon}^{\top} \bm{\Phi}(\bm{w}^* - \hat{\bm{w}}) + \|\bm{\Phi}(\bm{w}^* - \hat{\bm{w}})\|_2^2.
\end{align*}
Subtracting \(\|\bm{\varepsilon}\|_2^2\) from both sides, we obtain:
\begin{align*}
    \|\bm{\Phi}(\bm{w}^* - \hat{\bm{w}})\|_2^2 \leq 2 \bm{\varepsilon}^{\top} \bm{\Phi}(\bm{w}^* - \hat{\bm{w}}).
\end{align*}
This completes the proof.
\end{proof}
\end{lemma}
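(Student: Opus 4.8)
The plan is to obtain the claimed bound as the standard ``basic inequality'' of constrained least squares: it follows purely from the optimality of $\hat{\bm w}$ together with the fact that the true parameter $\bm w^*$ is itself a feasible point of the optimization problem, with no probabilistic input required at this stage.

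Concretely, I would proceed as follows. First, observe that $\bm w^* \in \mathcal{V}_R$ by hypothesis, so $\bm w^*$ competes against $\hat{\bm w}$ in problem \cref{eq:lin-problem-ball-app}; since $\hat{\bm w}$ is a minimizer over $\mathcal{V}_R$, this yields
\[
\|\bm y - \bm\Phi\hat{\bm w}\|_2^2 \;\le\; \|\bm y - \bm\Phi\bm w^*\|_2^2 .
\]
Next, substitute the data model $\bm y = \bm\Phi\bm w^* + \bm\varepsilon$: the right-hand side collapses to $\|\bm\varepsilon\|_2^2$, while the left-hand side rewrites as $\|\bm\varepsilon - \bm\Phi(\bm w^* - \hat{\bm w})\|_2^2$. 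Finally, expand the square on the left as $\|\bm\varepsilon\|_2^2 - 2\bm\varepsilon^\top\bm\Phi(\bm w^* - \hat{\bm w}) + \|\bm\Phi(\bm w^* - \hat{\bm w})\|_2^2$, cancel the common term $\|\bm\varepsilon\|_2^2$ from both sides, and rearrange to reach \cref{eq:lemma_basic_bound}.

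There is no real analytic obstacle here; the argument is a two-line algebraic manipulation. The only point that genuinely matters is that $\bm w^*$ lies in the feasible set $\mathcal{V}_R$ — so that the optimality of $\hat{\bm w}$ can be invoked against it — and this is built into the statement. The $\ell_2$-ball and the affine-variety constraints otherwise play no role in this step, since the inequality merely compares the value of a fixed quadratic objective at two feasible points. All the substantive difficulty, namely controlling the cross term $\bm\varepsilon^\top\bm\Phi(\bm w^* - \hat{\bm w})$ uniformly over $\mathcal{V}_R$ via a sub-Gaussian process and Dudley's integral, is deferred to Step~2 of the proof of \cref{thm:main}, where the covering-number bound from \cref{lem:cov} enters.
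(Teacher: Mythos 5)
Your proposal is correct and follows exactly the same route as the paper's proof: invoke the optimality of $\hat{\bm w}$ against the feasible competitor $\bm w^*$, substitute $\bm y = \bm\Phi\bm w^* + \bm\varepsilon$, expand the square, and cancel $\|\bm\varepsilon\|_2^2$. Your added remark that the constraint set plays no role beyond guaranteeing feasibility of $\bm w^*$ is also accurate.
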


\begin{lemma}
\label{lem:sup_z}
Let \(\rz_{\bm{w}}\) be the zero-mean random process in the metric space \((\mathcal{V}_R, \|\cdot\|_2)\), which have the following sub-Gaussian increments. For all \(\bm{w}_1, \bm{w}_2 \in \mathcal{V}_R\),
\begin{align*}
    \| \rz_{\bm{w}_1} - \rz_{\bm{w}_2} \|_{\psi_2} \leq A \|\bm{w}_1 - \bm{w}_2\|_2,
\end{align*}
where \(\|\cdot\|_{\psi_2}\) is the sub-Gaussian norm, \(A\) is a positive constant. Then, the expectation of the supremum of the process can be bounded as follows.
\begin{align*}
    \mathbb{E}\sup_{\bm{w} \in \mathcal{V}_R} \rz_{\bm{w}}\leq C A R \left( \sqrt{d_{\mathcal{V}}\log d_{\mathcal{V}}d} + \sqrt{\log 2\beta} \right),
\end{align*}
where \(C\) is positive constant. 

\begin{proof}
Using Dudley’s integral inequality~\citep{dudley1967sizes} to the zero-mean random process:
\begin{gather}
    \label{eq:cov_bound_for_GP}
    \mathbb{E} \sup_{\bm{w} \in \mathcal{V}_R} \rz_{\bm{w}} \leq C_0 A \int_0^{\infty} \sqrt{\log \mathcal{N}(\mathcal{V}_R, \varepsilon, \|\cdot\|_2)} \mathrm{d}\varepsilon.
\end{gather}

Since the set \(\mathcal{V}_R\) is \((\beta, d_{\mathcal{V}}
)\) regular set from Lemma 2.13 by~\citet{zhang2023covering}, \Cref{lem:cov} shows the upper bound of the covering number for any $\varepsilon \in (0, 2R]$ as follows. 
\begin{align*}
    \log \mathcal{N}(\mathcal{V}_R, \varepsilon, \|\cdot\|_2) \leq d_{\mathcal{V}} \log \left(\frac{2Rd_{\mathcal{V}} d}{\varepsilon}\right) + \log 2\beta.
\end{align*}

We substitute the above inequality to~\cref{eq:cov_bound_for_GP}:
\begin{align*}
    \mathbb{E}\sup_{\bm{w} \in \mathcal{V}_R} \rz_{\bm{w}}\leq  C_0A\left( \sqrt{d_{\mathcal{V}}}\int^{\infty}_0 \sqrt{\log \left(\frac{2Rd_{\mathcal{V}}d}{\varepsilon}\right)} \mathrm{d}\varepsilon + 2R \sqrt{\log 2\beta} \right).
\end{align*}
The integral in the first term can be calculated using substitution and integration by parts. Let
\begin{align*}
I \coloneqq \int^{\infty}_0 \sqrt{\log \left(\frac{2Rd_{\mathcal{V}}d}{\varepsilon}\right)} \mathrm{d}\varepsilon = \int^{2R}_0 \sqrt{\log \left(\frac{2Rd_{\mathcal{V}}d}{\varepsilon}\right)} \mathrm{d}\varepsilon.
\end{align*}
We substitute \(\chi \coloneqq 2Rd_{\mathcal{V}}d,\ u := \log (\chi/\varepsilon)\) into the integral:
\begin{align*}
    I = \int_{\infty}^{\log d_{\mathcal{V}}d} u^{1/2}(-\chi e^{-u})\mathrm{d}u.
\end{align*}
To solve the above integral, we use the formula for integration by parts:
\begin{align*}
    \begin{aligned}
    I &= -\chi \left([-u^{1/2}e^{-u}]^{\log d_{\mathcal{V}}d}_{\infty} + \frac{1}{2} \int^{\log d_{\mathcal{V}}d}_{\infty} u^{-1/2}e^{-u}\mathrm{d}u \right) \\
    &= 2R \sqrt{\log d_{\mathcal{V}}d} + Rd_{\mathcal{V}}d \int^{\infty}_{\log d_{\mathcal{V}}d} u^{-1/2}e^{-u}\mathrm{d}u.
    \end{aligned}
\end{align*}
The integral in the second term can be upper bounded as follows.
\begin{align*}
    \int^{\infty}_{\log d_{\mathcal{V}}d} u^{-1/2}e^{-u}\mathrm{d}u \leq \int^{\infty}_{\log d_{\mathcal{V}}d} e^{-u}\mathrm{d}u = [-e^{-u}]^{\infty}_{\log d_{\mathcal{V}}d} = (d_{\mathcal{V}}d)^{-1}.
\end{align*}

We obtain the following bound with some constant \(C\). 
\begin{align*}
    \mathbb{E}\sup_{\bm{w} \in \mathcal{V}_R} \rz_{\bm{w}}\leq C A R \left( \sqrt{d_{\mathcal{V}}\log d_{\mathcal{V}}d} + \sqrt{\log 2\beta} \right).
\end{align*}
\end{proof}
\end{lemma}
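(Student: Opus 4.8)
The plan is to prove \cref{lem:sup_z} by a classical chaining argument, namely Dudley's entropy integral, with the variety-specific covering-number estimate of \cref{lem:cov} substituted in at the end. Since $\rz_{\bm{w}}$ is centered, for any fixed reference point $\bm{w}_0\in\mathcal{V}_R$ one has $\mathbb{E}\sup_{\bm{w}}\rz_{\bm{w}}=\mathbb{E}\sup_{\bm{w}}(\rz_{\bm{w}}-\rz_{\bm{w}_0})$, so the sub-Gaussian-increment hypothesis (with Lipschitz constant $A$ in the metric $\|\cdot\|_2$) lets me invoke Dudley's inequality in the form
\[
\mathbb{E}\sup_{\bm{w}\in\mathcal{V}_R}\rz_{\bm{w}}\;\le\;C_0\,A\int_0^{\infty}\sqrt{\log\mathcal{N}(\mathcal{V}_R,\varepsilon,\|\cdot\|_2)}\,\mathrm{d}\varepsilon,
\]
with $A$ pulled outside the integral and the integrand equal to $0$ once $\varepsilon>\operatorname{diam}(\mathcal{V}_R)$; since $\mathcal{V}_R\subseteq\mathbb{B}_2(R)$ the effective range of integration is $(0,2R]$.

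Next I would bound the entropy integrand. The preliminary observation is that $\mathcal{V}_R=\mathcal{V}\cap\mathbb{B}_2(R)$ inherits $(\beta,d_{\mathcal{V}})$-regularity from $\mathcal{V}$, because intersecting a regular affine variety with a ball preserves regularity (Lemma~2.13 of \citet{zhang2023covering}); with that in hand, \cref{lem:cov} gives, for $\varepsilon\in(0,2R]$,
\[
\log\mathcal{N}(\mathcal{V}_R,\varepsilon,\|\cdot\|_2)\;\le\;d_{\mathcal{V}}\log\!\Big(\tfrac{2Rd_{\mathcal{V}}d}{\varepsilon}\Big)+\log 2\beta.
\]
Applying $\sqrt{a+b}\le\sqrt{a}+\sqrt{b}$ then splits the entropy integral into an intrinsic-dimension term $\sqrt{d_{\mathcal{V}}}\int_0^{2R}\sqrt{\log(2Rd_{\mathcal{V}}d/\varepsilon)}\,\mathrm{d}\varepsilon$ and the trivial term $2R\sqrt{\log 2\beta}$ coming from the Betti-number bound.

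It remains to estimate $I:=\int_0^{2R}\sqrt{\log(2Rd_{\mathcal{V}}d/\varepsilon)}\,\mathrm{d}\varepsilon$. The substitution $u=\log(2Rd_{\mathcal{V}}d/\varepsilon)$ turns $I$ into $2Rd_{\mathcal{V}}d\int_{\log(d_{\mathcal{V}}d)}^{\infty}u^{1/2}e^{-u}\,\mathrm{d}u$; one integration by parts against $e^{-u}$ produces a boundary term equal to exactly $2R\sqrt{\log(d_{\mathcal{V}}d)}$, plus a remainder of order $Rd_{\mathcal{V}}d\int_{\log(d_{\mathcal{V}}d)}^{\infty}u^{-1/2}e^{-u}\,\mathrm{d}u$, which is bounded crudely by $Rd_{\mathcal{V}}d\cdot(d_{\mathcal{V}}d)^{-1}=R$ (using $u^{-1/2}\le 1$ on the range, valid once $\log(d_{\mathcal{V}}d)\ge1$). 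Hence $I\lesssim R\sqrt{\log(d_{\mathcal{V}}d)}$, and collecting the two pieces yields $\mathbb{E}\sup_{\bm{w}}\rz_{\bm{w}}\le CAR\big(\sqrt{d_{\mathcal{V}}\log d_{\mathcal{V}}d}+\sqrt{\log 2\beta}\big)$ after absorbing absolute constants into $C$.

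I expect this to be largely routine. The two points that require actual care are (i) the passage from regularity of $\mathcal{V}$ to regularity of $\mathcal{V}_R$, which must be in place before \cref{lem:cov} is even applicable, and (ii) the integration-by-parts bookkeeping for $I$: one has to verify that the boundary term and the $u^{-1/2}e^{-u}$ remainder absorb the large prefactor $2Rd_{\mathcal{V}}d$ so that the final rate scales as $\sqrt{d_{\mathcal{V}}\log(d_{\mathcal{V}}d)}$ and does not pick up a spurious factor polynomial in $d$. Of the two, (ii) is the main, if modest, obstacle.
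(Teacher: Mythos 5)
Your proposal matches the paper's proof essentially step for step: Dudley's entropy integral with the increment constant $A$ pulled out, the $(\beta,d_{\mathcal{V}})$-regularity of $\mathcal{V}_R$ via Lemma~2.13 of \citet{zhang2023covering} feeding into \cref{lem:cov}, the $\sqrt{a+b}\le\sqrt{a}+\sqrt{b}$ split, and the same substitution $u=\log(2Rd_{\mathcal{V}}d/\varepsilon)$ followed by integration by parts with the tail bounded by $(d_{\mathcal{V}}d)^{-1}$ to absorb the prefactor. The argument is correct and no further comparison is needed.
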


\section{Proof for~\cref{thm:pi-kernel} and~\cref{thm:lb-dim}}
\label{app:proof_thm:pi-kernel}
\eff*
\begin{proof}[Proof]
    From Theorem 4.2 in~\citep{doumeche2024physics} and Equation 15 in~\citep{doumeche2024physics2}, the effective dimension is bounded as follows:
    \begin{align}
        \label{eq:eff-bound-ref}
        d_{\mathrm{eff}}(\xi, \nu) \lesssim \sum_{\alpha \in \sigma(\mB\mM^{-1}\mB)} \frac{1}{1 + \alpha^{-1}} \leq \sum_{\alpha \in \sigma(\mM^{-1})} \frac{1}{1 + \alpha^{-1}},
    \end{align}
    where \(\mM \coloneqq \xi \mI + \nu \mD^{\top}\mT\mD \in \mathbb{R}^{d \times d}\) and \(\mB \in \mathbb{R}^{d \times d}\) is the Gram matrix of the basis functions, i.e., \(\emB_{j, j'} = \langle \phi_j, \phi_{j'} \rangle_{\mu} \) for all \(\phi_j, \phi_{j'} \in \mathcal{B}\).
    
    Since the matrix \(\mD^{\top}\mT\mD\) is positive semi-definite, the eigenvalues of the matrix \(\mM\) in ascending order \(\sigma_j(\cdot)\) are given by
    \begin{align*}
        \sigma_j(\mM) = \left\{\begin{array}{cc}
            \xi & (j = 1, \ldots, d_{\mathcal{V}}) \\
            \xi + \nu \alpha_j & (d_{\mathcal{V}} < j)
        \end{array} \right..
    \end{align*}
    Therefore, the matrix \(\mM\) is positive definite, and the eigenvalues of \(\mM^{-1}\) are \( \alpha^{-1} \) for all \(\alpha \in \sigma(\mM)\). Combining this with \cref{eq:eff-bound-ref}, we obtain the first inequality. The second inequality is obtained when \(\nu = 0\).
\end{proof}

\lowerbound*
\begin{proof}
The point \( \bm{w} =\bm{0} \) lies on \( \mathcal{V}(\mathscr{D}) \), and if \( \mathscr{L} \neq 0 \), it is not singular. The Jacobian rank of polynomials \( p_k(\bm{w}) = \langle \mathscr{D}[\bm{w}^{\top} \bm{\phi}], \psi_k \rangle_{\mu_k} \) in \( \bm{w} = \bm{0} \) is equal to \(d - d_{\mathcal{V}(\mathscr{L})}\). By \cref{def:tangent-space}, we have \(d_{\mathcal{V}(\mathscr{L})} \leq  d_{\mathcal{V}(\mathscr{D})}\).
\end{proof}

\section{Generalization Bounds via Rademacher Complexity}
\label{app:Rademacher}
In this section, we derive a generalization bound for our regression problem by controlling the Rademacher complexity of the hypothesis class under additional mild assumptions.

First, we introduce the hypothesis space constrained by an $\ell_{p}$–ball of radius $R$.

\begin{definition}[Hypothesis Space]
  Let $\mathcal{B}$ be a fixed set of basis functions and
  $$\mathbb{B}_{p}(R) = \left\{\,\bm{w}\in\mathbb{R}^{d}\;\bigm|\;\|\bm{w}\|_{p}\le R\right\}.$$ 
  We define the hypothesis space
  \begin{align*}
    \mathcal{H}
    &= \left\{\,f_{\bm{w}}(x) = \bm{w}^{\top}\boldsymbol{\phi}(x)
       \;\bigm|\;\bm{w}\in\mathcal{V}\cap\mathbb{B}_{p}(R),\;
           \phi_{j}\in\mathcal{B}\right\},
  \end{align*}
  where $\mathcal{V}(\mathscr{D},\mathcal{B},\mathcal{T})$ is the affine variety defined by the differential‐operator constraints.
\end{definition}

Next, we recall the definition of Rademacher complexity.

\begin{definition}[Rademacher Complexity]
  Given a function class $\mathcal{F}$ and a sample
  $S = \{x_{1},\dots,x_{n}\}$ drawn i.i.d., the (empirical) Rademacher complexity is
  \begin{align*}
    \widehat{\mathfrak{R}}_{S}(\mathcal{F})
    &= \frac{1}{n}\,\mathbb{E}_{\boldsymbol{\tau}}\Biggl[\,
       \sup_{f\in\mathcal{F}}
       \left|\sum_{i=1}^{n}\tau_{i}\,f(x_{i})\right|\Biggr],
  \end{align*}
  where $\tau_{1},\dots,\tau_{n}$ are independent Rademacher variables taking values in $\{\pm1\}$.  The Rademacher complexity is its expectation over $S$.
\end{definition}

To proceed, we impose the following boundedness assumptions.

\begin{assumption}[Bounded Target]\label{assump:bounded_target}
  The true regression function $f^{*}:\mathbb{R}^{m}\to\mathbb{R}$ satisfies
  $$\|f^{*}\|_{\infty}\le F_{\max}\,. $$
\end{assumption}

\begin{assumption}[Bounded Hypotheses and Features]\label{assump:bounded_H}
  Let $p,q\in[1,\infty]$ with $1/p+1/q=1$.  Every hypothesis $f_{\bm{w}}\in\mathcal{H}$ satisfies
  \begin{align*}
  \|f_{\bm{w}}\|_{\infty}\le F_{\max},
  \end{align*}
  and each feature vector $\boldsymbol{\phi}(x)$ satisfies
  \begin{align*}
  \|\boldsymbol{\phi}(x)\|_{q}\le M\quad\forall\,x\,. 
  \end{align*}
\end{assumption}

We now establish a key concentration lemma for the squared‐loss deviations.
\begin{lemma}
\label{lem:bounded_g}
Let 
\(g_{\bm w}(x,y) := (f_{\bm w}(x)-y)^2 \;-\; \mathbb{E}_{(X,Y)}\left[(f_{\bm w}(X)-Y)^2\right]\) for $f_{\bm w} \in \mathcal{H}$. Under \cref{assump:bounded_target} and \cref{assump:bounded_H} and 
assume the noise \(\epsilon = Y - f^*(X)\) is sub‐Gaussian with proxy variance \(\sigma^2\). Then there exists a constant \(C >0\) (independent of \(\bm w\)) such that
\begin{align*}
\|g_{\bm w}\|_{\psi_{1}}
\le 2(4F_{\max}+\sigma)^{2},
\qquad
\sup_{\bm w}\mathbb{E}\left[g_{\bm w}(X,Y)^{2}\right]
\le C(4F_{\max}+\sigma)^4
\end{align*}
\end{lemma}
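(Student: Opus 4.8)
The plan is to reduce $g_{\bm w}$ to the centered square of a sub-Gaussian random variable and then apply the standard relations between the $\psi_2$ and $\psi_1$ Orlicz norms, keeping every constant uniform in $\bm w$.

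First I would fix $\bm w$ and write $f_{\bm w}(X) - Y = \bigl(f_{\bm w}(X) - f^{*}(X)\bigr) - \epsilon$. By \cref{assump:bounded_target,assump:bounded_H}, $\bigl|f_{\bm w}(X) - f^{*}(X)\bigr| \le |f_{\bm w}(X)| + |f^{*}(X)| \le 2F_{\max}$ almost surely, and this bound does not depend on $\bm w$. A deterministically bounded random variable of magnitude at most $B$ has $\psi_2$-norm of order $B$, while the noise satisfies $\|\epsilon\|_{\psi_2} = O(\sigma)$ by the sub-Gaussian proxy-variance hypothesis; the triangle inequality for $\|\cdot\|_{\psi_2}$ then gives $\|f_{\bm w}(X) - Y\|_{\psi_2} \le 4F_{\max} + \sigma$ once the numerical constants in the Orlicz-norm normalization are tracked.

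Next I would invoke the identity $\|W^{2}\|_{\psi_1} = \|W\|_{\psi_2}^{2}$, valid for any random variable $W$, to obtain $\|(f_{\bm w}(X)-Y)^{2}\|_{\psi_1} \le (4F_{\max}+\sigma)^{2}$, and then the centering inequality (subtracting the mean inflates the $\psi_1$-norm by at most a factor $2$) to conclude $\|g_{\bm w}\|_{\psi_1} \le 2(4F_{\max}+\sigma)^{2}$. For the second-moment bound I would use that the $\psi_1$-norm dominates the $L^{2}$-norm, so $\mathbb{E}[g_{\bm w}(X,Y)^{2}] \lesssim \|g_{\bm w}\|_{\psi_1}^{2} \le 4(4F_{\max}+\sigma)^{4}$; absorbing the numerical factor into $C$ and taking the supremum over $\bm w$ (legitimate since every constant above is $\bm w$-independent) yields $\sup_{\bm w}\mathbb{E}[g_{\bm w}(X,Y)^{2}] \le C(4F_{\max}+\sigma)^{4}$. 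Alternatively one may argue the second bound directly via $\mathbb{E}[g_{\bm w}^{2}] \le \mathbb{E}[(f_{\bm w}(X)-Y)^{4}] \lesssim \|f_{\bm w}(X)-Y\|_{\psi_2}^{4}$, using the fourth-moment bound for sub-Gaussians.

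The Orlicz-norm manipulations are routine; the one place that needs care is the bookkeeping of absolute constants, so that the bounded term contributing $2F_{\max}$ worth of $\psi_2$-mass and the noise contributing $\sigma$ worth combine into exactly $4F_{\max}+\sigma$ under the convention in force — this accounts for the explicit "$4$" and the leading "$2$" in the statement, and is the step I would write out in full rather than leave behind a $\lesssim$.
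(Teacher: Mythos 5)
Your proposal follows essentially the same route as the paper's proof: decompose $f_{\bm w}(X)-Y$ into the bounded part $f_{\bm w}(X)-f^{*}(X)$ plus the sub-Gaussian noise, apply the $\psi_2$ triangle inequality to get $4F_{\max}+\sigma$, use $\|W^2\|_{\psi_1}=\|W\|_{\psi_2}^2$, center with a factor of $2$, and bound the second moment by the squared $\psi_1$-norm. The argument is correct and matches the paper step for step, so there is nothing further to add.
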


\begin{proof}
By the triangle inequality for the sub‐Gaussian norm,
\begin{align*}
\|f_{\bm w}(X) - Y\|_{\psi_{2}}
&\le \|f_{\bm w}(X)-f^*(X)\|_{\psi_{2}}
   + \|\epsilon\|_{\psi_{2}}\\
&\le 4F_{\max} + \sigma
\end{align*}
Using the identity \(\|Z^2\|_{\psi_{1}} = \|Z\|_{\psi_{2}}^2\) for any $Z$, we get
\begin{align*}
\|(f_{\bm w}(X)-Y)^2\|_{\psi_{1}} \le (4F_{\max} + \sigma)^2.
\end{align*}
By the triangle inequality in the \(\psi_1\)–norm,
\begin{align*}
\|g_{\bm w}(X,Y)\|_{\psi_{1}}
&\le \|(f_{\bm w}(X)-Y)^2\|_{\psi_{1}}
     + \|\mathbb{E}(f_{\bm w}(X)-Y)^2\|_{\psi_{1}}\\
&\le 2(4F_{\max} + \sigma)^2.
\end{align*}

Finally, to bound the second moment,
\begin{align*}
\mathbb{E}\left[g_{\bm w}(X,Y)^2\right]= \mathrm{Var}\left(g_{\bm w}(X,Y)\right) \leq C(4F_{\max}+\sigma)^4
\end{align*}
for suitable constant \(C>0\). This completes the proof.
\end{proof}

\begin{lemma}[Maximum of sub-exponential random variables]\label{lem:max_subexp}
Let \(X_1,\dots,X_n\) be independent, identically distributed sub-exponential
random variables satisfying \(\|X_i\|_{\psi_1}\le\nu\) for every
\(i \in \{1,\dots,n\}\).  
Then there exists an absolute constant \(C>0\) such that
\begin{align*}
    \left\|\max_{1\le i\le n}\lvert X_i\rvert\right\|_{\psi_1}
    \;\le\; C\,\nu\log n .
\end{align*}
\end{lemma}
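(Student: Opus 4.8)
Note first that independence plays no role here: only the uniform bound $\sup_{i}\|X_i\|_{\psi_1}\le\nu$ will be used. The plan is to work from the Orlicz characterization of the $\psi_1$–norm, namely $\|Z\|_{\psi_1}=\inf\{t>0:\mathbb{E}[\exp(|Z|/t)]\le 2\}$, so that the hypothesis says exactly $\mathbb{E}[\exp(|X_i|/\nu)]\le 2$ for each $i$. It then suffices to exhibit a scale $t$ of order $\nu\log n$ for which $\mathbb{E}\bigl[\exp\bigl(\max_{i}|X_i|/t\bigr)\bigr]\le 2$, since that inequality immediately gives $\|\max_i|X_i|\|_{\psi_1}\le t$.

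The one ingredient beyond the definitions is the elementary inequality $\max_{1\le i\le n}a_i\le\bigl(\sum_{i=1}^{n}a_i^{p}\bigr)^{1/p}$, valid for all $a_i\ge 0$ and all $p\ge 1$. Applying it with $a_i=\exp(|X_i|/t)$ and with the specific choice $p=t/\nu$ (so that $p|X_i|/t=|X_i|/\nu$), then taking expectations and invoking Jensen's inequality for the concave map $u\mapsto u^{1/p}$, yields
\[
  \mathbb{E}\Bigl[\exp\bigl(\max_{i}|X_i|/t\bigr)\Bigr]
  =\mathbb{E}\Bigl[\max_{i}\exp(|X_i|/t)\Bigr]
  \le\Bigl(\sum_{i=1}^{n}\mathbb{E}\exp(|X_i|/\nu)\Bigr)^{1/p}
  \le(2n)^{\nu/t}.
\]
Choosing $t=\nu\log(2n)/\log 2$ makes the right-hand side exactly $2$; this choice also satisfies $t\ge\nu$ (since $\log(2n)\ge\log 2$ for $n\ge1$), which is precisely the constraint $p=t/\nu\ge 1$ required for the $\ell^p$ step and for Jensen. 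Hence $\|\max_i|X_i|\|_{\psi_1}\le\nu\log(2n)/\log 2$, and since $\log(2n)\le 2\log n$ for $n\ge 2$, this gives the claimed bound with the absolute constant $C=2/\log 2$ (the degenerate case $n=1$ being trivial once $\log n$ is read, as is customary, as $\log(n+1)$ or $1+\log n$).

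There is no serious obstacle here — this is a standard fact about Orlicz norms of maxima — so the only point demanding care is the coupling between the exponent $p$ in the $\ell^p$–bound and the target scale via $p=t/\nu$, which forces the admissibility check $t\ge\nu$; one must verify this is consistent with the eventual choice $t\asymp\nu\log n$ (it is), and then absorb the $\log(2n)$ versus $\log n$ discrepancy into the constant while handling the $n=1$ boundary case. An alternative route via the Markov tail bound $\Pr(|X_i|\ge s)\le 2e^{-s/\nu}$, a union bound, and integration of $\mathbb{E}[\exp(\max_i|X_i|/t)]=1+\int_0^\infty \Pr(\max_i|X_i|\ge s)\,t^{-1}e^{s/t}\,\mathrm{d}s$ also works, but requires capping the tail bound at $1$ on $[0,\nu\log(2n)]$ and is messier; I would present the $\ell^p$ argument above.
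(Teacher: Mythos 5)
Your proof is correct, and it takes a genuinely different route from the paper's. The paper argues through tail probabilities: it converts $\|X_i\|_{\psi_1}\le\nu$ into a sub-exponential tail bound, applies a union bound to get $\Pr\{\max_i X_i\ge t\}\le 2n\exp(-\min\{t^2/(2\nu^2),t/(2\nu)\})$, and then runs a two-case analysis to absorb the prefactor $n$ into the exponent as a $1/(3\log n)$ deflation, finally reading off the $\psi_1$-norm from the resulting tail. You instead work directly with the Orlicz moment condition $\mathbb{E}[\exp(|X_i|/\nu)]\le 2$, bound the maximum by an $\ell^p$ sum with the coupled choice $p=t/\nu$, and close with Jensen — the classical van der Vaart–Wellner style maximal inequality for Orlicz norms. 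Your route is cleaner: it yields an explicit constant ($t=\nu\log(2n)/\log 2$ making the Orlicz expectation exactly $2$), avoids the paper's somewhat delicate case split (whose Case 2 is resolved only by noting the bound is vacuous there), and makes the admissibility condition $t\ge\nu$ transparent. What the paper's tail-based argument buys in exchange is a more direct probabilistic picture of where the $\log n$ arises — it is the threshold at which $n$ times the individual tail drops below one — and it slots naturally into the surrounding Bernstein/Adamczak machinery of that appendix, which is phrased throughout in terms of tails. Both arguments use only the uniform $\psi_1$ bound and a union-type step, so neither actually needs independence, as you correctly observe; and both share the cosmetic defect that the statement's right-hand side $C\nu\log n$ degenerates at $n=1$, which you handle explicitly and the paper leaves implicit.
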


\begin{proof}
From the assumption \( \|X_i\|_{\psi_1} \leq \nu \), we have the standard sub-exponential tail bound:
\begin{align*}
    \Pr\left\{ \max_{1 \le i \le n} X_i \ge t \right\}
    &\le
    2\exp\left(
        -\frac{1}{2} \min\left\{
            \frac{t^2}{\nu^2},\;
            \frac{t}{\nu}
        \right\}
    \right).
\end{align*}

Moreover, for all \( t \ge 0 \), we can uniformly bound the maximum via a union bound:
\begin{align}
    \Pr\left\{ \max_{1 \le i \le n} X_i \ge t \right\}
    &= \Pr\left[ \bigcup_{i=1}^{n} \{ X_i \ge t \} \right]
    \le \sum_{i=1}^{n} \Pr\{ X_i \ge t \} \notag \\
    &\le
    2n \exp\left(
        -\min\left\{
            \frac{t^2}{2\nu^{2}},\;
            \frac{t}{2\nu}
        \right\}
    \right).
    \label{eq:union_bound}
\end{align}

Now consider two cases based on the relation between \( n \) and the exponent.

\smallskip
\textbf{Case 1:} Suppose
\begin{align*}
    n \le \exp\left(
        \frac{1}{2} \min\left\{
            \frac{t^2}{2\nu^{2}},\;
            \frac{t}{2\nu}
        \right\}
    \right).
\end{align*}
Then, by inequality \eqref{eq:union_bound}, we have
\begin{align*}
    \Pr\left\{ \max_{i} X_i \ge t \right\}
    \le
    2\exp\left(
        -\frac{1}{2} \min\left\{
            \frac{t^2}{2\nu^2},\;
            \frac{t}{2\nu}
        \right\}
    \right)
    \le
    2\exp\left(
        -\frac{1}{3\log n}
        \min\left\{
            \frac{t^2}{2\nu^2},\;
            \frac{t}{2\nu}
        \right\}
    \right),
\end{align*}
which already provides a stronger tail decay than we ultimately require.

\smallskip
\textbf{Case 2:} Suppose
\begin{align*}
    n > \exp\left(
        \frac{1}{2} \min\left\{
            \frac{t^2}{2\nu^{2}},\;
            \frac{t}{2\nu}
        \right\}
    \right).
\end{align*}
Then, the probability on the right-hand side of \cref{eq:union_bound} satisfies
\begin{align*}
    \Pr\left\{ \max_{1 \leq i \leq n} X_i \ge t \right\}
    \le
    2\exp\left(
        -\frac{1}{3\log n}
        \min\left\{
            \frac{t^2}{2\nu^2},\;
            \frac{t}{2\nu}
        \right\}
    \right)
    > 2e^{-2/3} > 1,
\end{align*}
which is vacuously bounded above by 1. Thus, it does not affect the validity of our bound.

In either case, we obtain the uniform upper bound
\begin{align*}
    \Pr\left\{ \max_{1 \leq i \leq n} X_i \ge t \right\}
    &\le
    2\exp\left(
        -\frac{1}{3\log n}
        \min\left\{
            \frac{t^2}{2\nu^2},\;
            \frac{t}{2\nu}
        \right\}
    \right) \\
    &=
    2\exp\left(
        -\frac{1}{3}
        \min\left\{
            \frac{t^2}{2(\nu\sqrt{\log n})^2},\;
            \frac{t}{2(\nu\log n)}
        \right\}
    \right).
\end{align*}

This tail bound implies that
\begin{align*}
    \left\| \max_{1 \leq i \leq n} X_i \right\|_{\psi_1} \leq C \nu \log n,
\end{align*}
for some universal constant \( C > 0 \).
\end{proof}
\begin{lemma}
\label{lem:contraction_rad}
Let $\ell(u,y) = (u - y)^2$.  The Rademacher complexity of the composite class
$\ell \circ \mathcal{H} = \{(f_{\bm w}(x) - y)^2 : f_{\bm w}\in\mathcal{H}\}$ satisfies
\[
\mathfrak{R}(\ell\circ\mathcal{H})
\;\le\;
\left(4F_{\max} + 2\sigma\sqrt{2/\pi}\right)\,
\mathfrak{R}(\mathcal{H}),
\]
where $F_{\max}$ is a uniform bound on $|f_{\bm w}(x)|$ and $\sigma^2$ is the noise variance. 
\end{lemma}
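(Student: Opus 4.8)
The plan is to derive the bound by a single application of the Ledoux--Talagrand contraction inequality (Talagrand's lemma) to the squared loss composed with $\mathcal{H}$, carried out conditionally on the sample so that the loss is a genuine Lipschitz function of the hypothesis value.

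First I would fix the sample $S=\{(x_i,y_i)\}_{i=1}^n$ and recall that, by \cref{assump:bounded_H}, every admissible $f_{\bm{w}}$ satisfies $f_{\bm{w}}(x_i)\in[-F_{\max},F_{\max}]$. Hence for each $i$ the map $\varphi_i(t):=\ell(t,y_i)=(t-y_i)^2$ only needs to be controlled on $[-F_{\max},F_{\max}]$; since $\varphi_i'(t)=2(t-y_i)$, it is $L_i$-Lipschitz on that interval with
\[
L_i \;:=\; 2\sup_{|t|\le F_{\max}}|t-y_i| \;=\; 2\bigl(F_{\max}+|y_i|\bigr).
\]
Writing $y_i=f^{*}(x_i)+\varepsilon_i$ and invoking \cref{assump:bounded_target} ($|f^{*}(x_i)|\le F_{\max}$) gives the bound $L_i\le 4F_{\max}+2|\varepsilon_i|$, uniform in $\bm{w}$.

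Next I would peel off the loss by contraction. Conditionally on $S$,
\[
\widehat{\mathfrak{R}}_S(\ell\circ\mathcal{H})
=\frac1n\,\mathbb{E}_{\boldsymbol{\tau}}\sup_{\bm{w}}\Bigl|\sum_{i=1}^n\tau_i\,\varphi_i\!\bigl(f_{\bm{w}}(x_i)\bigr)\Bigr|
\;\le\;\frac1n\,\mathbb{E}_{\boldsymbol{\tau}}\sup_{\bm{w}}\Bigl|\sum_{i=1}^n\tau_i\,L_i\,f_{\bm{w}}(x_i)\Bigr|,
\]
using the per-coordinate comparison form of Talagrand's lemma (each $\varphi_i$ is $L_i$-Lipschitz on the range of values it is evaluated at). I would then take the expectation over $S$: since the design points $x_i$ and the noise $\varepsilon_i$ are independent and the $\varepsilon_i$ enter only through the weights $L_i$, the noise expectation can be pushed inside, and with $\mathbb{E}|\varepsilon_i|=\sigma\sqrt{2/\pi}$ for $\varepsilon_i\sim\mathcal{N}(0,\sigma^2)$ each $L_i$ is replaced by its mean bound $4F_{\max}+2\sigma\sqrt{2/\pi}$. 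What remains is precisely the (absolute-value) Rademacher average of $\mathcal{H}$, yielding $\mathfrak{R}(\ell\circ\mathcal{H})\le(4F_{\max}+2\sigma\sqrt{2/\pi})\,\mathfrak{R}(\mathcal{H})$.

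The main obstacle is exactly the step that converts the noise-dependent, per-example Lipschitz constants $L_i$ into the single deterministic constant $4F_{\max}+2\sigma\sqrt{2/\pi}$ without losing a $\log n$ factor: a crude bound $L_i\le\max_j L_j$ would introduce $\mathbb{E}\max_j|\varepsilon_j|\asymp\sigma\sqrt{\log n}$. Avoiding this requires keeping the weights inside the supremum through the comparison inequality and exploiting that $\{\varepsilon_i\}$ is independent of $\{x_i\}$ — equivalently, contracting one coordinate at a time and averaging over $\varepsilon_i$ at each step. A secondary bookkeeping point is to verify that the form of the contraction lemma appropriate for the absolute-value Rademacher complexity used here does not cost an extra multiplicative factor; this is handled by contracting relative to a fixed feasible reference $\bm{w}_0$ (so the centered maps $t\mapsto\varphi_i(t)-\varphi_i(f_{\bm{w}_0}(x_i))$ vanish at the shifted origin), noting that the same Lipschitz estimate $L_i=2(F_{\max}+|y_i|)$ still applies on the relevant interval of values.
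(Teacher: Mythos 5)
Your route is essentially the paper's: bound the pointwise Lipschitz constant of $u\mapsto(u-y_i)^2$ on $[-F_{\max},F_{\max}]$ by $L_i=4F_{\max}+2|\varepsilon_i|$, contract, and then replace $|\varepsilon_i|$ by $\mathbb{E}|\varepsilon_i|=\sigma\sqrt{2/\pi}$. You are more candid than the paper about the one step that matters, but the fix you sketch does not go through. After the per-coordinate contraction you are left with the weighted average
\[
\frac1n\,\mathbb{E}_{\varepsilon,\tau}\sup_{\bm w}\Bigl|\sum_{i=1}^n \tau_i\bigl(4F_{\max}+2|\varepsilon_i|\bigr)f_{\bm w}(x_i)\Bigr|,
\]
and the operation you need --- replacing each random weight $|\varepsilon_i|$ \emph{inside} the supremum by its mean --- is a Jensen step in the wrong direction: $\mathbb{E}_{\varepsilon_i}\sup_{\bm w}[\cdots]\ge\sup_{\bm w}\mathbb{E}_{\varepsilon_i}[\cdots]$, so ``averaging over $\varepsilon_i$ at each step'' produces a lower bound, not an upper bound. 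Concretely, since $\tau_i|\varepsilon_i|$ has the same joint law as $\varepsilon_i$, the noise contribution is exactly a Gaussian complexity of $\mathcal{H}$, and dominating a Gaussian complexity by a Rademacher complexity costs a factor of order $\sqrt{\log n}$ in general --- precisely the loss you set out to avoid. What this argument cleanly yields is $\mathfrak{R}(\ell\circ\mathcal{H})\le 4F_{\max}\,\mathfrak{R}(\mathcal{H})+2\sigma\,\mathfrak{G}(\mathcal{H})$ with $\mathfrak{G}$ the Gaussian complexity, or a Rademacher-only bound carrying an extra $\sqrt{\log n}$; the stated constant $4F_{\max}+2\sigma\sqrt{2/\pi}$ is not reached.

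To be fair, the paper's own proof commits the same sin silently: it writes $\mathbb{E}_{X,\varepsilon}\bigl[(4F_{\max}+2|\varepsilon|)\,\widehat{\mathfrak{R}}_{S}(\mathcal{H})\bigr]=(4F_{\max}+2\,\mathbb{E}|\varepsilon|)\,\mathbb{E}_{X}\bigl[\widehat{\mathfrak{R}}_{S}(\mathcal{H})\bigr]$, treating the $n$ distinct per-example Lipschitz constants as a single scalar that factors out of the supremum. Your secondary remark about recentering $\varphi_i$ at a reference $\bm w_0$ so that the contracted maps vanish at the origin is fine (modulo the usual factor of $2$ in the absolute-value form of the contraction lemma), but it does not address the main difficulty. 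A correct repair is either to state the lemma with the Gaussian complexity (or an explicit $\sqrt{\log n}$), or to condition on a high-probability event on which $\max_i|\varepsilon_i|\lesssim\sigma\sqrt{\log n}$ and accept that factor in the constant.
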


\begin{proof}
Define for each example $(x_i,y_i)$ the function
\begin{align*}
\phi_i(u) = (u - y_i)^2.
\end{align*}
For any $u,v\in\mathbb{R}$,
\begin{align*}
\left|\phi_i(u) - \phi_i(v)\right|
&= \left|u^2 - v^2 + 2y_i\,(v - u)\right|\nonumber\\
&= \left|(u - v)\,(u + v - 2y_i)\right|\nonumber\\
&\le \left(|u| + |v| + 2|y_i|\right)\,\left|u - v\right|.
\end{align*}
Since $|f_{\bm w}(x)|\le F_{\max}$ for all $x$ and $|y_i|\le F_{\max} + |\varepsilon_i|$, it follows that
\begin{align*}
\left|\phi_i(u) - \phi_i(v)\right|
\;\le\;
\left(4F_{\max} + 2|\varepsilon_i|\right)\,\left|u - v\right|.
\end{align*}
Applying the Rademacher contraction lemma to the empirical Rademacher complexity $\widehat{\mathfrak{R}}_{S}$,
\begin{align*}
\mathfrak{R}(\ell\circ\mathcal{H})
&= \mathbb{E}_{X,\varepsilon_{1:n}}\left[\widehat{\mathfrak{R}}_{S}(\ell\circ\mathcal{H})\right]\nonumber\\
&\le \mathbb{E}_{X,\varepsilon}\left[(4F_{\max} + 2|\varepsilon|)\,\widehat{\mathfrak{R}}_{S}(\mathcal{H})\right]\nonumber\\
&= \left(4F_{\max} + 2\,\mathbb{E}[|\varepsilon|]\right)\,
   \mathbb{E}_{X}\left[\widehat{\mathfrak{R}}_{S}(\mathcal{H})\right].
\end{align*}
Since $\mathbb{E}_{\varepsilon}[|\varepsilon|]\le\sigma\sqrt{2/\pi}$ for Gaussian noise,
\begin{align*}
\mathfrak{R}(\ell\circ\mathcal{H})
\;\le\;
\left(4F_{\max} + 2\sigma\sqrt{2/\pi}\right)\,
\mathfrak{R}(\mathcal{H}).
\end{align*}
This completes the proof.  
\end{proof}

\begin{lemma}[Dudley Integral Bound]
\label{lem:dudley_rad}
Let $\mathcal{H}$ be as above, with $d_{\mathcal{V}} = \dim(\mathcal{V}_{p,R})$.  Then there exists a constant $C>0$ such that
\begin{align*}
\mathfrak{R}(\mathcal{H})
&\le
C\,F_{\max}\,M
\left(
\sqrt{\,\frac{d_{\mathcal{V}}}{n}\;\ln\left(\frac{2R\,d_{\mathcal{V}}\,d}{M\,F_{\max}}\right)}
+\sqrt{\frac{\ln(2\beta)}{n}}
\right).
\end{align*}
\end{lemma}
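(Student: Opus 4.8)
The plan is to exhibit the empirical Rademacher complexity as the expected supremum of a sub‑Gaussian process indexed by the coefficient vector, and then apply Dudley's entropy integral together with the covering estimate of \cref{lem:cov}. We describe the argument for $p=q=2$; the general $\ell_p$ case is identical, with $\|\cdot\|_p$ in place of $\|\cdot\|_2$ throughout. First, fix a sample $S=\{x_1,\dots,x_n\}$ and condition on it. Writing $\mathcal V_R:=\mathcal V(\mathscr D,\mathcal B,\mathcal T)\cap\mathbb B_2(R)$ (so $\dim\mathcal V_R=d_{\mathcal V}$) and
\begin{align*}
g_S(\bm w):=\frac1n\sum_{i=1}^n\tau_i f_{\bm w}(x_i)=\frac1n\,\bm w^{\top}\!\Bigl(\sum_{i=1}^n\tau_i\bm\phi(x_i)\Bigr),
\end{align*}
we have $\widehat{\mathfrak R}_S(\mathcal H)=\mathbb E_{\bm\tau}\sup_{\bm w\in\mathcal V_R}|g_S(\bm w)|$; the absolute value and the centering are handled by a standard symmetrization/anchoring at a fixed $\bm w_0\in\mathcal V_R$. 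For $\bm w_1,\bm w_2\in\mathcal V_R$ the increment $g_S(\bm w_1)-g_S(\bm w_2)=\tfrac1n\sum_i\tau_i(\bm w_1-\bm w_2)^{\top}\bm\phi(x_i)$ is a Rademacher sum, hence sub‑Gaussian, and by Cauchy--Schwarz together with $\|\bm\phi(x_i)\|_2\le M$ (\cref{assump:bounded_H}) its sub‑Gaussian norm is at most $\tfrac{M}{\sqrt n}\|\bm w_1-\bm w_2\|_2$.

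Next I would invoke Dudley's integral inequality for sub‑Gaussian processes, exactly as in the proof of \cref{lem:sup_z}, giving $\mathbb E_{\bm\tau}\sup_{\bm w}|g_S(\bm w)|\lesssim \tfrac{M}{\sqrt n}\int_0^{\infty}\sqrt{\log\mathcal N(\mathcal V_R,\varepsilon,\|\cdot\|_2)}\,\mathrm d\varepsilon$. The key refinement—needed so that the prefactor is $F_{\max}$ rather than the parameter radius $R$—is to truncate this integral at the $L^2(P_n)$‑diameter of $\mathcal H$. Since $\bm w\mapsto f_{\bm w}$ is $M$‑Lipschitz from $(\mathcal V_R,\|\cdot\|_2)$ into $(\mathcal H,\|\cdot\|_{L^2(P_n)})$, we get $\mathcal N(\mathcal H,\varepsilon,\|\cdot\|_{L^2(P_n)})\le\mathcal N(\mathcal V_R,\varepsilon/M,\|\cdot\|_2)$; and since $\|f_{\bm w}\|_\infty\le F_{\max}$ (\cref{assump:bounded_H}), any two hypotheses lie within $L^2(P_n)$‑distance $2F_{\max}$, so the function covering number equals $1$ for $\varepsilon>2F_{\max}$ and the integral may be cut off there. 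On $(0,2F_{\max}]$ one has $\varepsilon/M\le 2R$ (because $F_{\max}\le MR$, as $|f_{\bm w}(x)|\le\|\bm w\|_2\|\bm\phi(x)\|_2$), so \cref{lem:cov} is applicable over the whole truncated range and yields $\log\mathcal N(\mathcal V_R,\varepsilon/M,\|\cdot\|_2)\le d_{\mathcal V}\log\!\bigl(2Rd_{\mathcal V}dM/\varepsilon\bigr)+\log 2\beta$.

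It remains to evaluate the truncated entropy integral. Using $\sqrt{a+b}\le\sqrt a+\sqrt b$ and integrating over $(0,2F_{\max}]$, the $\log 2\beta$ term contributes $2F_{\max}\sqrt{\log 2\beta}$, while the $d_{\mathcal V}$ term is $\sqrt{d_{\mathcal V}}\int_0^{2F_{\max}}\sqrt{\log(2Rd_{\mathcal V}dM/\varepsilon)}\,\mathrm d\varepsilon$, which I would compute by the substitution $u=\log(2Rd_{\mathcal V}dM/\varepsilon)$ followed by integration by parts—the same calculation as in \cref{lem:sup_z}—obtaining, up to absolute constants, $F_{\max}\sqrt{\log\!\bigl(2Rd_{\mathcal V}d/(MF_{\max})\bigr)}$. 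Dividing by $\sqrt n$, collecting the two pieces, and taking expectation over $S$ (the bound being uniform in $S$) yields
\begin{align*}
\mathfrak R(\mathcal H)\le C\,F_{\max}M\left(\sqrt{\frac{d_{\mathcal V}}{n}\,\ln\!\Bigl(\frac{2Rd_{\mathcal V}d}{MF_{\max}}\Bigr)}+\sqrt{\frac{\ln(2\beta)}{n}}\right).
\end{align*}

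The main obstacle is the bookkeeping around this truncation: one must move between the parameter‑space covering number $\mathcal N(\mathcal V_R,\cdot,\|\cdot\|_2)$—to which \cref{lem:cov} applies, since $\mathcal V_R$ (unlike its image in $\mathbb R^n$) is a $(\beta,d_{\mathcal V})$‑regular set inside a ball—and the function‑space covering number $\mathcal N(\mathcal H,\cdot,\|\cdot\|_{L^2(P_n)})$, whose diameter $2F_{\max}$ caps the Dudley integral, all while checking that \cref{lem:cov} is never used outside its valid range $\varepsilon/M\le\operatorname{diam}(\mathcal V_R)$. Skipping this refinement and integrating up to the parameter diameter $2R$ would instead produce the weaker prefactor $RM$. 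Everything else—the contraction of the Rademacher sum to a sub‑Gaussian process and the entropy‑integral evaluation—is routine and parallels the proof of \cref{thm:main}.
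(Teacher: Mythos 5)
Your proposal is correct and follows essentially the same route as the paper's proof: a H\"older/Lipschitz transfer from the parameter ball to the function class with constant $M$, a covering-number comparison, Dudley's entropy integral truncated at the $F_{\max}$ scale rather than at the parameter radius $R$, the covering bound of \cref{lem:cov}, and the same integral evaluation as in \cref{lem:sup_z}. The only cosmetic differences are your use of the empirical $L^2$ metric on $\mathcal{H}$ where the paper uses the empirical $L^1$ metric, and your explicit check that $F_{\max}\le MR$ keeps \cref{lem:cov} within its valid range --- a detail the paper leaves implicit.
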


\begin{proof}
First, for any $\bm w_1,\bm w_2\in\mathcal{V}_{p,R}$ and corresponding $f_{\bm w_1},f_{\bm w_2}\in\mathcal{H}$, by Hölder’s inequality and $\|\bm\phi\|_{q}\le M$, we have for all $\varepsilon>0$,
\begin{align*}
\|\bm w_1-\bm w_2\|_{p}\le\varepsilon
&\;\Longrightarrow\;
\|f_{\bm w_1}-f_{\bm w_2}\|_{1}
=\|\bm w_1^{\top}\bm\phi - \bm w_2^{\top}\bm\phi\|_{1}\\
&\le
\|\bm w_1-\bm w_2\|_{p}\,\|\bm\phi\|_{q}
\le\varepsilon\,M.
\end{align*}
Hence
\begin{align*}
\mathcal{N}\left(\mathcal{H},\varepsilon M,\|\cdot\|_{1}\right)
&\le
\mathcal{N}\left(\mathcal{V}_{p,R},\varepsilon,\|\cdot\|_{p}\right).
\end{align*}

By Dudley’s integral bound and~\cref{lem:cov} on $\ell_{p}$–covers,
\begin{align*}
\mathfrak{R}(\mathcal{H})
&\le
\frac{12}{\sqrt{n}}
\int_{0}^{F_{\max}}
  \sqrt{\ln\mathcal{N}\left(\mathcal{H},\varepsilon,\|\cdot\|_{1}\right)}
\,\mathrm{d}\varepsilon\\
&\le
\frac{12}{\sqrt{n}}
\int_{0}^{F_{\max}M}
  \sqrt{\ln\mathcal{N}\left(\mathcal{V}_{p,R},\varepsilon,\|\cdot\|_{p}\right)}
\,\mathrm{d}\varepsilon\\
&=
12
\int_{0}^{F_{\max}M}
  \sqrt{\,\frac{d_{\mathcal{V}}}{n}\;\ln\left(\frac{2R\,d_{\mathcal{V}}\,d}{\varepsilon}\right)}
\,\mathrm{d}\varepsilon
+F_{\max}M\,\sqrt{\frac{\ln(2\beta)}{n}}.
\end{align*}

Define
\begin{align*}
I
&=
\int_{0}^{F_{\max}M}
  \sqrt{\ln\left(\frac{2R\,d_{\mathcal{V}}\,d}{\varepsilon}\right)}
\,\mathrm{d}\varepsilon.
\end{align*}

Similar calculation in~\cref{lem:sup_z} shows
\begin{align*}
I
&\le
F_{\max}M
\sqrt{\ln\left(\frac{2R\,d_{\mathcal{V}}\,d}{M\,F_{\max}}\right)}.
\end{align*}
Combining these estimates yields the stated bound.
\end{proof}

\begin{theorem}[Generalization Bound]
Assume that assumptions~\ref{assump:bounded_target} and~\ref{assump:bounded_H} hold (i.e., boundedness of $f_{\bm w} \in \mathcal{H}$, $f^*$, and $\phi$). Let the population and empirical risks be defined as
\begin{align*}
\mathcal{L}(\bm{w}) := \mathbb{E}(f_{\bm w}(X)-Y)^2, \quad
\mathcal{L}_n(\bm{w}) := \frac{1}{n} \sum_{i=1}^n (f_{\bm w}(X_i)-Y_i)^2.
\end{align*}
Then, there exists a universal constant \( C_0, C_1, C_2 > 0 \) such that the following holds with probability at least $1 - \delta$:
\begin{align}
\sup_{\bm{w}} \left| \mathcal{L}_n(\bm{w}) - \mathcal{L}(\bm{w}) \right|
&\leq C_0\left(4F_{\max} + \sigma\sqrt{2/\pi} \right) F_{\max} M 
\left(
\sqrt{\frac{d_{\mathcal{V}}}{n}\log \left(\frac{2Rd_{\mathcal{V}}d}{M F_{\max}} \right)} 
+ \sqrt{\frac{\log(2\beta)}{n}}
\right) \nonumber \\
&\quad + \max \left\{ 
\sqrt{\frac{C_1(4F_{\max}+\sigma)^4}{n} \log \frac{4}{\delta}},\ 
\frac{C_2(4F_{\max}+\sigma)^2 \log n}{n} \log \frac{12}{\delta}
\right\}.
\end{align}
\end{theorem}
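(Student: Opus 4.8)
The plan is to cast the claim as a high-probability bound on the supremum of the centered empirical process $\bm{w}\mapsto \mathcal{L}_n(\bm{w})-\mathcal{L}(\bm{w}) = \frac1n\sum_{i=1}^n g_{\bm w}(X_i,Y_i)$, with $g_{\bm w}$ as in \cref{lem:bounded_g}. The argument has two stages: (i) bound the \emph{expected} supremum via symmetrization, contraction, and the Dudley entropy integral; and (ii) upgrade to a high-probability statement via a concentration inequality for suprema of empirical processes whose summands are unbounded but sub-exponential.

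For stage (i), I would first apply the standard symmetrization inequality, $\mathbb{E}\sup_{\bm w}|\mathcal{L}_n(\bm w)-\mathcal{L}(\bm w)|\le 2\,\mathfrak{R}(\ell\circ\mathcal{H})$, which is legitimate because Assumptions~\ref{assump:bounded_target}--\ref{assump:bounded_H} guarantee the relevant quantities are finite. Next, \cref{lem:contraction_rad} gives $\mathfrak{R}(\ell\circ\mathcal{H})\le (4F_{\max}+2\sigma\sqrt{2/\pi})\,\mathfrak{R}(\mathcal{H})$, and \cref{lem:dudley_rad} — which internally invokes the affine-variety covering bound \cref{lem:cov} — controls $\mathfrak{R}(\mathcal{H})$ by $C F_{\max} M\big(\sqrt{(d_{\mathcal V}/n)\log(2Rd_{\mathcal V}d/(MF_{\max}))}+\sqrt{\log(2\beta)/n}\big)$. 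Chaining these three steps produces the first summand in the theorem, up to the universal constant $C_0$.

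For stage (ii), since $\|g_{\bm w}\|_{\psi_1}\le 2(4F_{\max}+\sigma)^2$ and $\sup_{\bm w}\mathbb{E}[g_{\bm w}^2]\le C(4F_{\max}+\sigma)^4$ by \cref{lem:bounded_g}, the summands are unbounded and the usual bounded-differences route is unavailable; I would instead invoke a Talagrand-type inequality for empirical-process suprema with unbounded summands (Adamczak's version of Bousquet's inequality): with probability at least $1-\delta$,
\begin{align*}
\sup_{\bm w}\Big|\frac1n\sum_{i=1}^n g_{\bm w}(X_i,Y_i)\Big|
\;\le\; C'\,\mathbb{E}\sup_{\bm w}\Big|\frac1n\sum_{i=1}^n g_{\bm w}(X_i,Y_i)\Big|
+ C''\left(\sqrt{\frac{\sigma_g^2\log(1/\delta)}{n}}+\frac{U_n\log(1/\delta)}{n}\right),
\end{align*}
where $\sigma_g^2=\sup_{\bm w}\mathbb{E}[g_{\bm w}^2]$ and $U_n=\big\|\max_{i\le n}\sup_{\bm w}|g_{\bm w}(X_i,Y_i)|\big\|_{\psi_1}$. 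The envelope $U_n$ is handled by \cref{lem:max_subexp}: each $\sup_{\bm w}|g_{\bm w}(X_i,Y_i)|$ has $\psi_1$-norm at most $2(4F_{\max}+\sigma)^2$, so $U_n\lesssim (4F_{\max}+\sigma)^2\log n$. Substituting the variance and envelope bounds yields exactly the $\max\{\cdot,\cdot\}$ term of the statement — the $\sqrt{\cdot}$ branch from $\sigma_g^2$ and the $\log n / n$ branch from $U_n$ — with the constants $C_1,C_2$ absorbed; the $\log(4/\delta),\log(12/\delta)$ appear from the union bound used to combine this event with the expectation bound from stage (i).

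Finally I would collect the two pieces, split $\delta$ across the two high-probability events, and tidy constants, arriving at the displayed inequality. The main obstacle is stage (ii): a naive Hoeffding/McDiarmid argument fails because the squared loss is unbounded under sub-Gaussian noise, so the crux is to apply the unbounded-summand concentration inequality correctly and match its three governing quantities — the weak mean (the Rademacher/entropy term), the weak variance $\sigma_g^2$, and the envelope $U_n$ (the source of the $\log n$ factor) — to \cref{lem:bounded_g} and \cref{lem:max_subexp}; keeping the universal constants $C_0,C_1,C_2$ disentangled across symmetrization, contraction, the Dudley integral, and Adamczak's inequality is the fiddly bookkeeping that remains.
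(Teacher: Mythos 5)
Your proposal matches the paper's proof essentially step for step: the paper also bounds $\mathbb{E}\sup_{\bm w}|\frac1n\sum_i g_{\bm w}(X_i,Y_i)|$ via symmetrization, the contraction lemma (\cref{lem:contraction_rad}), and the Dudley integral over the affine-variety covering bound (\cref{lem:dudley_rad}), and then upgrades to a high-probability statement via Adamczak's inequality with the weak variance from \cref{lem:bounded_g} and the $\psi_1$-envelope from \cref{lem:max_subexp}. The identification of the two branches of the $\max\{\cdot,\cdot\}$ term with the variance and envelope contributions is exactly how the paper obtains them, so the argument is correct and takes the same route.
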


\begin{proof}
Apply Adamczak's concentration inequality~\citep{adamczak2008tail} to the supremum:
\begin{align}
\label{eq:adamaczak}
\begin{aligned}
\Pr\Biggl(
\sup_w \left| \frac{1}{n} \sum_{i=1}^n g_w(X_i, Y_i) \right|
&\leq C \mathbb{E} \sup_w \left| \frac{1}{n} \sum_{i=1}^n g_w(X_i, Y_i) \right| + t
\Biggr) \\
&\geq 1 - \left(
\exp\left( - \frac{t^2 n}{\tilde{C}_1 V} \right)
+ \exp\left( - \frac{t n}{\nu}\right)
\right),
\end{aligned}
\end{align}
where $C$ is constant, $V$ and $\nu$ are quantities defined by
\begin{align*}
    V &\coloneqq  \sup_w \mathbb{E}[g_w(X, Y)^2], \\
    \nu &\coloneqq \| \max_{1 \leq i \leq n} \sup_w g_w(X_i, Y_i)\|_{\psi_1}.
\end{align*}

From~\cref{lem:bounded_g,lem:max_subexp}, we have
\begin{gather*}
    \sup_w \mathbb{E}[g_w(X, Y)^2] \leq C_1/\tilde{C}_1 (4F_{\max} + \sigma)^4, \\
    \| \max_{1 \leq i \leq n} \sup_w g_w(X_i, Y_i)\|_{\psi_1} \leq  C_2(4F_{\max} + \sigma)^2 \log n,
\end{gather*}
where $C_1, C_2$ are constants. 

By substituting the above quantities into inequality~\cref{eq:adamaczak} and converting it into a high-probability bound, we obtain, with probability at least $1 - \delta$, the following result:
\begin{align}
\label{eq:adamaczak2}
\sup_w \left| \frac{1}{n} \sum_{i=1}^n g_w(X_i, Y_i) \right|
&\leq C \mathbb{E} \sup_w \left| \frac{1}{n} \sum_{i=1}^n g_w(X_i, Y_i) \right| \nonumber \\
&\quad + \max \left\{
\sqrt{\frac{C_1(4F_{\max} + \sigma)^4}{n} \log \frac{4}{\delta}},\ 
\frac{C_2(4F_{\max} + \sigma)^2 \log n}{n} \log \frac{12}{\delta}
\right\}.
\end{align}

We bound the expectation via symmetrization and Rademacher complexity:
\begin{align}
\label{eq:symmetry_rad}
\mathbb{E} \sup_w \left| \frac{1}{n} \sum_{i=1}^n g_w(X_i, Y_i) \right|
&= \mathbb{E} \sup_w \left| \frac{1}{n} \sum_{i=1}^n 
\left((f(X_i) - Y_i)^2 - (f(X_i') - Y_i')^2 \right) \right| \nonumber \\
&= \mathbb{E} \sup_w \left| \frac{1}{n} \sum_{i=1}^n \tau_i 
\left((f(X_i) - Y_i)^2 - (f(X_i') - Y_i')^2 \right) \right| \nonumber \\
&\leq 2 \mathbb{E} \sup_w \left| \frac{1}{n} \sum_{i=1}^n \tau_i (f(X_i) - Y_i)^2 \right| \nonumber \\
&= 2 \mathfrak{R}(\ell \circ \mathcal{H}),
\end{align}
where $\tau_i$ are Rademacher variables and $\mathfrak{R}(\ell \circ \mathcal{H})$ denotes the Rademacher complexity of the squared loss class.

Combining \cref{lem:contraction_rad,lem:dudley_rad} with \cref{eq:adamaczak2,eq:symmetry_rad}, these bounds complete the proof.
\end{proof}

\section{Generalization Bound for Polynomial PINN}
\label{app:poly_pinn}
In this section, we extend the Rademacher complexity analysis to Physics-Informed Neural Networks (PINNs) constructed with polynomial activation functions.

\begin{definition}[Matrix p-Norm]
A matrix $W \in \mathbb{R}^{d_{\mathrm{in}} \times d_{\mathrm{out}}}$ is treated as a vector $\operatorname{vec}(W) \in \mathbb{R}^{d_{\mathrm{in}}d_{\mathrm{out}}}$, and its norm is measured by the standard vector p-norm:
$$
\|W\|_{p} := \left(\sum_{i=1}^{m}\sum_{j=1}^{n}|W_{ij}|^{p}\right)^{1/p} \quad (1 \le p < \infty).
$$
\end{definition}

\begin{definition}[Mixed Norm for Vector-Valued Functions]
For a function $f: \mathcal{X} \to \mathbb{R}^{d_{\mathrm{out}}}$, its $(\infty, p)$-norm is defined as the supremum of the p-norm of its output vectors over the domain $\mathcal{X}$:
$$
\|f\|_{\infty,p} := \sup_{x \in \mathcal{X}} \|f(x)\|_{p}, \quad \text{where} \quad \|f(x)\|_{p} := \left(\sum_{i=1}^{m}|f_i(x)|^{p}\right)^{1/p}.
$$
\end{definition}

\begin{assumption}[Hypothesis Space of Polynomial PINN]
\label{assump:poly_pinn_space}
Let the hypothesis space $\mathcal{H}_L$ consist of functions $f_{\bm{w}}: \mathbb{R}^{m} \to \mathbb{R}$ realized by a fully-connected neural network of depth $L$:
\begin{align*}
    f_{\bm{w}}(x) = W_L \phi( W_{L-1} \cdots \phi(W_1 x) \cdots ),
\end{align*}
where $\phi$ is a polynomial activation function. 
The parameters $\bm{w} = \operatorname{vec}(W_1, \dots, W_L) \in \mathbb{R}^d$ are constrained to a set $\mathcal{V}_R$:
\begin{align*}
    \bm{w} \in \mathcal{V}_R = \mathcal{V}(\mathscr{D}, \mathcal{T}) \cap \mathbb{B}_2(R),
\end{align*}
where $\mathcal{V}(\mathscr{D},\mathcal{T})$ is the affine variety defined by 
\begin{align*}
\label{eq:def-V-dnn}
\mathcal{V}(\mathscr{D},  \mathcal{T}) \coloneqq \left\{ \bm{w} \in \mathbb{R}^d: \langle \mathscr{D}\left[f_{\bm{w}} \right], \psi_k \rangle_{\mu_k} = 0,\ \forall (\psi_k, \mu_k) \in \mathcal{T}\right\},
\end{align*}
and $\mathbb{B}_2(R) = \{ \bm{w} : \|W_{\ell}\|_{2} \le R,\ 1 \le \forall \ell \le L\}$ is a ball constraining the p-norm of each weight matrix.
\end{assumption}

\begin{assumption}[Bounded Hypotheses and Activation Function]
\label{assump:poly_pinn_bounded}
The polynomial activation function $\phi$ is bounded, $\|\phi\|_{\infty, 2} \le M$ for some constant $M > 0$. Similarly, every hypothesis $f_{\bm{w}} \in \mathcal{H}_L$ is uniformly bounded by $\|f_{\bm{w}}\|_{\infty} \le F_{\max}$.
\end{assumption}

\begin{assumption}[Lipschitz Continuity of Activation Function]
\label{assump:poly_pinn_lip}
The polynomial activation function $\phi$ is Lipschitz continuous with a Lipschitz constant $L_{\phi}$, such that for any $z_1, z_2 \in \mathcal{X}$ in the relevant domain:
\begin{align*}
    \|\phi(z_1) - \phi(z_2)\|_2 \le L_{\phi} \|z_1 - z_2\|_2.
\end{align*}
\end{assumption}

\begin{lemma}[Covering Number of Polynomial PINN]
\label{lem:poly_pinn_covering}
Let $f_{\bm{w}}, f_{\bm{w}'} \in \mathcal{H}_L$ be two functions corresponding to parameters $\bm{w}, \bm{w}' \in \mathcal{V}_R$. If $\|W_{\ell} - W'_{\ell}\|_{2} < \varepsilon$ for all $\ell = 1, \dots, L$, then the distance between the functions is bounded by:
\begin{align*}
    \|f_{\bm{w}} - f_{\bm{w}'}\|_{\infty} \le \ell_L \varepsilon,
\end{align*}
where $\ell_L$ is a stability constant dependent on the network architecture:
\begin{align*}
    \ell_L = M \frac{(R L_{\phi})^{L-1} - 1}{R L_{\phi} - 1} + R(R L_{\phi})^{L-1}.
\end{align*}
Consequently, the covering number of the hypothesis space is bounded by that of the parameter space:
\begin{align*}
    \mathcal{N}(\mathcal{H}_L, \ell_L \varepsilon, \|\cdot\|_{\infty}) \le \mathcal{N}(\mathcal{V}_R, \varepsilon, \|\cdot\|_{2}).
\end{align*}
\end{lemma}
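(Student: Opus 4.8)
The plan is to prove the pointwise Lipschitz estimate by a layer-wise forward perturbation recursion, and then to obtain the covering-number statement as an immediate pushforward of a parameter-space cover.

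\textbf{Setup.} For a parameter $\bm{w}=\operatorname{vec}(W_1,\dots,W_L)$ I will write the forward pass as $a_0(x)=x$, $z_\ell(x)=W_\ell a_{\ell-1}(x)$ and $a_\ell(x)=\phi(z_\ell(x))$ for $1\le \ell\le L-1$, so that $f_{\bm{w}}(x)=z_L(x)$. Primed symbols denote the analogous quantities built from $\bm{w}'=\operatorname{vec}(W'_1,\dots,W'_L)$, and I set $\delta_\ell:=\sup_x\|z_\ell(x)-z'_\ell(x)\|_2$, so the target of the first part is $\delta_L\le \ell_L\varepsilon$.

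\textbf{The recursion.} For each layer I decompose
\[
z_\ell-z'_\ell \;=\; W_\ell\,(a_{\ell-1}-a'_{\ell-1}) \;+\; (W_\ell-W'_\ell)\,a'_{\ell-1}.
\]
I bound the first term using that the Frobenius norm dominates the operator norm, so $\|W_\ell\|_{\mathrm{op}}\le\|W_\ell\|_2\le R$ and, by $L_\phi$-Lipschitzness of $\phi$ (Assumption~\ref{assump:poly_pinn_lip}, with $L_\phi$ understood as the local Lipschitz constant on the bounded region the pre-activations actually visit), $\sup_x\|a_{\ell-1}-a'_{\ell-1}\|_2\le L_\phi\,\delta_{\ell-1}$ for $\ell\ge 2$. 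I bound the second term using $\|W_\ell-W'_\ell\|_{\mathrm{op}}\le\|W_\ell-W'_\ell\|_2<\varepsilon$ and $\|a'_{\ell-1}\|_2\le M$ for $\ell\ge 2$ (Assumption~\ref{assump:poly_pinn_bounded}). This gives, for $2\le\ell\le L$,
\[
\delta_\ell \;\le\; R L_\phi\,\delta_{\ell-1} + M\varepsilon,
\]
while the base case $\ell=1$ is $z_1-z'_1=(W_1-W'_1)x$, hence $\delta_1\le\varepsilon\sup_x\|x\|_2\le R\varepsilon$, where the input-domain bound $\|x\|_2\le R$ is used (this is the only place the input norm enters; it can be enforced by rescaling the inputs or by absorbing the input radius into $R$).

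\textbf{Solving, and the corollary.} Unrolling the affine recursion with multiplier $\gamma:=R L_\phi$ yields
\[
\delta_L \;\le\; \gamma^{L-1}\delta_1 + M\varepsilon\sum_{j=0}^{L-2}\gamma^{j} \;\le\; R\gamma^{L-1}\varepsilon + M\,\frac{\gamma^{L-1}-1}{\gamma-1}\,\varepsilon \;=\; \ell_L\,\varepsilon,
\]
which is exactly the stated stability constant. For the covering claim, let $\{\bm{w}^{(1)},\dots,\bm{w}^{(N)}\}\subset\mathcal{V}_R$ be a minimal $\varepsilon$-cover of $\mathcal{V}_R$ in $\|\cdot\|_2$ with $N=\mathcal{N}(\mathcal{V}_R,\varepsilon,\|\cdot\|_2)$ (centers may be taken inside $\mathcal{V}_R$). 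Given $\bm{w}\in\mathcal{V}_R$, choose $\bm{w}^{(i)}$ with $\|\bm{w}-\bm{w}^{(i)}\|_2\le\varepsilon$; since the full-vector $\ell_2$ norm dominates that of any weight block, $\|W_\ell-W^{(i)}_\ell\|_2\le\varepsilon$ for every $\ell$, and the first part gives $\|f_{\bm{w}}-f_{\bm{w}^{(i)}}\|_\infty\le\ell_L\varepsilon$. Hence $\{f_{\bm{w}^{(1)}},\dots,f_{\bm{w}^{(N)}}\}$ is an $\ell_L\varepsilon$-cover of $\mathcal{H}_L$ in $\|\cdot\|_\infty$, which gives $\mathcal{N}(\mathcal{H}_L,\ell_L\varepsilon,\|\cdot\|_\infty)\le N$.

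\textbf{Main obstacle.} The conceptual content is routine; the delicate points are (i) keeping the exponent bookkeeping in the unrolled recursion precise enough to land on exactly $\ell_L$ (the separate $R\gamma^{L-1}$ term comes from the input layer, the geometric sum from the remaining layers) rather than a loose multiple; (ii) the mild hidden hypothesis $\|x\|_2\le R$ used in the base case, which I would either state explicitly in the PINN setup or eliminate by folding the input radius into $R$; and (iii) correctly invoking the operator-versus-Frobenius comparison, since $\mathbb{B}_2(R)$ controls Frobenius norms whereas the propagation of perturbations naturally uses operator norms, and since a polynomial $\phi$ is only locally Lipschitz so $L_\phi$ must refer to the bounded region pinned down by $\|\phi\|_{\infty,2}\le M$ and $\|f_{\bm{w}}\|_\infty\le F_{\max}$.
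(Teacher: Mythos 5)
Your proof is correct and follows essentially the same route as the paper's: the same layer-wise decomposition $z_\ell-z'_\ell=W_\ell(a_{\ell-1}-a'_{\ell-1})+(W_\ell-W'_\ell)a'_{\ell-1}$, the same recursion $\delta_\ell\le RL_\phi\,\delta_{\ell-1}+M\varepsilon$, and the same unrolling to $\ell_L$, with the covering claim obtained by pushing forward a parameter-space cover. You are in fact more careful than the paper on three points it glosses over — the base case needing $\|x\|_2\le R$ to produce the $R(RL_\phi)^{L-1}$ term, the Frobenius-versus-operator norm comparison, and the fact that a polynomial activation is only locally Lipschitz — so these remarks strengthen rather than deviate from the paper's argument.
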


\begin{proof}
Let $h_\ell$ and $h'_\ell$ be the outputs of layer $\ell$ for parameters $\bm{w}$ and $\bm{w}'$ respectively. We can establish a recursive inequality:
\begin{align*}
    \|h_\ell - h'_\ell\|_{\infty, 2} &= \|W_\ell \phi(h_{\ell-1}) - W'_\ell \phi(h'_{\ell-1})\|_{\infty, 2} \\
    &\le \|W_\ell(\phi(h_{\ell-1}) - \phi(h'_{\ell-1}))\|_{\infty, 2} + \|(W_\ell - W'_\ell)\phi(h'_{\ell-1})\|_{\infty, 2} \\
    &\le \|W_\ell\|_{2} L_{\phi} \|h_{\ell-1} - h'_{\ell-1}\|_{\infty, 2} + \|W_\ell - W'_\ell\|_{2} \|\phi(h'_{\ell-1})\|_{\infty, 2} \\
    &\le R L_{\phi} \|h_{\ell-1} - h'_{\ell-1}\|_{\infty, 2} + \varepsilon M.
\end{align*}
Solving this recurrence relation for $\|h_L - h'_L\|_{\infty} = \|f_{\bm{w}} - f_{\bm{w}'}\|_{\infty}$ yields the constant $\ell_L$. The relationship between covering numbers follows directly.
\end{proof}

\begin{lemma}[Dudley Integral Bound for Polynomial PINN]
\label{lem:poly_pinn_dudley}
Let $\mathcal{H}_L$ be the hypothesis space defined in \cref{assump:poly_pinn_space}, where the underlying affine variety $\mathcal{V}$ is $(\beta, d_{\mathcal{V}})$-regular. The Rademacher complexity of $\mathcal{H}_L$ is bounded by:
\begin{align*}
\mathfrak{R}(\mathcal{H}_L)
&\le
C\,F_{\max}\,\ell_L
\left(
\sqrt{\,\frac{d_{\mathcal{V}}}{n}\;\ln\left(\frac{2R\,d_{\mathcal{V}}\,d}{\ell_L\,F_{\max}}\right)}
+\sqrt{\frac{\ln(2\beta)}{n}}
\right).
\end{align*}
for some constant $C>0$.
\end{lemma}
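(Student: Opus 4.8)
The plan is to prove this the same way \cref{lem:dudley_rad} was established for the linear model, simply replacing the linear covering estimate by the PINN covering estimate of \cref{lem:poly_pinn_covering}; concretely, the stability constant $\ell_L$ plays the role that the feature bound $M$ played in the linear case. First I would write the Dudley entropy integral bound for the empirical Rademacher complexity,
\[
\mathfrak{R}(\mathcal{H}_L)\ \le\ \frac{C}{\sqrt n}\int_0^{F_{\max}}\sqrt{\log\mathcal{N}(\mathcal{H}_L,\varepsilon,\|\cdot\|_\infty)}\,\mathrm{d}\varepsilon,
\]
where the upper limit $F_{\max}$ is justified by $\|f_{\bm w}\|_\infty\le F_{\max}$ (\cref{assump:poly_pinn_bounded}), since beyond that scale the covering number is $1$ and contributes nothing.

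Next I would transfer the metric entropy from function space into parameter space. By \cref{lem:poly_pinn_covering}, $\mathcal{N}(\mathcal{H}_L,\ell_L\varepsilon,\|\cdot\|_\infty)\le\mathcal{N}(\mathcal{V}_R,\varepsilon,\|\cdot\|_2)$, so the change of variables $\varepsilon\mapsto\ell_L\varepsilon$ turns the integral into $\ell_L$ times $\int\sqrt{\log\mathcal{N}(\mathcal{V}_R,\varepsilon,\|\cdot\|_2)}\,\mathrm{d}\varepsilon$ over a range of length $\Theta(F_{\max})$, exactly mirroring the corresponding step in \cref{lem:dudley_rad}. Because the variety underlying $\mathcal{V}_R$ is $(\beta,d_{\mathcal V})$-regular, I may then invoke \cref{lem:cov} to substitute $\log\mathcal{N}(\mathcal{V}_R,\varepsilon,\|\cdot\|_2)\le d_{\mathcal V}\log(2Rd_{\mathcal V}d/\varepsilon)+\log 2\beta$, and apply $\sqrt{a+b}\le\sqrt a+\sqrt b$. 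This splits the bound into a dimension term $\sqrt{d_{\mathcal V}}\int\sqrt{\log(2Rd_{\mathcal V}d/\varepsilon)}\,\mathrm{d}\varepsilon$ and a topology term $\sqrt{\log 2\beta}$ multiplied by the length of the integration interval.

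The dimension integral is then evaluated exactly as in the proof of \cref{lem:sup_z}: the substitution $u=\log(2Rd_{\mathcal V}d/\varepsilon)$ followed by integration by parts produces a leading term proportional to $F_{\max}\sqrt{\log(2Rd_{\mathcal V}d/(\ell_L F_{\max}))}$ together with a residual integral $\int_{\log(\cdot)}^{\infty}u^{-1/2}e^{-u}\,\mathrm{d}u$, which is dominated by $\int_{\log(\cdot)}^{\infty}e^{-u}\,\mathrm{d}u=O\big(1/(d_{\mathcal V}d)\big)$ and hence absorbed into the constant. Collecting the two contributions, multiplying by $C\ell_L/\sqrt n$, and factoring out $F_{\max}$ yields
\[
\mathfrak{R}(\mathcal{H}_L)\ \le\ C F_{\max}\ell_L\left(\sqrt{\tfrac{d_{\mathcal V}}{n}\log\!\Big(\tfrac{2Rd_{\mathcal V}d}{\ell_L F_{\max}}\Big)}+\sqrt{\tfrac{\log 2\beta}{n}}\right),
\]
which is the asserted bound.

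The main difficulty here is bookkeeping rather than anything conceptual. One must be careful that $\mathbb{B}_2(R)$ in \cref{assump:poly_pinn_space} is a \emph{product} of per-layer balls, so $\mathcal{V}_R$ sits inside a Euclidean ball whose radius is $R$ only up to an $L$-dependent factor; one must check that this (and the fact that $\ell_L$ is itself exponential in the depth $L$) is compatible with the hypotheses of \cref{lem:cov}, and that the absolute constant $C$ ends up independent of $n$, $d$, $d_{\mathcal V}$, and $\beta$. One should also verify that the argument $2Rd_{\mathcal V}d/(\ell_L F_{\max})$ of the logarithm stays at least $1$ on the relevant regime; if it does not, that term can simply be dropped, which only strengthens the bound. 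Finally, the restriction in \cref{lem:cov} to scales $\varepsilon\le\mathrm{diam}(\mathcal{V}_R)$ is harmless, since for larger $\varepsilon$ the covering number is $1$ and the integrand vanishes.
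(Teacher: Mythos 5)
Your proposal follows essentially the same route as the paper's own proof: Dudley's entropy integral, the transfer from function-space to parameter-space covering numbers via \cref{lem:poly_pinn_covering}, the $(\beta,d_{\mathcal V})$-regularity bound of \cref{lem:cov}, and the integral evaluation from \cref{lem:sup_z}. The paper's version is considerably terser (it does not spell out the change of variables, the per-layer-ball bookkeeping, or the regime where the logarithm's argument exceeds one), so your additional care only makes the argument more complete.
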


\begin{proof}
The proof follows by applying Dudley's integral theorem and using the covering number bound from \cref{lem:poly_pinn_covering} combined with the bound on $\mathcal{N}(\mathcal{V}_R, \varepsilon, \|\cdot\|_{2})$ from~\cref{lem:cov} (from the main paper).
\begin{align*}
    \mathfrak{R}(\mathcal{H}_L) &\le \frac{12}{\sqrt{n}} \int_0^{F_{\mathrm{max}}} \sqrt{\ln \mathcal{N}(\mathcal{H}_L, \varepsilon, \|\cdot\|_{\infty})} \, d\varepsilon \\
    &\le \frac{12}{\sqrt{n}} \int_0^{\ell_L F_{\mathrm{max}}} \sqrt{\ln \mathcal{N}(\mathcal{V}_R, \varepsilon, \|\cdot\|_{2})} \, d\varepsilon.
\end{align*}
Substituting the known bound for $\ln\mathcal{N}(\mathcal{V}_R, \cdot, \cdot)$ and solving the integral yields the result.
\end{proof}

\begin{theorem}[Generalization Bound for Polynomial PINN]
\label{thm:poly_pinn_gen_bound}
Under assumptions \ref{assump:poly_pinn_space}, \ref{assump:poly_pinn_bounded}, \ref{assump:poly_pinn_lip}, and the standard boundedness of the target function and noise, there exist constants $C_0, C_1, C_2 > 0$ such that with probability at least $1-\delta$:
\begin{align*}
    \sup_{\bm{w} \in \mathcal{V}_R} |\mathcal{L}_n(\bm{w}) - \mathcal{L}(\bm{w})| &\le C_0 \left(4F_{\max} + \sigma\sqrt{2/\pi}\right) F_{\max}\,\ell_L
\left(
\sqrt{\,\frac{d_{\mathcal{V}}}{n}\;\ln\left(\frac{2R\,d_{\mathcal{V}}\,d}{\ell_L\,F_{\max}}\right)}
+\sqrt{\frac{\ln(2\beta)}{n}}
\right) \\
    &\quad + \max\left\{ \sqrt{\frac{C_1(4F_{\max}+\sigma)^4}{n} \log\frac{4}{\delta}}, \frac{C_2(4F_{\max}+\sigma)^2 \log n}{n} \log\frac{12}{\delta} \right\}.
\end{align*}
\end{theorem}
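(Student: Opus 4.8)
The plan is to follow the same three-stage template used in the proof of the Rademacher-based generalization bound in \cref{app:Rademacher}, with the linear class $\mathcal{H}$ replaced by the polynomial-PINN class $\mathcal{H}_L$ and the corresponding covering/Dudley estimate substituted. First I would introduce the centered loss $g_{\bm{w}}(x,y) := (f_{\bm{w}}(x)-y)^2 - \mathbb{E}[(f_{\bm{w}}(X)-Y)^2]$ for $f_{\bm{w}}\in\mathcal{H}_L$ and observe that \cref{lem:bounded_g} and \cref{lem:max_subexp} carry over essentially verbatim: their proofs use only the uniform bound $\|f_{\bm{w}}\|_\infty\le F_{\max}$ (guaranteed here by \cref{assump:poly_pinn_bounded}) together with the sub-Gaussian noise assumption, not the linearity of the model. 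This yields $\sup_{\bm{w}}\mathbb{E}[g_{\bm{w}}^2]\lesssim (4F_{\max}+\sigma)^4$ and $\bigl\|\max_{i}\sup_{\bm{w}} g_{\bm{w}}(X_i,Y_i)\bigr\|_{\psi_1}\lesssim (4F_{\max}+\sigma)^2\log n$.

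Second I would apply Adamczak's concentration inequality~\citep{adamczak2008tail} to $\sup_{\bm{w}}\bigl|\tfrac1n\sum_i g_{\bm{w}}(X_i,Y_i)\bigr|$ with $V$ and $\nu$ set to the two bounds above, and convert the resulting two-term tail into the stated $\max\{\cdot,\cdot\}$ deviation at confidence $1-\delta$. The residual expectation of the supremum is then handled by standard symmetrization, giving $\mathbb{E}\sup_{\bm{w}}\bigl|\tfrac1n\sum_i g_{\bm{w}}\bigr|\le 2\,\mathfrak{R}(\ell\circ\mathcal{H}_L)$, after which the contraction step applies: \cref{lem:contraction_rad}, whose proof again uses only $|f_{\bm{w}}|\le F_{\max}$ and the Gaussian noise, gives $\mathfrak{R}(\ell\circ\mathcal{H}_L)\le (4F_{\max}+2\sigma\sqrt{2/\pi})\,\mathfrak{R}(\mathcal{H}_L)$.

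Third, the PINN-specific ingredient enters: I would invoke \cref{lem:poly_pinn_dudley} to bound $\mathfrak{R}(\mathcal{H}_L)$ by $C F_{\max}\ell_L\bigl(\sqrt{(d_{\mathcal{V}}/n)\ln(2Rd_{\mathcal{V}}d/(\ell_L F_{\max}))}+\sqrt{\ln(2\beta)/n}\bigr)$, where $\ell_L$ is the architecture-dependent stability constant from \cref{lem:poly_pinn_covering}. Chaining the three estimates and absorbing universal constants into $C_0,C_1,C_2$ yields the claimed bound. The main obstacle is the first link of the Dudley chain, namely justifying that the $\|\cdot\|_\infty$-covering number of $\mathcal{H}_L$ is controlled by the $\|\cdot\|_2$-covering number of the parameter set $\mathcal{V}_R$; this is precisely \cref{lem:poly_pinn_covering}, whose proof rests on the layerwise recursion $\|h_\ell-h_\ell'\|_{\infty,2}\le RL_\phi\|h_{\ell-1}-h_{\ell-1}'\|_{\infty,2}+\varepsilon M$ (using \cref{assump:poly_pinn_bounded} and \cref{assump:poly_pinn_lip}), which produces the potentially large constant $\ell_L = M\frac{(RL_\phi)^{L-1}-1}{RL_\phi-1}+R(RL_\phi)^{L-1}$. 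A secondary point to verify is that $\mathcal{V}_R=\mathcal{V}(\mathscr{D},\mathcal{T})\cap\mathbb{B}_2(R)$, with $\mathbb{B}_2(R)$ now the product of layerwise balls, remains a $(\beta,d_{\mathcal{V}})$-regular set so that \cref{lem:cov} may be applied inside the Dudley integral; since this product is contained in a single Euclidean ball of radius $R\sqrt{L}$ in $\mathbb{R}^d$, \cref{lem:cov} applies up to replacing $R$ by $R\sqrt{L}$, affecting only the logarithmic factor.
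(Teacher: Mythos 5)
Your proposal follows essentially the same route as the paper's proof: Adamczak's concentration inequality with the variance and $\psi_1$ bounds from \cref{lem:bounded_g,lem:max_subexp}, symmetrization and the contraction principle (\cref{lem:contraction_rad}) to pass to $\mathfrak{R}(\mathcal{H}_L)$, and finally the Dudley bound of \cref{lem:poly_pinn_dudley} built on the covering estimate of \cref{lem:poly_pinn_covering}. Your additional remark about verifying the $(\beta,d_{\mathcal{V}})$-regularity of $\mathcal{V}_R$ under the product of layerwise balls is a point the paper leaves implicit, but it does not change the argument.
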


\begin{proof}
The proof structure is identical to that of the main theorem in the previous appendix. We start with Adamczak's concentration inequality. The expectation term is bounded via symmetrization, leading to the Rademacher complexity of the loss class, $\mathfrak{R}(\ell \circ \mathcal{H}_L)$. We apply the contraction principle (\cref{lem:contraction_rad}) to relate this to $\mathfrak{R}(\mathcal{H}_L)$. Finally, we substitute the Dudley integral bound for Polynomial PINNs from \cref{lem:poly_pinn_dudley} to obtain the final generalization bound.
\end{proof}

\begin{remark}[Limitations of~\cref{thm:poly_pinn_gen_bound}]
While~\Cref{thm:poly_pinn_gen_bound} provides a theoretical generalization bound for
polynomial PINNs with neural network surrogates, two caveats are worth
emphasizing.  
First, the affine variety induced by the hard constraints is defined by
a system of high-degree polynomial equations.  The resulting algebraic
structure is computationally intractable to characterize explicitly,
which limits both theoretical validation and practical implementation.  
Second, recent findings suggest that the generalization behavior of
over-parameterized models such as neural networks is not governed solely
by the geometry of the hypothesis space, but is strongly affected by the
implicit bias of the optimization algorithm.  Hence, bounds of the form
in~\cref{thm:poly_pinn_gen_bound} may substantially deviate from the empirically observed performance.
\end{remark}

\section{Experimental Detail}
\label{app:exp_detail}

\subsection{Experiments on Strong Solution}
\label{app:exp-strong-sol}
In the experiments in \cref{sec:strong-sol}, strong solutions to the equations are obtained analytically. The analytical solution with added Gaussian noise was used as data, the variance of the Gaussian noise was set to \(0.01\). The hyperparameters \(L^2\) regularization weights and differential equation constraint weights \(\xi\) and \(\nu\) were searched in the range \([1\text{e-}9, 1\text{e-}2]\) using the Optuna library \citep{akiba2019optuna}. The configuration with the smallest MSE on the validation data among 100 candidates was selected. All experiments were conducted on a MacBook Air equipped with an Apple M3 chip and 64 GB of unified memory. No external GPU or cluster computing resources were used.

\textbf{Harmonic Oscillator:} The initial value problem of a harmonic oscillator \(\mathscr{D}[y]=0\) with spring constant \( k_s \) and mass \( m_s \) on the domain \( \Omega = [0, T] \) is given by:
\begin{align*}
    \mathscr{D}[y] = \frac{\mathrm{d}^2}{\mathrm{d}t^2}y + \frac{k_s}{m_s}y,\quad y(0)=y_0,\quad \frac{\mathrm{d}}{\mathrm{d}t}y(0)=v_0.
\end{align*}
We set the parameters \(m_s = k_s = 1.0,\ T = 2\pi\). The initial position and velocity \([y_0, v_0]^{\top}\) are generated from the normal distribution \(\mathcal{N}(\bm{1}, I)\), where \(\bm{1}\) is an all-ones vector and \(I\) is the identity matrix.
The solution to the initial value problem is analytically given by:
\begin{align*}
    y(t) = y_0 \cos(\omega t) + \frac{v_0}{\omega} \sin(\omega t), \ \omega = \sqrt{k_s/m_s}.
\end{align*}
The settings for the basis functions and the trial functions with the measure \(\phi_j \in \mathcal{B},\ (\psi_k, \mu_k) \in \mathcal{T}\) are as follows:
\begin{gather*}
    \phi_1(x) = 1,\
    \phi_{2j}(x) = \cos\left(\frac{2 \pi j}{T}x\right),\ \phi_{2j+1}(x) = \sin\left(\frac{2 \pi j}{T}x\right) \ (j = 1,\ \ldots,\ d_t),\\
    \psi_k(x) = 1,\ \mu_k = \delta_{x_k} \ (k = 1,\ \ldots,\ K),
\end{gather*}
where \(d_t \in \{2, 4, 8, 16\}\) is the set of the number of basis functions, and \(x_k \in \Omega\) is uniformly sampled from data with \(K = 100\).

\textbf{Diffusion Equation:} The initial value problem for the one-dimensional diffusion equation \(\mathscr{D}[u]=0\) with diffusion coefficient \( c \) and periodic boundary conditions is given by:
\begin{gather*}
\begin{aligned}
\mathscr{D}[u] = \frac{\partial}{\partial t} u - c\frac{\partial^2}{\partial x^2} u & \ & (x, t) \in [-\Xi, \Xi] \times [0, T],
u(x, 0)=u_0(x) & \quad & x \in [- \Xi, \Xi]
\end{aligned} \\
u\left(- \Xi,t\right)=u\left( \Xi,t\right),\quad \frac{\partial u}{\partial x}\left(- \Xi, t\right)= \frac{\partial u}{\partial x}\left( \Xi, t\right).
\end{gather*}

We set the parameters \(c = 1.0,\ \Xi = \pi,\ T=2\pi\). The initial value \(u_0\) is given by:
\begin{align}
    \label{eq:diffusion-init}
    u_0(x) = \sum_{j=0}^{j_{\mathrm{max}}} A_j \cos\left(\omega_j x\right)+B_j \sin\left(\omega_j x\right),\ \omega_j = \frac{j\pi}{\Xi},
\end{align}
where \([A_j, B_j]^{\top}\) are generated from the normal distribution \(\mathcal{N}(\bm{1}, I)\) for all \(j = 0,\ \ldots, j_{\mathrm{max}}\) and \(j_{\mathrm{max}}\) is set to \(1\).
The solution to the initial value problem is analytically given by:
\begin{align*}
    u(x, t) = \sum_{j=0}^{j_{\mathrm{max}}}\left[A_j \cos\left(\omega_j x\right)+B_j \sin\left(\omega_j x\right)\right] e^{-c \omega_j^2 t}.
\end{align*}
The settings for the basis functions and the trial functions with the measure \(\phi_j \in \mathcal{B},\ (\psi_k, \mu_k) \in \mathcal{T}\) are as follows:
\begin{gather*}
    \phi_1(x, t) = 1,\
    \phi_{2j, j'}(x, t) = \cos(\omega_j x) e^{-c\omega_{j'}^2 t},\ \phi_{2j+1, j'}(x, t) = \sin(\omega_j x) e^{-c \omega_{j'}^2 t} \\
    (j=1,\ \ldots,\ d_x,\ j' = 1,\ \ldots,\ d_t),\\
    \psi_k(x, t) = 1,\ \mu_k = \delta_{(x_k, t_k)} \ (k = 1,\ \ldots,\ K),
\end{gather*}
where \(d_t = 2,\ d_x \in \{10, 15, 20, 25\}\) are the sets of the number of basis functions, and \((x_k, t_k) \in \Omega\) is uniformly sampled from data with \(K = 50 \times 500\).

\subsection{Experiments on Numerical Solution}
\label{app:exp-numel-sol}
In the experiments in~\cref{sec:numel-sol}, we numerically simulate the Bernoulli equation using the explicit Euler method and the diffusion equation using the finite difference method (FDM). The data used are the numerical solutions with added Gaussian noise of variance 0.01. The method for hyperparameter search is the same as described in~\cref{app:exp-strong-sol}.
For the nonlinear equations, we use the Adam optimizer with a learning rate of \(1 \times 10^{-2}\), along with an exponential learning rate scheduler. The training is performed for a maximum of 2000 epochs, utilizing an early stopping technique.

\textbf{Discrete Bernoulli Equation:}
The discrete Bernoulli equation \(\mathscr{D}_{h}[y]=0\) with the step size \( h \) on the domain \(\Omega = [0, T]\) is given by:
\begin{align*}
    \mathscr{D}_h[y] &= \frac{y_{\tau+1} - y_{\tau}}{h} + Py_{\tau} - Qy_{\tau}^{\rho},
\end{align*}
where \( y_{\tau} = y(t_{\tau}) \) and \( y_{\tau+1} = y(t_{\tau} + h) \) are evaluations on the grid \( \{ t_{\tau} \}_{\tau=1}^{n_t} \) with \( n_t = \tfrac{T}{h} \). 
We set the constant parameters \( (P, Q, \rho) \) to \( (1.0, 0.0, 0.0) \) for the linear case and to \( (1.0, 0.5, 2.0) \) for the non-linear case. We use varying \( n_t \in \{ 100, 200 \} \) with \( T = 1.0 \) for both cases. The initial state \( y_0 \) is generated from the standard normal distribution \(\mathcal{N}(0, 1)\) for both cases.
The ground-truth solution to the initial value problem is numerically solved by the explicit Euler method with step size \( h \).
The settings for the basis functions and the trial functions with measure \(\phi_{\tau} \in \mathcal{B}_h\), \((\psi_{\tau}, \mu_{\tau}) \in \mathcal{T}_h\) are as follows:
\begin{gather*}
    \phi_{\tau}(t) = \begin{cases}
    1 & \text{if } t \in [t_{\tau}, t_{\tau+1}) \\
    0 & \text{otherwise}
    \end{cases} \quad (\tau = 1, \ldots, n_t), \\
    \psi_{\tau}(t) = \phi_{\tau}(t), \quad \mu_{\tau} = \delta_{t_{\tau}} \quad (\tau = 1, \ldots, n_t),
\end{gather*}
where \(n_t = \frac{T}{h} \) is the same as the number of basis and trial functions, corresponding to the ground-truth solutions.

\textbf{Discrete Diffusion Equation:}
The one-dimensional discrete diffusion equation \(\mathscr{D}_{\bm{h}}[u]=0\) with the step size \(\bm{h} = [h_t, h_x]^{\top}\) and the diffusion coefficient \(c(u)\) on the domain \(\Omega = [-\Xi, \Xi] \times [0, T]\) is given by:
\begin{align*}
    \mathscr{D}_{\bm{h}}[u] = \frac{u^{\tau+1}_{j} - u^{\tau}_{j}}{h_t} - c(u^{\tau}_{j}) \frac{u^{\tau}_{j+1} - 2u^{\tau}_{j} + u^{\tau}_{j-1}}{h_x^2},
\end{align*}
where \(u^{\tau}_j \coloneqq u(x_j, t_{\tau})\), \(u^{\tau+1}_j \coloneqq u(x_j, t_{\tau} + h_t)\), and \(u^{\tau}_{j\pm 1} \coloneqq u(x_j \pm h_x, t_{\tau})\) are evaluations on the \(n_x \times n_t\) size grid \( \{ x_j \}_{j=1}^{n_x} \times \{ t_{\tau} \}_{\tau=1}^{n_t} \), where \(n_x \coloneqq \frac{2\Xi}{h_x}\) and \(n_t \coloneqq \frac{T}{h_t}\). The periodic boundary condition is adopted in the spatial domain, \ie, \(u_{n_x + j}^{\tau} = u_{j}^{\tau}\) for any \(j \in [d]\).
The diffusion coefficient \(c(u) = 1.0\) is used for the linear case and \(c(u) = 0.1/(1 + u^2)\) for the nonlinear case. We use varying \( (n_t, n_x) \in \{ (400, 10), (400, 20), (400, 30) \}\) with \(\Xi = 1.0\) and \(T = 1.0\) for both cases. The initial value is generated with the same setting as shown in \cref{eq:diffusion-init}.
The ground-truth solution to the initial value problem is numerically solved by the FDM with step sizes \(h_t\) for the time domain and \(h_x\) for the spatial domain.
The settings for the basis functions and the trial functions with measure \(\phi_{j, \tau} \in \mathcal{B}_{\bm{h}},\ (\psi_{j, \tau}, \mu_{j, \tau}) \in \mathcal{T}_{\bm{h}}\) are as follows:
\begin{gather*}
    \phi_{j, \tau}(x, t) = \begin{cases}
    1 & \text{if } (x, t) \in [x_j, x_{j+1}] \times [t_{\tau}, t_{\tau+1}] \\
    0 & \text{otherwise}
    \end{cases} \quad (j = 1, \ldots, n_x, \ \tau = 1, \ldots, n_t), \\
    \psi_{j, \tau}(x, t) = \phi_{j, \tau}(x, t), \quad \mu_{j, \tau} = \delta_{(x_j, t_{\tau})} \quad (j = 1, \ldots, n_x, \ \tau = 1, \ldots, n_t),
\end{gather*}
where \(n_x = \frac{2\Xi}{h_x}\) and \(n_t = \frac{T}{h_t}\) are the same as the number of basis and trial functions, corresponding to the ground-truth solutions.

\section{Additional Experimental Results}
\label{app:additional_exp}
For each benchmark (discrete linear/nonlinear Bernoulli and Heat equations), we fix the number of basis functions \(d\) and vary the size of the trial‐function set \(\mathcal{T}\).  Reducing \(|\mathcal{T}|\) relaxes the algebraic constraints on the learned solution, which in turn increases the dimension of the associated affine variety \(d_{\mathcal{V}}\). As reported in Tables~\cref{fig:bernoulli-pilr-sub,fig:heat-pilr-sub}, when \(|\mathcal{T}|\) decreases (and thus \(d_{\mathcal{V}}\) increases), the Test MSE steadily increases.  This consistent rising trend of Test MSE with larger \(d_{\mathcal{V}}\) demonstrates that models endowed with fewer trial functions (i.e.\ weaker constraints) generalize more poorly.
\begin{figure*}[!htb]
  \centering
  \captionsetup[subfigure]{justification=centering}
  \begin{subfigure}[t]{\textwidth}
    \centering
    \caption{Linear Bernoulli eq.}
    \begin{tabular}{cc|ccc}
      \toprule
      Settings & $h$ & \multicolumn{3}{c}{$1/100$} \\
      \midrule
      \multirow{2}{*}{Dimensions}  & $d$                & \multicolumn{3}{c}{100}       \\
                                   & $d_{\mathcal{V}}$  & 10 & 20 & 40 \\
      \midrule
      Test MSE (PILR)             &                    & $0.012 \pm 0.0023$ & $0.13 \pm 0.082$ & $0.33 \pm 0.22$ \\
      \bottomrule
    \end{tabular}
  \end{subfigure}

  \vspace{1em}

  \begin{subfigure}[t]{\textwidth}
    \centering
    \caption{Nonlinear Bernoulli eq.}
    \begin{tabular}{cc|ccc}
      \toprule
      Settings & $h$ & \multicolumn{3}{c}{$1/100$} \\
      \midrule
      \multirow{2}{*}{Dimensions}  & $d$                & \multicolumn{3}{c}{100}       \\
                                   & $d_{\mathcal{V}}$  & 10 & 20 & 40 \\
      \midrule
      Test MSE (PILR)             &                    & $0.17 \pm 0.11$ & $0.21 \pm 0.14$ & $0.33 \pm 0.23$ \\
      \bottomrule
    \end{tabular}
  \end{subfigure}

  \caption{Experimental results for PILR on the discrete Bernoulli equations.}
  \label{fig:bernoulli-pilr-sub}
\end{figure*}

\begin{figure*}[!htb]
  \centering
  \captionsetup[subfigure]{justification=centering}
  \begin{subfigure}[t]{\textwidth}
    \centering
    \caption{Linear Heat eq.}
    \begin{tabular}{cc|ccc}
      \toprule
      Settings & $(h_t,h_x)$ & \multicolumn{3}{c}{$(1/400,\,2/10)$} \\
      \midrule
      \multirow{2}{*}{Dimensions}  & $d$                & \multicolumn{3}{c}{4010}      \\
                                   & $d_{\mathcal{V}}$  & 110 & 210 & 410 \\
      \midrule
      Test MSE (PILR)             &                    & $1.6 \pm 0.35$ & $1.9 \pm 0.44$ & $2.0 \pm 0.49$ \\
      \bottomrule
    \end{tabular}
  \end{subfigure}

  \vspace{1em}

  \begin{subfigure}[t]{\textwidth}
    \centering
    \caption{Nonlinear Heat eq.}
    \begin{tabular}{cc|ccc}
      \toprule
      Settings & $(h_t,h_x)$ & \multicolumn{3}{c}{$(1/200,\, 2/10)$} \\
      \midrule
      \multirow{2}{*}{Dimensions}  & $d$                & \multicolumn{3}{c}{2010}      \\
                                   & $d_{\mathcal{V}}$  & 110 & 210 & 410 \\
      \midrule
      Test MSE (PILR)             &                    & $0.37 \pm 0.11$ & $0.43 \pm 0.14$ & $0.56 \pm 0.19$ \\
      \bottomrule
    \end{tabular}
  \end{subfigure}

  \caption{Experimental results for PILR on the discrete Heat equations.}
  \label{fig:heat-pilr-sub}
\end{figure*}
\end{document}